\documentclass{article}

\usepackage[a4paper, total={6in, 8in}]{geometry}

\usepackage{natbib}

\usepackage[utf8]{inputenc} 
\usepackage[T1]{fontenc}    
\usepackage{hyperref}       
\usepackage{url}            
\usepackage{booktabs}       
\usepackage{amsfonts}       
\usepackage{nicefrac}       
\usepackage{microtype}      
\usepackage{lipsum}
\usepackage{graphicx}       
\usepackage{amsmath}
\usepackage{amsthm}
\usepackage{algorithm,algorithmic,amsmath}
\usepackage[vlined,ruled,algo2e]{algorithm2e}

\usepackage{adjustbox}
\usepackage{tabularx}
\usepackage{array}
\usepackage{epstopdf} 
\usepackage{subcaption}
\usepackage{xcolor}

\usepackage{tabularx}

\theoremstyle{definition}

\theoremstyle{plain}
\newtheorem{theorem}{Theorem}[section]

\newtheorem{definition}[theorem]{Definition}

\theoremstyle{remark}

\usepackage{moresize}

\usepackage{mathtools}
\mathtoolsset{showonlyrefs=false}

\title{Time-Constrained Robust MDPs}

\author{
  Adil Zouitine \footnote{Equal contribution} $^{ ,1,2}$, David Bertoin $^{*,1,2,3}$, Pierre Clavier $^{*,4,5}$ \vspace{0.3cm}\\
  $^{1}$ISAE-SUPAERO, $^{2}$IRT Saint-Exupery, $^{3}$INSA Toulouse \\
  $^{4}$Ecole polytechnique, CMAP ; $^{5}$INRIA Paris, HeKA. \\
\\
  \texttt{\{adil.zouitine, david.bertoin\}@irt.saintexupery.com}, \\
  \texttt{pierre.clavier@polytechnique.edu}. \vspace{0.3cm}\\ \vspace{0.3cm}
  Matthieu Geist$^{6}$, Emmanuel Rachelson$^{1}$ \\
  $^{6}$Cohere \\
  \footnotetext{Equal contribution}
}

\begin{document}
\maketitle

\newcommand{\pierre}[1]{{\color{green}[Pierre: #1]}}
\newcommand{\manu}[1]{{\color{blue}[Manu: #1]}}

\newcommand{\oracle}{ {\color{orange}
 \texttt{Oracle-TC} }}

 \newcommand{\stack}{ {\color{blue}
 \texttt{Stacked-TC} }}

\newcommand{\TC}{{\color{magenta}
 \texttt{TC} }}




\begin{abstract}

Robust reinforcement learning is essential for deploying reinforcement learning algorithms in real-world scenarios where environmental uncertainty predominates.
Traditional robust reinforcement learning often depends on rectangularity assumptions, where adverse probability measures of outcome states are assumed to be independent across different states and actions. 
This assumption, rarely fulfilled in practice, leads to overly conservative policies. 
To address this problem, we introduce a new time-constrained robust MDP (TC-RMDP) formulation that considers multifactorial, correlated, and time-dependent disturbances, thus more accurately reflecting real-world dynamics. This formulation goes beyond the conventional rectangularity paradigm, offering new perspectives and expanding the analytical framework for robust RL.
We propose three distinct algorithms, each using varying levels of environmental information, and evaluate them extensively on continuous control benchmarks. 
Our results demonstrate that these algorithms yield an efficient tradeoff between performance and robustness, outperforming traditional deep robust RL methods in time-constrained environments while preserving robustness in classical benchmarks.
This study revisits the prevailing assumptions in robust RL and opens new avenues for developing more practical and realistic RL applications.
\looseness=-1
\end{abstract}

\section{Introduction}
\label{sec:intro}

Robust MDPs capture the problem of finding a control policy for a dynamical system whose transition kernel is only known to belong to a defined uncertainty set. 
The most common framework for analyzing and deriving algorithms for robust MDPs is that of $sa$-rectangularity  \citep{iyengar2005robust,nilim2005robust}, where probability measures on outcome states are picked independently in different source states and actions (in formal notation, $\mathbb{P}(s'|s,a)$ and $\mathbb{P}(s'|\bar{s},\bar{a})$ are independent of each other). 
This provides an appreciable decoupling of worst transition kernel search across time steps and enables sound algorithms like robust value iteration (RVI). 
But policies obtained for such $sa$-rectangular MDPs are by nature very conservative \citep{goyal2018robust,li2023policy}, as they enable drastic changes in environment properties from one time step to the next, 
and the algorithms derived from RVI tend to yield very conservative policies even when applied to non-$sa$-rectangular robust MDP problems. 

\vspace{0.3cm}

In this paper, we depart from the rectangularity assumption and turn towards a family of robust MDPs whose transition kernels are parameterized by a vector $\psi$. 
This parameter vector couples together the outcome probabilities in different $(s,a)$ pairs, hence breaking the independence assumption that is problematic, especially in large dimension \cite{goyal2018robust}.
This enables accounting for the notion of transition model consistency across states and actions: outcome probabilities are not picked independently anymore but are rather set across the state and action spaces by drawing a parameter vector. 
In turn, we examine algorithms for solving such parameter-based robust MDPs when the parameter is constrained to follow a bounded evolution throughout time steps. Our contributions are the following. \looseness=-1

\begin{enumerate}
    \item We introduce a formal definition for parametric robust MDPs and time-constrained robust MDPs, discuss their properties and derive a generic algorithmic framework (Sec. \ref{sec:problem_statement}).
    \item We propose three algorithmic variants for solving time-constrained MDPs, named vanilla \TC, \stack and \oracle (Sec.\ref{sec:method}), which use different levels of information in the state space, and come with theoretical guaranties (Sec.\ref{sec:theory}).
    \item These algorithms are extensively evaluated in MuJoCo \citep{todorov2012mujoco} benchmarks, demonstrating they lead to non-conservative and robust policies (Sec.\ref{sec:result}).
\end{enumerate}



\section{Problem statement}
\label{sec:problem_statement}

\textbf{(Robust) MDPs.}
A Markov Decision Process (MDP) \citep{puterman2014markov} is a model of a discrete-time, sequential decision making task. 
At each time step, from a state $s_t\in S$ of the MDP, an action $a_t \in A$ is taken and the state changes to $s_{t+1}$ according to a stationary Markov transition kernel $p(s_{t+1}|s_t,a_t)$,  while concurrently receiving a reward $r(s_t,a_t)$. 
$S$ and $A$ are measurable sets and we write $\Delta_S$ and $\Delta_A$ the set of corresponding probability distributions.
A stationary policy $\pi(\cdot \vert s)$ is a mapping from states to distributions over actions, prescribing which action should be taken in $s$. 
The value function $v^\pi_p$ of policy $\pi$ maps state $s$ to the expected discounted sum of rewards $\mathbb{E}_{p,\pi}[\sum_t \gamma^t r_t]$ when applying $\pi$ from $s$ for an infinite number of steps. 
An optimal policy for an MDP is one whose value function is maximal in any state.
In a Robust MDP (RMDP) \citep{iyengar2005robust,nilim2005robust}, the transition kernel $p$ is not set exactly and can be picked in an adversarial manner at each time step, from an uncertainty set $\mathcal{P}$. 
Then, the pessimistic value function of a policy is $v^\pi_\mathcal{P}(s) = \min_{p \in \mathcal{P}} v^\pi_p(s)$. 
An optimal robust policy is one that has the largest possible pessimistic value function $v^*_{\mathcal{P}}$ in any state, hence yielding an adversarial $\max_\pi \min_p$ optimization problem. 
Robust Value Iteration (RVI) \citep{iyengar2005robust,wiesemann2013robust} solves this problem by iteratively computing the one-step lookahead best pessimistic value: 
\begin{align*}
    v_{n+1}(s)=T^*_{\mathcal{P}}v_n (s):=\max _{\pi(s)\in \Delta_A} \min _{p \in \mathcal{P} } \mathbb{E}_{a\sim\pi(s)} [r(s,a) +\mathbb{E}_{p} [v_n(s')]].
\end{align*}
The $T^*_\mathcal{P}$ operator is called the robust Bellman operator and the sequence of $v_n$ functions converges to the robust value function $v^*_{\mathcal{P}}$ as long as the adversarial transition kernel belongs to the simplex of $\Delta_S$.

\vspace{0.3cm}

\textbf{Zero-sum Markov Games.}
Robust MDPs can be cast as zero-sum two-players Markov games \citep{littman1994markov,tessler2019action} where $B$ is the action set of the adversarial player.
Writing $\bar{\pi}:S\times A \rightarrow \Delta_B$ the policy of this adversary, the robust MDP problem turns to $\max_\pi \min_{\bar{\pi}} v^{\pi,\bar{\pi}}$, where $v^{\pi,\bar{\pi}}(s)$ is the expected sum of discounted rewards obtained when playing $\pi$ (agent actions) against $\bar{\pi}$ (transition models) at each time step from $s$.
This enables introducing the robust value iteration sequence of functions  \looseness=-1
\begin{align*}
    v_{n+1}(s) := T^{**}v_n(s) := \max _{\pi(s)\in\Delta_A} \min _{\bar{\pi}(s,a)\in \Delta_S}  (T^{\pi, \bar{\pi} } v_n)(s)   
\end{align*}
where $T^{\pi, \bar{\pi}}:=\mathbb{E}_{a\sim \pi(s)} [ r(s,a) + \gamma \mathbb{E}_{s'\sim \bar{\pi}(s,a)} v_n(s') ]$ is a zero-sum Markov game operator. These operators are also $\gamma-$contractions and converge to their respective fixed point $v^{\pi,\bar{\pi}}$ and $v^{**}=v^*_{\mathcal{P}}$ \cite{tessler2019action}. This formulation will be useful to derive a practical algorithm in Section \ref{sec:method}.

\vspace{0.3cm}
Often, this convergence is analyzed under the assumption of $sa$-rectangularity, stating that the uncertainty set $\mathcal{P}$ is a set product of independent subsets of $\Delta_S$ in each $s,a$ pair.
Quoting \cite{iyengar2005robust}, rectangularity is a sort of independence assumption and is a minimal requirement for most theoretical results to hold.
Within robust value iteration, rectangularity enables picking $\bar{\pi}(s_t,a_t)$ completely independently of $\bar{\pi}(s_{t-1},a_{t-1})$. To set ideas, let us consider the robust MDP of a pendulum, described by its mass and rod length. 
Varying this mass and rod length spans the uncertainty set of transition models. 
The rectangularity assumption induces that $\bar{\pi}(s_t,a_t)$ can pick a measure in $\Delta_S$ corresponding to a mass and a length that are completely independent from the ones picked in the previous time step. While this might be a good representation in some cases, in general it yields policies that are very conservative as they optimize for adversarial configurations which might not occur in practice. 

\vspace{0.3cm}

We first step away from the rectangularity assumption and define a parametric robust MDP as an RMDP whose transition kernels are spanned by varying a parameter vector $\psi$ (typically the mass and rod length in the previous example). 
Choosing such a vector couples together the probability measures on successor states from two distinct $(s,a)$ and $(s',a')$ pairs. The main current robust deep RL algorithms 
actually optimize policies for such parametric robust MDPs 
but still allow the parameter value at each time step to be picked independently of the previous time step. 

\vspace{0.3cm}

\textbf{Parametric MDPs.} A parametric RMDP is given by the tuple $(S,A,\Psi,p_\psi,r)$ where the transition kernel $p_\psi(s,a) \in \Delta_S$ is parameterized by $\psi$, and $\Psi$ is the set of values $\psi$ can take, equipped with an appropriate metric.  This yields the robust value iteration update :
\begin{align*}
    &v_{n+1}(s)= \max_{\pi(s)\in\Delta_A} \min_{\psi \in \Psi} (T^{\pi}_{\psi} v_n)(s)\\
    &:= \max_{\pi(s)\in\Delta_A} \min_{\psi \in \Psi} \mathbb{E}_{a\sim \pi(s)} [ r(s,a) + \gamma \mathbb{E}_{s'\sim p_\psi(s,a)} v_n(s') ]  .
\end{align*}
A parametric RMDP remains a Markov game and the Bellman operator remains a contraction mapping as long as $p_\psi$ can reach only elements in the simplex of $\Delta_S$, where the adversary's action set is the set of parameters instead of a (possibly $sa$-rectangular) set of transition kernels. 

\vspace{0.3cm}

\textbf{Time-constrained RMDPs (TC-RMDPs).}
We introduce TC-RMDPs as the family of parametric RMDPs whose parameter's evolution is constrained to be Lipschitz with respect to time. More formally a TC-RMDP is given by the tuple $(S,A,\Psi,p_\psi,r,L)$,  
where $\|\psi_{t+1} - \psi_t\|\leq L$, that is the parameter change is bounded through time. 
\vspace{0.3cm}
In the previous pendulum example, this might represent the wear of the rod which might lose mass or stretch length. 
Similarly, and for a larger scale illustration, TC-RMDPs enable representing the possible evolutions of traffic conditions in a path planning problem through a busy town.  Starting from an initial parameter value $\psi_{-1}$, the pessimistic value function of a policy $\pi$ is non-stationary, as $\psi_0$ is constrained to lay at most $L$-far away from $\psi_{-1}$, $\psi_1$ from $\psi_0$, and so on. 
\vspace{0.3cm}

Generally, this yields non-stationary value functions as the uncertainty set at each time step depends on the previous uncertainty parameter. 
To regain stationarity without changing the TC-RMDP definition, we first change the definition of the adversary's action set.
The adversary picks its actions in the constant set $B=\mathcal{B}(0_\Psi,L)$, which is the ball of radius $L$ centered in the null element in $\Psi$.
In turn, the state of the Markov game becomes the pair $s,\psi$ and the Markov game itself is given by the tuple $((S\times\Psi),A,B,p_\psi,r)$, where the Lipschitz constant $L$ is included in $B$.
Thus, given an action $b_t \in B$ and a previous parameter value $\psi_{t-1}$, the parameter value at time $t$ is $\psi_t=\psi_{t-1}+b_t$.
Then, we define the pessimistic value function of a policy as a function of both the state $s$ and parameter $\psi$:
\begin{align*}
    &v_{B}^\pi(s,\psi):= \min_{\substack{(b_t)_{t\in\mathbb{N}},\\b_t\in B}} \mathbb{E} \Big[ \sum \gamma^t r_t | \psi_{-1} = \psi, s_0=s, b_t \in B, \psi_t = \psi_{t-1}+b_t,a\sim \pi,s_t\sim p_{\psi_t}\Big]\\
    & v_B^*(s,\psi)=\max_{\pi(s,\psi)\in\Delta_A}v^\pi_B(s,\psi).
\end{align*}
In turn, an optimal robust policy is a function of $s$ and $\psi$ and the TC robust Bellman operators are: \looseness=-1
\begin{align*}
    v_{n+1}(s,\psi) &:= T_B^*v_n(s,\psi) 
    :=\max_{\pi(s,\psi)\in\Delta_A}  T_B^\pi v_n(s,\psi),\\
    &:=\max_{\pi(s,\psi)\in\Delta_A} \min_{b\in B} \mathbb{E}_{a\sim \pi(s)} [ r(s,a) + \gamma \mathbb{E}_{s'\sim p_{\psi+b}(s,a)} v_n(s',\psi+b) ]. 
\end{align*}
This iteration scheme converges to a fixed point according to Th. \ref{eq:cv_belleman_augmented}.

\begin{theorem}
\label{eq:cv_belleman_augmented}
    The time-constrained (TC) Bellman operators $T^\pi_B$ and $T^*_B$ are contraction mappings. Thus the sequences $v_{n+1}=T^\pi_B v_n$ and $v_{n+1} = T^*_B v_n$, converge to their respective fixed points $v^\pi_B$ and $v^*_B$.
\end{theorem}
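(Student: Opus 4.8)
The plan is to recognize the TC-RMDP, after augmenting the state with the current parameter value, as an ordinary zero-sum Markov game on the state space $S\times\Psi$ with adversary action set the fixed ball $B=\mathcal{B}(0_\Psi,L)$, and then to run the standard Banach fixed-point argument in sup-norm. Concretely, I would let $\mathcal{V}$ be the space of bounded measurable functions $v:S\times\Psi\to\mathbb{R}$ equipped with $\|v\|_\infty=\sup_{s,\psi}|v(s,\psi)|$; since $r$ is bounded, this is a complete normed space and it contains every value function that can arise (each is bounded by $\|r\|_\infty/(1-\gamma)$). The first routine checks are that $T_B^\pi$ and $T_B^*$ map $\mathcal{V}$ into itself, which holds because the reward term is bounded and the discounted expectation of a bounded $v$ is bounded.

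Next I would establish the $\gamma$-contraction. For $v,w\in\mathcal{V}$ and any fixed $s,\psi$, the reward term $\mathbb{E}_{a\sim\pi(s)}[r(s,a)]$ cancels, leaving $\gamma$ times a difference of expectations $\mathbb{E}_{a\sim\pi(s)}\mathbb{E}_{s'\sim p_{\psi+b}(s,a)}[\,\cdot\,]$ applied to $v$ versus $w$ at the shifted parameter $\psi+b$. Using that conditional expectations are non-expansive in sup-norm, each such expectation of $(v-w)$ is bounded in absolute value by $\|v-w\|_\infty$. The remaining operations are $\min_{b\in B}$ for $T_B^\pi$ and $\max_{\pi(s,\psi)\in\Delta_A}\min_{b\in B}$ for $T_B^*$; invoking the elementary facts $|\inf_x f-\inf_x g|\le\sup_x|f-g|$ and $|\sup_x f-\sup_x g|\le\sup_x|f-g|$ successively, both operators satisfy $\|T_B^\bullet v-T_B^\bullet w\|_\infty\le\gamma\|v-w\|_\infty$. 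Banach's theorem then yields unique fixed points and geometric convergence of the iterates $v_{n+1}=T_B^\bullet v_n$ from any starting point in $\mathcal{V}$.

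The last, and least routine, step is to identify these fixed points with the objects named in the statement, i.e.\ to show that the unique fixed point of $T_B^\pi$ is the pessimistic value $v_B^\pi$ of the definition (an infimum over adversary sequences $(b_t)_{t\in\mathbb{N}}$ of an infinite-horizon discounted return) and that the fixed point of $T_B^*$ is $v_B^*=\max_\pi v_B^\pi$. For $v_B^\pi$ this is the evaluation direction of dynamic programming: unrolling $T_B^\pi$ for $k$ steps expresses $(T_B^\pi)^k v_0(s,\psi)$ as a $k$-step infimum over $(b_0,\dots,b_{k-1})$ plus a $\gamma^k$-weighted tail, and letting $k\to\infty$ (the tail vanishing geometrically) shows the limit equals $v_B^\pi$; interchanging the finite-horizon infima with the limit is where care is needed and is, I expect, the main obstacle — it is clean if $B$ is compact and $\psi\mapsto p_\psi$ and $v_0$ are continuous (so the per-step $\min_{b\in B}$ is attained and measurable selections exist), and otherwise one argues with $\varepsilon$-optimal adversary sequences. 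For $T_B^*$ one shows its fixed point dominates every $v_B^\pi$ (using $T_B^*\ge T_B^\pi$ with monotonicity and the contraction), hence dominates $v_B^*$, and conversely that the policy greedy for the fixed point attains it, giving equality; this again relies on a measurable-selection/compactness argument for the outer $\max$ over $\Delta_A$. With these identifications, the two sequences converge to $v_B^\pi$ and $v_B^*$ as claimed.
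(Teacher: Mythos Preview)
Your proposal is correct and follows essentially the same approach as the paper: establish the $\gamma$-contraction of $T_B^\pi$ and $T_B^*$ in sup-norm on the augmented state space $S\times\Psi$, invoke Banach's fixed-point theorem, and then identify the fixed points with $v_B^\pi$ and $v_B^*$. The paper's version differs only cosmetically---it proves the contraction via an $\varepsilon$-optimal selection argument rather than the direct $|\inf f-\inf g|\le\sup|f-g|$ inequality you use, and it defers the fixed-point identification to \citep[Th.~5]{iyengar2005robust} rather than sketching the unrolling argument---so if anything your treatment of the identification step and the attendant measurable-selection issues is more careful than the paper's.
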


Proof of Th. \ref{eq:cv_belleman_augmented} can be found in Appendix\ref{proof:eq:cv_belleman_augmented}. We refer to this formulation as algorithm \oracle  (see Section \ref{sec:method} for implementation details) since an oracle makes the current parameter $\psi$ visible to the agent. Therefore, it is possible to derive optimal policies for TC-RMDPs by iterated application of this TC Bellman operator. These policies have the form $\pi(s,\psi)$. In the remainder of this paper, we extend state-of-the-art robust deep RL algorithms to the TC-RMDP framework. 
In particular, we compare their performance and robustness properties with respect to classical robust MDP formulations, we also discuss their relation with the $\pi(s)$ robust policies of classical robust MDPs.

\vspace{0.3cm}

If the agent is unable to observe the state variable $\psi$, it is not possible to guarantee the existence of a stationary optimal policy of the form $\pi(s)$.
Similarly, there is no guarantee of convergence of value functions to a fixed point.
Nonetheless, this scenario, in which access to the $\psi$ parameter is not available, is more realistic in practice. 
It turns the two-player Markov game into a partially observable Markov game, where one can still apply the TC Bellman operator but without these guarantees of convergence. 
We call vanilla \TC the repeated application of the TC Bellman operator in this partially observable case.
Vanilla \TC  will be tested in practice, and some theoretical properties of the objective function will be derived using the Lipschitz properties (Sec \ref{sec:theory}).
\vspace{0.3cm}
\section{Related works}
Since our method is a non-rectangular, Deep Robust RL algorithm, (possibly non-stationary for \stack and \TC
), we discuss the following related work.

\textbf{Non-stationary MDPs.} First, non-stationarity has been studied in the Bandits setting in \cite{garivier2008upper}. Then, for episodic, non-stationary MDPs  \cite{even2004experts,abbasi2013online,lecarpentier2019non} have explored and provided regret bounds for algorithms that use oracle access to the current reward and transition functions. More recently \cite{gajane2018sliding,cheung2019reinforcement} have facilitated oracle access by performing a count-based estimation of the reward and transition functions based on the recent history of interactions. Finally, for tabular MDPs, past data from a non-stationary MDP can be used to construct a 
full Bayesian model \cite{jong2005bayesian} or
a maximum likelihood
model \cite{ornik2019learning} of the transition dynamics. We focus on the setting not restricted to tabular representations

\vspace{0.3cm}
\textbf{Non-rectangular RMDPs.}
While rectangularity in practice is very conservative, it can be demonstrated that, in an asymptotic sense, non-rectangular ellipsoidal uncertainty sets around the maximum likelihood estimator of the transition kernel constitute the smallest possible confidence sets for the ground truth transition kernel, as implied by classical Cramér-Rao bounds. This is in accordance with the findings presented in § 5 and Appendix A of \cite{wiesemann2013robust}. More recently, \cite{goyal2018robust} extends the rectangular assumptions using a factored uncertainty model, where all transition probabilities depend on a small number of underlying factors denoted $\boldsymbol{w}_1, \ldots, \boldsymbol{w}_r \in \mathbb{R}^{\mathbb{S}}$, such that each transition probability $P_{s a}$ for every $(s,a)$ is a linear (convex) combination of these $r$ factors. Finally, \cite{li2023policy} use policy gradient algorithms for non-rectangular robust MDPs. While this work presents nice theoretical guarantees of convergence, there is no practical Deep RL algorithms for learning optimal robust policies. \looseness=-1

\vspace{0.3cm}

\textbf{Deep Robust RL Methods.}
Many Deep Robust algorithms exist such as M2TD3 \cite{m2td3}, M3DDPG \cite{li2019robust}, or RARL \cite{pinto2017robust}, which are all based on the two player zero-sum game presented in \ref{sec:problem_statement}. We will compare our method against these algorithms, except \cite{li2019robust} which is outperformed by \cite{m2td3} in general. We also compare our algorithm to Domain randomization (DR) \cite{tobin2017domain} that learns a value function $V(s)=$ $\max _\pi \mathbb{E}_{p \sim \mathcal{U}(\mathcal{P})} V_p^\pi(s)$ which maximizes the expected return on average across a fixed (generally uniform) distribution on $\mathcal{P}$. As such, DR approaches do not optimize the worst-case performance but still have good performance on average. Nonetheless, DR has been used convincingly in applications \cite{mehta2020active,akkaya2019solving}. Finally, the zero-sum game formulation has lead to the introduction of action robustness \cite{tessler2019action} which is a specific case of rectangular MDPs, in scenarios where the adversary shares the same action space as the agent and interferes with the agent's actions. Several strategies based on this idea have been proposed. One approach, the Game-theoretic Response Approach for Adversarial Defense (GRAD) \citep{liang2023gametheoretic} builds on the Probabilistic Action Robust MDP (PR-MDP) \citep{tessler2019action}. This method introduces time-constrained perturbations in both the action and state spaces and employs a game-theoretic approach with a population of adversaries. 
In contrast to GRAD, where temporal disturbances affect the transition kernel around a nominal kernel, our method is part of a broader setting in which the transition kernel is included in a larger uncertainty set. 
Robustness via Adversary Populations (RAP) \citep{vinitsky2020robust} introduces a population of adversaries. This approach ensures that the agent develops robustness against a wide range of potential perturbations, rather than just a single one, which helps prevent convergence to suboptimal stationary points.
Similarly, State Adversarial MDPs \cite{zhang2020robust,zhang2021robust,stanton2021robust,liang2023gametheoretic} address adversarial attacks on state observations, effectively creating a partially observable MDP.
Finally, using rectangularity assumptions, \citep{abdullah2019wasserstein,clavier2022robust}
use Wasserstein and $\chi^2$ balls respectively for the uncertainty set in Robust RL.


\looseness=-1

\section{Time-constrained robust MDP  algorithms}
\label{sec:method}
The TC-RMDP framework addresses the limitations of traditional robust reinforcement learning by considering multifactorial, correlated, and time-dependent disturbances. Traditional robust reinforcement learning often relies on rectangularity assumptions, which are rarely met in real-world scenarios, leading to overly conservative policies. The TC-RMDP framework provides a more accurate reflection of real-world dynamics, moving beyond the conventional rectangularity paradigm.

\vspace{0.3cm}

We cast the TC-RMDP problem as a two-player zero-sum game, where the agent interacts with the environment, and the adversary (nature) changes the MDP parameters $\psi$. 

\vspace{0.3cm}

Our approach is generic and can be derived within any robust value iteration scheme, performing $\max_{\pi(s)\in\Delta_A} \min_{\psi \in \Psi} \mathbb{E}_{a\sim \pi(s)} [ r(s,a) + \gamma \mathbb{E}_{s'\sim p_\psi(s,a)} v_n(s') ]$ updates, by modifying the adversary's action space and potentially the agent's state space to obtain updates of the form $\max_{\pi(s,\psi)\in\Delta_A} \min_{b\in B} \mathbb{E}_{a\sim \pi(s)} [ r(s,a) + \gamma \mathbb{E}_{s'\sim p_{\psi+b}(s,a)} v_n(s') ]$.
In Section~\ref{sec:result}, we will introduce time constraints within two specific robust value iteration algorithms, namely RARL \cite{pinto2017robust} and M2TD3 \cite{m2td3} by simply limiting the search space for worst-case $\psi$ at each step. This specific implementation extends the original actor-critic algorithms. For the sake of conciseness, we refer the reader to Appendix~\ref{app:deeptcmdpalgo} for details regarding the loss functions and algorithmic details.

\vspace{0.3cm}

Three variations of the algorithm are provided (illustrated in Figure~\ref{fig:archi}) but all fall within the training loop of Algorithm~\ref{alg:tcmdpalgo}. 

\begin{figure}
    \centering
    \includegraphics[scale=0.13]{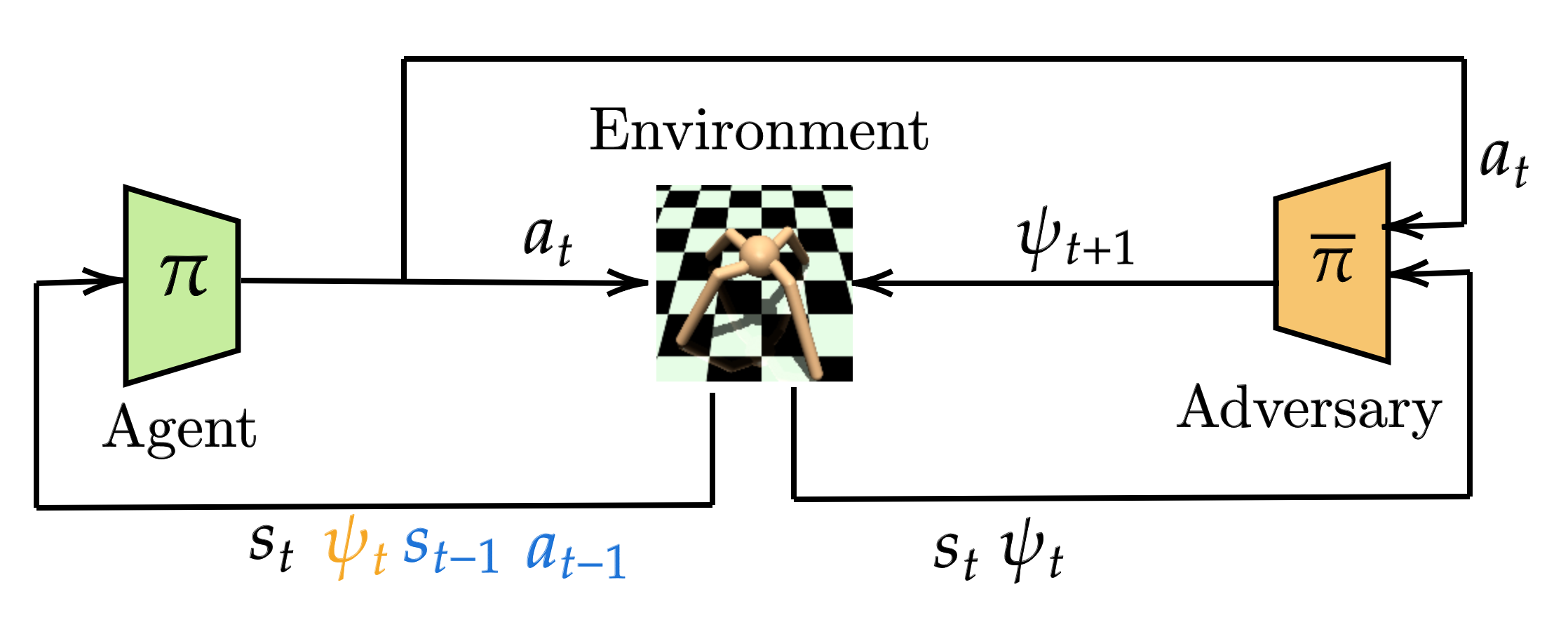}

    \caption{TC-RMDP training involves a temporally-constrained adversary aiming to maximize the effect of temporally-coupled perturbations. Conversely, the agent aims to optimize its performance against this time-constrained adversary. In \textcolor{orange}{orange}, the oracle observation, and in \textcolor{blue}{blue} the stacked observation.}
    \label{fig:archi}
\end{figure}

\begin{algorithm}[H]
\DontPrintSemicolon
\KwIn{Time-constrained MDP: $(S,A,\Psi,p_\psi,r,L)$, Agent $\pi$, Adversary $\bar{\pi}$}

\For{each interaction time step $t$}{
    {\color{orange} $a_t \sim \pi_{t}(s_t, \psi_{t})$ \tcp*{Sample an action with \oracle}}
    or {\color{blue} $a_t \sim \pi_{t}(s_t, a_{t-1}, s_{t-1})$ \tcp*{Sample an action with \stack}}
    or {\color{magenta} $a_t \sim \pi_{t}(s_t)$   \tcp*{Sample an action with \TC}}
    {$\psi_{t+1} \sim \bar{\pi}_{t}(s_t, a_t, \psi_t)$ \tcp*{Sample the worst TC parameter}}
    { $s_{t+1} \sim p_{\psi_{t+1}}(s_t, a_t)$ \tcp*{Sample a transition}}
    {$\mathcal{B} \leftarrow \mathcal{B} \cup\{\left(s_t, a_t, r\left(s_t,a_t\right), \psi_{t}, \psi_{t+1},s_{t+1}\right)\}$ \tcp*{Add transition to replay buffer}}
    {$\left\{s_i, a_i, r(s_i,a_i), \psi_{i}, \psi_{i+1}, s_{i+1}\right\}_{i \in [1, N]} \sim \mathcal{B}$ \tcp*{Sample a mini-batch of transitions}}
    { $\pi_{t+1} \leftarrow UpdatePolicy(\pi_{t})$ \tcp*{Update Agent}}
    
    {$\bar{\pi}_{t+1} \leftarrow UpdatePolicy(\bar{\pi_{t})}$ \tcp*{Update Adversary}}
}
\caption{Time-constrained robust training}
\label{alg:tcmdpalgo}
\end{algorithm}

{\oracle.}
As discussed in Section \ref{sec:problem_statement}, the \oracle version includes the MDP state and parameter value as input, \textcolor{orange}{$\pi : \mathcal{S}\times \Psi \rightarrow \mathcal{A}$}. 
This method assumes that the agent has access to the true parameters of the environment, allowing it to make the most informed decisions and possibly reach the true robust value function.
However, these parameters $\psi$ are sometimes non-observable in practical scenarios, making this method not always feasible.

\vspace{0.3cm}

{\stack.} 
Since $\psi$ might not be observable but may be approximately identified by the last transitions, the \stack policy uses the previous state and action as additional inputs in an attempt to replace $\psi$, \textcolor{blue}{$\pi: \mathcal{S}\times \mathcal{A} \times \mathcal{S} \rightarrow \mathcal{A}$}. 
This approach leverages the information in the transitions, even though it might be insufficient for a perfect estimate of $\psi$. 
It aims to retain (approximately) the convergence properties of the \oracle algorithm. 

\vspace{0.3cm}

\textcolor{magenta}{\texttt{Vanilla}} \TC.
Finally, the vanilla \TC version takes only the state, \textcolor{magenta}{$\pi: \mathcal{S} \rightarrow \mathcal{A}$}, as input, similar to standard robust MDP policies. 
This method does not attempt to infer the environmental parameters or the transition dynamics explicitly.
Instead, it relies on the current state information to guide the agent's actions. 
While this version is the most straightforward and computationally efficient, it may not perform as robustly as the \oracle or \stack versions in environments with significant temporal disturbances, since it attempts to solve a partially observable Markov game, for which there may not exist a stationary optimal policy based only on the observation. 
Despite this, it remains a viable option in scenarios where computational simplicity and quick decision-making are prioritized.


\section{Results}
\label{sec:result}

\textbf{Experimental settings}.
This section evaluates the robust time-constrained algorithm's performance under severe time constraints and in the static settings. 
Experimental validation was conducted in continuous control scenarios using the MuJoCo simulation environments \citep{todorov2012mujoco}.
The approach was categorized into three variants. The \oracle, where the agent accessed environmental parameters $\pi(s_t, \psi)$; the \stack, where the agent took in input $\pi(s_t, s_{t-1}, a_{t-1})$; and the vanilla \TC, which did not receive any additional inputs $\pi(s)$.
For each variant of the time-constrained algorithms, we applied them to RARL \citep{pinto2017robust}, and M2TD3 \cite{m2td3}, renaming them TC-RARL and TC-M2TD3, respectively.
The algorithms were tested against two state-of-the-art robust reinforcement learning algorithms, M2TD3 and RARL. Additionally, the Oracle versions of M2TD3 and RARL, where the agent's policy included $\psi$ in the input $\pi: \mathcal{S} \times \Psi \rightarrow \mathcal{A}$, were evaluated for a more comprehensive assessment.
Comparisons were also made with Domain Randomization (DR) \citep{tobin2017domain} and vanilla TD3. \citep{fujimoto2018addressing} to ensure a thorough analysis.
A $3$D uncertainty set is defined in each environment $\mathcal{P}$ normalized between $[0,1]^3$. Appendix~\ref{app:uncertainty-sets} provides detailed descriptions of uncertainty parameters.
Performance metrics were gathered after five million steps to ensure a fair comparison.
All baselines were constructed using TD3, and a consistent architecture was maintained across all TD3 variants.
The results presented below were obtained by averaging over ten distinct random seeds.
Appendices \ref{app:td3_hyperparameters}, \ref{app:hyperparameters_m2td3}, \ref{app:neural_network_architecture}, and \ref{app:agents_training_curves} discuss further details on hyperparameters, network architectures, and implementation choices, including training curves for our methods and baseline comparisons.
In the following tables \ref{tab:tc_adversary}, \ref{tab:Normalized_worst_case}, \ref{tab:Normalized_average_performance}, the best performances are shown in bold. Oracle methods, with access to optimal information, are shown in black. 
Items in bold and green represent the best performances with limited information on $\psi$, making them more easily usable in many scenarios. When there is only one element in bold and green, this implies that the best overall method is a non-oracle method.
\looseness=-1

\begin{table}[ht!]
\centering
\scalebox{0.75}{ 
\begin{tabular}{|l|l|l|l|l|l|l|}
\toprule
 & Ant & HalfCheetah & Hopper & Humanoid & Walker & Agg. \\ 
\midrule
Oracle M2TD3 & $1.11 \pm 0.07$ & $0.95 \pm 0.1$ & $1.51 \pm 0.84$ & $2.07 \pm 0.19$ & $1.31 \pm 0.36$ & $1.39 \pm 0.31$ \\
Oracle RARL & $0.72 \pm 0.18$ & $-0.71 \pm 0.05$ & $-1.3 \pm 0.28$ & $-2.8 \pm 1.62$ & $-0.19 \pm 0.2$ & $-0.86 \pm 0.47$ \\
\oracle-M2TD3 & $1.61 \pm 0.32$ & $\mathbf{2.76 \pm 0.16}$ & $\mathbf{7.79 \pm 1.0}$ & $1.69 \pm 2.14$ & $1.49 \pm 0.41$ & $\mathbf{3.07 \pm 0.81}$ \\
\oracle-RARL & $\mathbf{1.66 \pm 0.32}$ & $2.63 \pm 0.12$ & $6.86 \pm 1.46$ & $0.19 \pm 1.68$ & $1.34 \pm 0.11$ & $2.54 \pm 0.74$ \\
\midrule
\stack-M2TD3 & $1.33 \pm 0.21$ & \textbf{\textcolor[rgb]{0,0.5,0}{\boldmath$2.4 \pm 0.19$}} & \textbf{\textcolor[rgb]{0,0.5,0}{\boldmath$6.51 \pm 0.59$}} & $-1.42 \pm 1.44$ & \textbf{\textcolor[rgb]{0,0.5,0}{\boldmath$1.69 \pm 0.33$}} & $2.1 \pm 0.55$ \\
\stack-RARL & $1.48 \pm 0.22$ & $1.76 \pm 0.08$ & $3.28 \pm 0.27$ & $1.39 \pm 0.57$ & $1.01 \pm 0.21$ & $1.78 \pm 0.27$ \\
\TC-M2TD3 & $1.52 \pm 0.2$ & \textbf{\textcolor[rgb]{0,0.5,0}{\boldmath$2.42 \pm 0.1$}} & $5.16 \pm 0.2$ & \textbf{\textcolor[rgb]{0,0.5,0}{\boldmath$4.02 \pm 1.23$}} & $1.38 \pm 0.25$ & \textbf{\textcolor[rgb]{0,0.5,0}{\boldmath$2.9 \pm 0.4$}} \\
\TC-RARL & $1.57 \pm 0.26$ & $1.54 \pm 0.15$ & $2.04 \pm 0.49$ & $1.25 \pm 1.91$ & $0.89 \pm 0.2$ & $1.46 \pm 0.6$ \\
TD3 & $0.0 \pm 0.19$ & $0.0 \pm 0.27$ & $0.0 \pm 1.27$ & $0.0 \pm 1.18$ & $0.0 \pm 0.23$ & $0.0 \pm 0.63$ \\
DR & \textbf{\textcolor[rgb]{0,0.5,0}{\boldmath$1.58 \pm 0.2$}} & $1.59 \pm 0.12$ & $2.28 \pm 0.42$ & $0.87 \pm 1.79$ & $1.03 \pm 0.19$ & $1.47 \pm 0.54$ \\
M2TD3 & $1.0 \pm 0.19$ & $1.0 \pm 0.14$ & $1.0 \pm 0.96$ & $1.0 \pm 1.31$ & $1.0 \pm 0.31$ & $1.0 \pm 0.58$ \\
RARL & $0.63 \pm 0.2$ & $-0.61 \pm 0.18$ & $-1.5 \pm 0.33$ & $0.8 \pm 0.88$ & $0.27 \pm 0.25$ & $-0.08 \pm 0.37$ \\
\bottomrule
\end{tabular}
}
\caption{Avg. of normalized time-coupled worst-case performance over 10 seeds for each method}
\label{tab:tc_adversary}
\end{table}

\vspace{0.3cm}

\textbf{Performance of TCRMDPs in worst-case time-constrained}. 
Table~\ref{tab:tc_adversary} reports the worst-case time-constrained perturbation.
To address the worst-case time-constrained perturbations for each trained agent $\pi^*$, we utilized a time-constrained adversary using TD3 algorithm $\bar{\pi}^* = \min_{b\in B} \mathbb{E}_{a\sim \pi^*(s), b \sim \bar{\pi}(s, a, \psi)} [ r(s,a) + \gamma \mathbb{E}{s'\sim p_{\psi+b}(s,a)} v_n(s') ]$ within a perturbation radius of $L=0.001$ for a total of 5 million steps.
The sum of episode rewards was averaged over 10 episodes. To compare metrics across different environments, each method's score $v$ was standardized relative to the reference score of TD3.
TD3 was trained on the environment using default transition function parameters, with its score denoted as $v_{TD3}$. The M2TD3 score, $ v_{M2TD3} $, was used as the comparison target. The formula applied was $(v - v_{TD3}) / (|v_{M2TD3} - v_{TD3}|)$. This positioned $v_{TD3} $ as the minimal baseline and $ v_{M2TD3} $ as the target score.
This standardisation provides a metric that quantifies the improvement of each method over TD3 in relation to the improvement of M2TD3 over TD3.
In each evaluation environment, agents trained with the time-constrained framework (indicated by TC in the method name) demonstrated significantly superior performance compared to those trained using alternative robust reinforcement learning approaches, including M2TD3 and RARL.
Furthermore, they outperformed those trained through domain randomisation (DR).
Notably, even without directly conditioning the policy with $\psi$, the time-constrained trained policies excelled against all baselines, achieving up to a 2.9-fold improvement.
The non-normalized scores are reported in Appendix~\ref{app:tc_adversary_result_raw}.
Additionally, when policies were directly conditioned by $\psi$ and trained within the robust reinforcement learning framework, they tended to be overly conservative in the time-constrained framework.
This is depicted in Table~\ref{tab:tc_adversary}, comparing the performances of Oracle RARL, Oracle M2TD3, Oracle TC-RARL, and Oracle TC-M2TD3.
Both policies also observe $\psi$.
The only difference is that Oracle RARL and Oracle M2TD3 were trained in the robust reinforcement learning framework, while Oracle TC-RARL and Oracle TC-M2TD3 were trained in the time-constrained framework.
The performance differences under worst-case time-coupled perturbation are as follows: for Oracle RARL (resp. M2TD3) and Oracle TC-RARL (resp. M2TD3), the values are $-0.86$ ($1.39$) vs. $2.54$ ($3.07$).
This observation highlights the need for a balance between robust training and flexibility in dynamic conditions.
A natural question arises regarding the worst-case time-constrained perturbation. Was the adversary in the loop adequately trained, or might its suboptimal performance lead to overestimating the trained agent's reward against the worst-case perturbation? 
The adversary's performance was monitored during its training against all fixed-trained agents.
The results in Appendix~\ref{app:adversary_training curves} show that our adversary converged.

\vspace{0.3cm}

\textbf{Robust Time-Constrained Training under various time fixed adversaries}.
The method was evaluated against various fixed adversaries, focusing on the random fixed adversary shown in Figure \ref{fig:scheduler_random}.
This evaluation shows that robustly trained agents can handle dynamic and unpredictable conditions.
The random fixed adversary simulates stochastic changes by selecting a parameter $\psi_t$ at each timestep within a radius of $L=0.1$.
This radius is 100 times larger than in our training methods. 
At the start of each episode, $\psi_0$ is uniformly sampled from the uncertainty set $\psi_0 \sim \mathcal{U}(\mathcal{P})$. 
This tests the agents' adaptability to unexpected changes. 
Figures \ref{fig:scheduler_random_ant} through \ref{fig:scheduler_random_walker} show our agents' performance. Agents trained with our robust framework consistently outperformed those trained with standard methods.
The policy was also assessed against five other fixed adversaries: cosine, exponential, linear, and logarithmic. Detailed results are provided in the Appendix. \ref{app:fixed_adversary_evaluation}.

\begin{figure}[h!]
    \centering
    \begin{subfigure}[b]{0.32\textwidth}
        \centering
        \includegraphics[width=\textwidth]{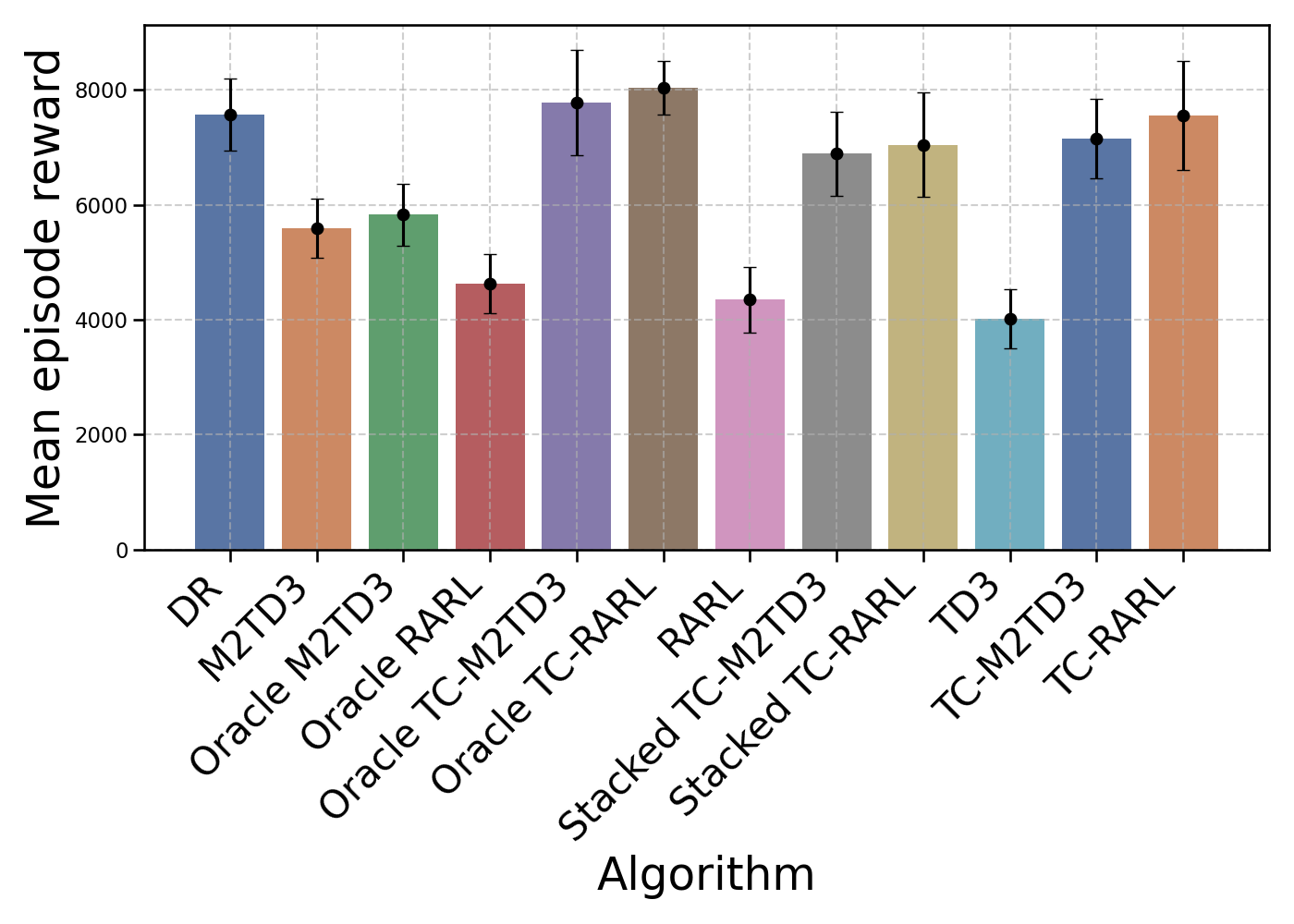}
        \caption{Ant}
        \label{fig:scheduler_random_ant}
    \end{subfigure}
    \hspace{2pt} 
    \begin{subfigure}[b]{0.32\textwidth}
        \centering
        \includegraphics[width=\textwidth]{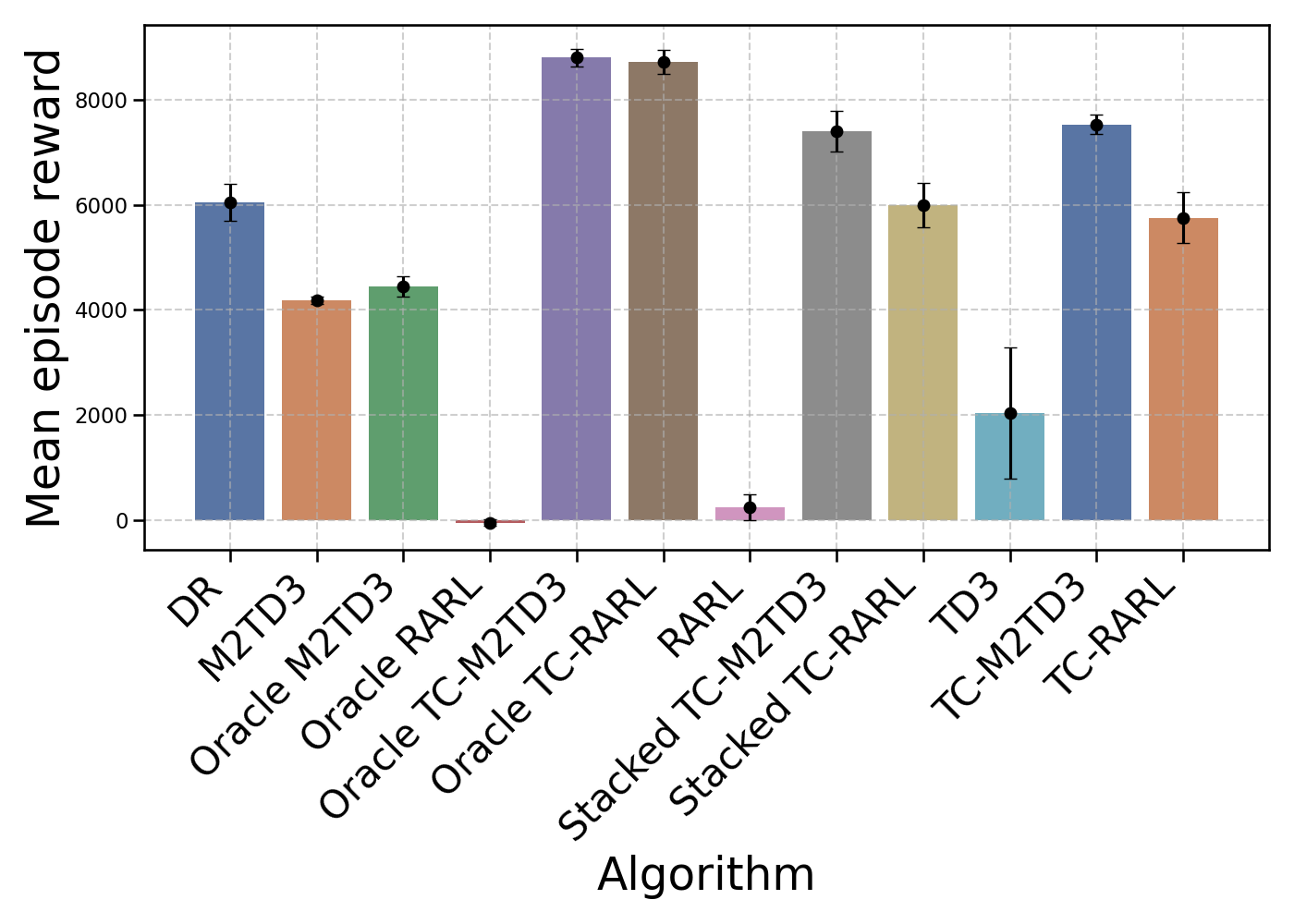}
        \caption{HalfCheetah}
        \label{fig:scheduler_random_halfcheetah}
    \end{subfigure}
    \hspace{2pt} 
    \begin{subfigure}[b]{0.32\textwidth}
        \centering
        \includegraphics[width=\textwidth]{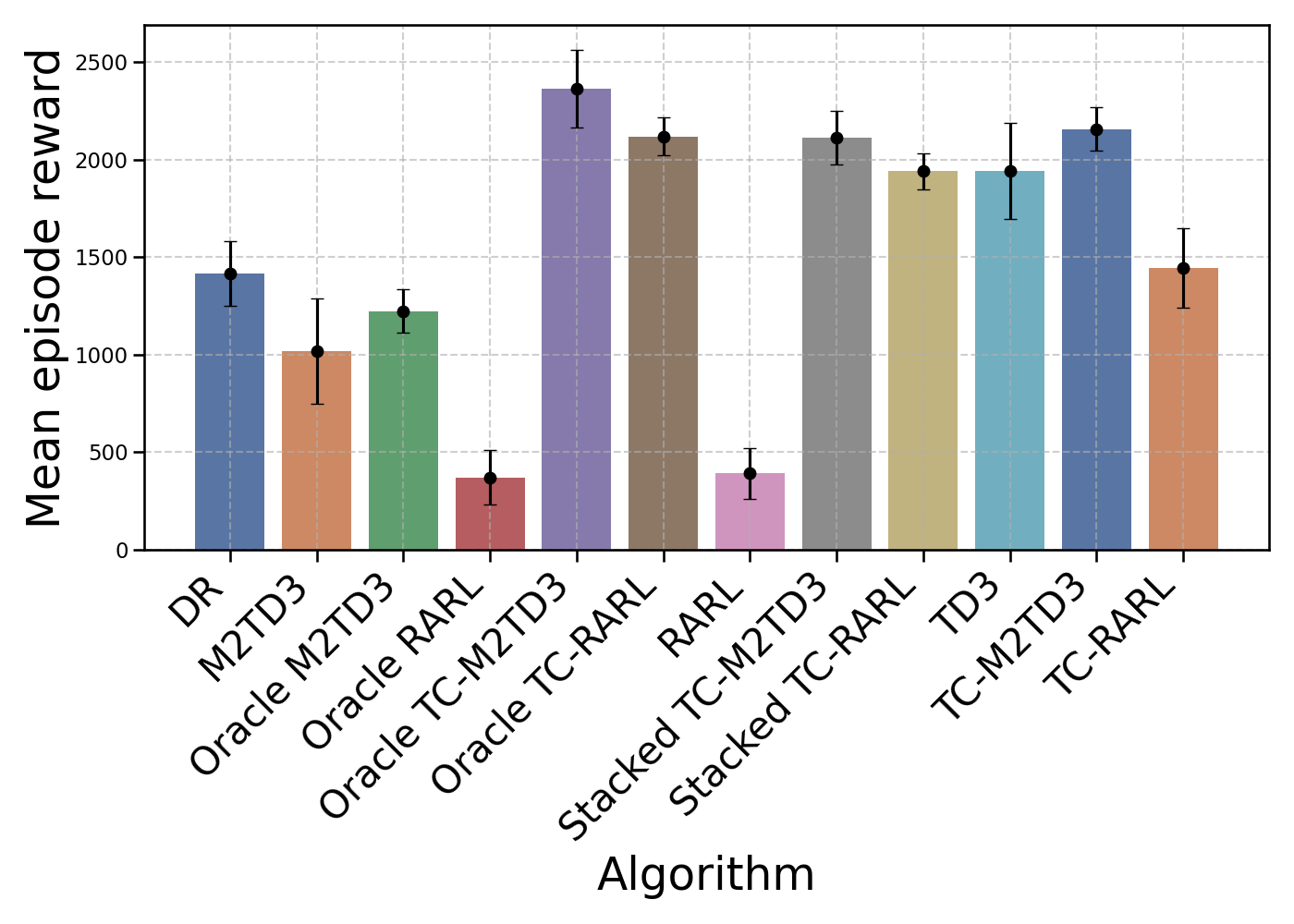}
        \caption{Hopper}
        \label{fig:scheduler_random_hopper}
    \end{subfigure}
    \vspace{-3pt} 

    \begin{subfigure}[b]{0.32\textwidth}
        \centering
        \includegraphics[width=\textwidth]{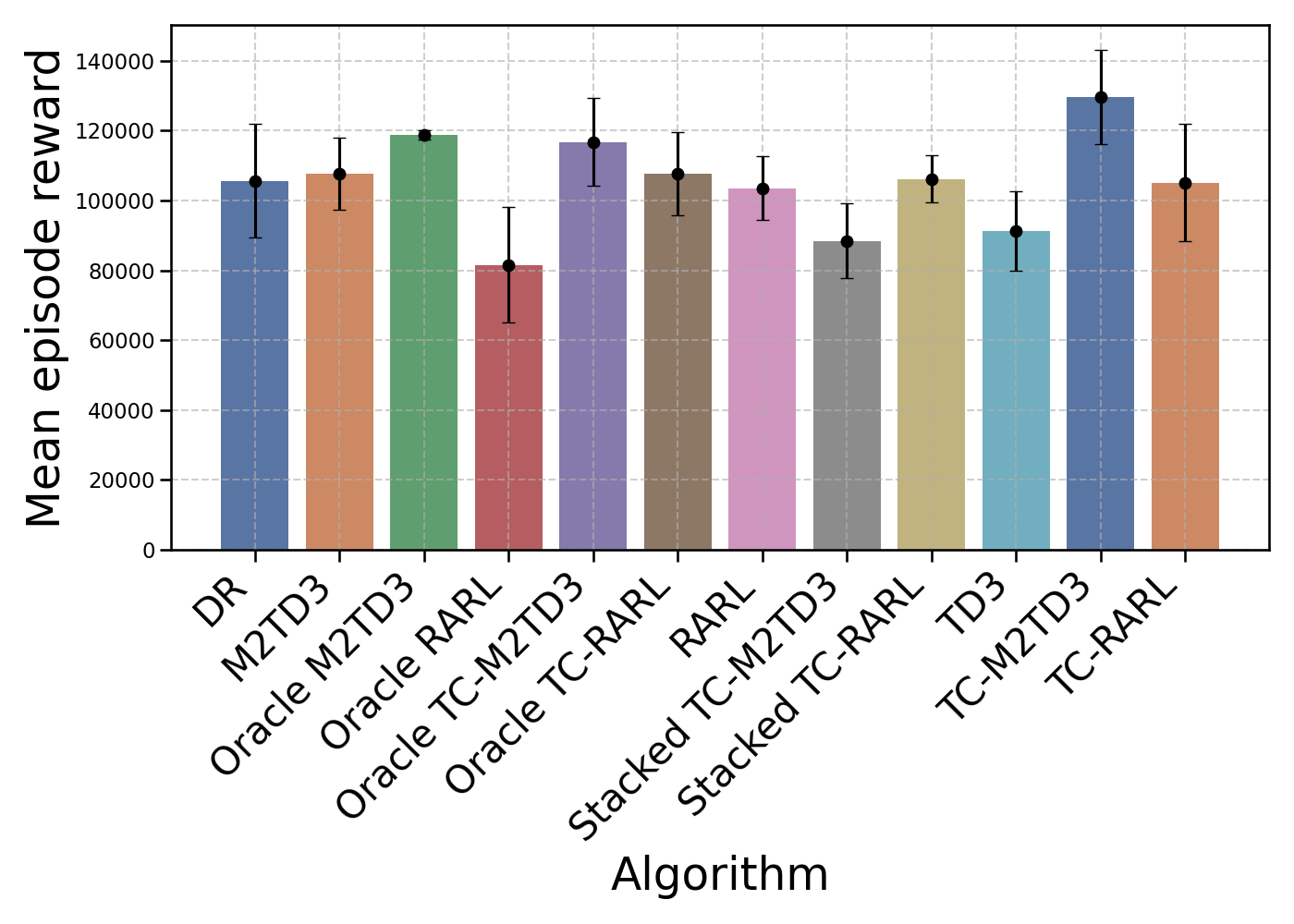}
        \caption{Humanoid}
        \label{fig:scheduler_random_humanoid}
    \end{subfigure}
    \hspace{2pt} 
    \begin{subfigure}[b]{0.32\textwidth}
        \centering
        \includegraphics[width=\textwidth]{figures/scheduler/atomic/scheduler_eval_Ant_random.png}
        \caption{Walker}
        \label{fig:scheduler_random_walker}
    \end{subfigure}

    \caption{Evaluation against a random fixed adversary, with a radius $L=0.1$}
    \label{fig:scheduler_random}
\end{figure}

\textbf{Performance of Robust Time-Constrained MDPs in the static setting}.
In static environments, the Robust Time-Constrained algorithms were evaluated for worst-case and average performance metrics, shown in Tables \ref{tab:Normalized_worst_case} and \ref{tab:Normalized_average_performance}.
 A fixed uncertainty set $\mathcal{P}$ was used, dividing each dimension of $\Psi$ into ten segments, creating a grid of 1000 points $(10^3)$. 
Each agent ran five episodes at each grid point, and the rewards were averaged. The scores were normalized as described for the time-constrained adversary analysis in Table \ref{tab:tc_adversary}. 
The raw data is provided in Appendix \ref{app:static_worst_case_raw} and \ref{app:static_average_case_raw}.
 Performance scores were adjusted relative to the baseline $v_{TD3}$ and $v_{M2TD3}$.
As a result, normalized results reveal distinct trends among agent configurations within the TC-RMDP framework.
The Oracle TC-M2TD3 variant achieved an average score of 3.12 \ref{tab:Normalized_average_performance}, while the Stacked TC-M2TD3 scored 2.23, indicating its resilience.
Furthermore, in the worst-case scenario, the TC-RARL and Stacked TC-RARL variants demonstrated adaptability, with TC-RARL scoring $0.92$ and TC-M2TD3 scoring $1.02$ \ref{tab:Normalized_worst_case}. 
This performance highlights its reliability in challenging static environments.
\begin{table}[h!]
\begin{adjustbox}{width=\textwidth}
\begin{tabular}{|l|l|l|l|l|l|l|}
\toprule
 & Ant & HalfCheetah & Hopper & Humanoid & Walker & Agg \\ 
\midrule
Oracle M2TD3 & $\mathbf{1.02 \pm 0.19}$ & $0.34 \pm 0.23$ & $0.97 \pm 0.55$ & $\mathbf{3.9 \pm 3.65}$ & $0.3 \pm 0.45$ & $\mathbf{1.31 \pm 1.01}$ \\
Oracle RARL & $0.62 \pm 0.32$ & $0.1 \pm 0.02$ & $0.48 \pm 0.19$ & $-2.59 \pm 2.18$ & $0.16 \pm 0.21$ & $-0.25 \pm 0.58$ \\
\oracle-M2TD3 & $0.1 \pm 0.25$ & $\mathbf{1.87 \pm 0.1}$ & $0.49 \pm 1.07$ & $-0.8 \pm 3.05$ & $0.28 \pm 0.38$ & $0.39 \pm 0.97$ \\
\oracle-RARL & $0.59 \pm 0.36$ & $1.55 \pm 0.35$ & $0.4 \pm 0.16$ & $1.19 \pm 1.24$ & $0.56 \pm 0.39$ & $0.86 \pm 0.5$ \\
\midrule
\stack-M2TD3 & $-0.05 \pm 0.09$ & \textbf{\textcolor[rgb]{0,0.5,0}{\boldmath{$1.56 \pm 0.16$}}} & $1.08 \pm 0.89$ & $-0.83 \pm 2.62$ & $1.12 \pm 0.5$ & $0.58 \pm 0.85$ \\
\stack-RARL & $0.07 \pm 0.13$ & $0.76 \pm 0.34$ & \textbf{\textcolor[rgb]{0,0.5,0}{\boldmath{$1.35 \pm 0.93$}}} & \textbf{\textcolor[rgb]{0,0.5,0}{\boldmath{$1.75 \pm 2.48$}}} & $0.67 \pm 0.32$ & $0.92 \pm 0.84$ \\
\TC-M2TD3 & $-0.06 \pm 0.08$ & $1.49 \pm 0.23$ & $1.29 \pm 0.29$ & $1.21 \pm 2.44$ & \textbf{\textcolor[rgb]{0,0.5,0}{\boldmath{$1.19 \pm 0.34$}}} & \textbf{\textcolor[rgb]{0,0.5,0}{\boldmath{$1.02 \pm 0.68$}}} \\
\TC-RARL & $0.14 \pm 0.24$ & $0.89 \pm 0.3$ & $1.5 \pm 0.76$ & $1.4 \pm 4.57$ & $0.67 \pm 0.59$ & $0.92 \pm 1.29$ \\
TD3 & $0.0 \pm 0.34$ & $0.0 \pm 0.06$ & $0.0 \pm 0.21$ & $0.0 \pm 2.27$ & $0.0 \pm 0.1$ & $0.0 \pm 0.6$ \\
DR & $0.06 \pm 0.16$ & $1.07 \pm 0.36$ & $0.86 \pm 0.82$ & $0.04 \pm 4.1$ & $0.57 \pm 0.37$ & $0.52 \pm 1.16$ \\
M2TD3 & \textbf{\textcolor[rgb]{0,0.5,0}{\boldmath{$1.0 \pm 0.27$}}} & $1.0 \pm 0.16$ & $1.0 \pm 0.65$ & $1.0 \pm 3.32$ & $1.0 \pm 0.63$ & $1.0 \pm 1.01$ \\
RARL & $0.44 \pm 0.3$ & $0.13 \pm 0.08$ & $0.5 \pm 0.22$ & $0.44 \pm 2.94$ & $0.12 \pm 0.09$ & $0.33 \pm 0.73$ \\
\bottomrule
\end{tabular}
\end{adjustbox}
\caption{Avg. of normalized static worst-case performance over 10 seeds for each method}
\label{tab:Normalized_worst_case}
\end{table}

\begin{table}[h!]
\begin{adjustbox}{width=\textwidth}
\begin{tabular}{|l|l|l|l|l|l|l|}
\toprule
 & Ant & HalfCheetah & Hopper & Humanoid & Walker & Agg \\ 
\midrule
Oracle M2TD3 & $1.13 \pm 0.08$ & $1.56 \pm 0.24$ & $1.12 \pm 0.46$ & $1.96 \pm 1.53$ & $1.23 \pm 0.3$ & $1.4 \pm 0.52$ \\
Oracle RARL & $0.7 \pm 0.22$ & $-1.4 \pm 0.13$ & $-0.77 \pm 0.24$ & $-2.6 \pm 2.88$ & $-1.13 \pm 0.84$ & $-1.04 \pm 0.86$ \\
\oracle-M2TD3 & $1.73 \pm 0.09$ & $\mathbf{4.35 \pm 0.26}$ & $\mathbf{5.54 \pm 0.13}$ & $2.12 \pm 1.4$ & $1.84 \pm 0.37$ & $\mathbf{3.12 \pm 0.45}$ \\
\oracle-RARL & $\mathbf{1.78 \pm 0.02}$ & $4.32 \pm 0.21$ & $5.08 \pm 0.48$ & $0.42 \pm 2.9$ & $1.68 \pm 0.24$ & $2.66 \pm 0.77$ \\
\midrule
\stack-M2TD3 & $1.45 \pm 0.38$ & \textbf{\textcolor[rgb]{0,0.5,0}{\boldmath{$3.78 \pm 0.29$}}} & \textbf{\textcolor[rgb]{0,0.5,0}{\boldmath{$5.2 \pm 0.29$}}} & $-1.38 \pm 1.67$ & \textbf{\textcolor[rgb]{0,0.5,0}{\boldmath{$2.11 \pm 0.52$}}} & $2.23 \pm 0.63$ \\
\stack-RARL & $1.52 \pm 0.11$ & $2.29 \pm 0.23$ & $2.91 \pm 0.67$ & $1.14 \pm 2.19$ & $1.21 \pm 0.46$ & $1.81 \pm 0.73$ \\
\TC-M2TD3 & $1.6 \pm 0.06$ & $3.71 \pm 0.24$ & $4.4 \pm 0.6$ & \textbf{\textcolor[rgb]{0,0.5,0}{\boldmath{$3.28 \pm 2.52$}}} & $1.56 \pm 0.23$ & \textbf{\textcolor[rgb]{0,0.5,0}{\boldmath{$2.91 \pm 0.73$}}} \\
\TC-RARL & \textbf{\textcolor[rgb]{0,0.5,0}{\boldmath{$1.67 \pm 0.07$}}} & $2.27 \pm 0.22$ & $1.79 \pm 0.53$ & $0.89 \pm 2.19$ & $1.01 \pm 0.21$ & $1.53 \pm 0.64$ \\
TD3 & $0.0 \pm 0.49$ & $0.0 \pm 0.22$ & $0.0 \pm 0.83$ & $0.0 \pm 1.36$ & $0.0 \pm 0.51$ & $0.0 \pm 0.68$ \\
DR & $1.65 \pm 0.05$ & $2.31 \pm 0.27$ & $2.08 \pm 0.49$ & $1.15 \pm 2.47$ & $1.22 \pm 0.34$ & $1.68 \pm 0.72$ \\
M2TD3 & $1.0 \pm 0.11$ & $1.0 \pm 0.19$ & $1.0 \pm 0.55$ & $1.0 \pm 1.43$ & $1.0 \pm 0.65$ & $1.0 \pm 0.59$ \\
RARL & $0.69 \pm 0.13$ & $-1.3 \pm 0.54$ & $-0.99 \pm 0.11$ & $0.47 \pm 1.92$ & $-0.35 \pm 0.83$ & $-0.3 \pm 0.71$ \\
\bottomrule
\end{tabular}
\end{adjustbox}
\caption{Avg. of normalized static average case performance over 10 seeds for each method}
\label{tab:Normalized_average_performance}
\end{table}

\section{Some Theoretical properties of TC-MDPS.}
\label{sec:theory}

\subsection{On the optimal policy of TC}
Following Lemma 3.3 of \citep{iyengar2005robust}, it is known that in the rectangular case, there exists an optimal policy of the adversary that is stationary, provided that the actor policy is stationary. 
The TC-RMDP definition enforces a limitation on the temporal variation of the transition kernel. Consequently, all stationary adversarial policies are constrained by this stipulation. 
In turn, this guarantees that (under the hypothesis of $sa$-rectangularity) there always exists a solution to the TC-RMDP that is also a solution to the original RMDP.
In other words: optimizing policies for TC-RMDPs do not exclude optimal solutions to the underlying RMDP. 
This sheds an interesting light on the search for robust optimal policies, since TC-RMDPs shrink the search space of optimal adversarial policies.
In practice, this is confirmed by the previous experimental results (Figure \ref{tab:Normalized_worst_case}) where the optimal agent policy found by either \oracle, \stack, or vanilla \TC actually outperforms the one found by M2TD3 or RARL in the non time-constrained setting.

\subsection{Some Lipchitz-properties for non-stationary TC-RMPDS.}

In this subsection we slightly depart from the framework defined in Section \ref{sec:problem_statement} and study the smoothness of the robust objective for vanilla \TC or \stack.
Th. \ref{eq:cv_belleman_augmented} is no longer applicable as $\psi$ is not observed.
However, we can still give smoothness of the objective starting from Lipchichz conditions on the evolution of the parameter that leads to smoothness on reward and transition kernel in the following definition \ref{kernel_lipchitz}.
\begin{definition}[Reward/Kernel Lipchitz TC-RMDPs \citep{lecarpentier2019non}]
\label{kernel_lipchitz}
    We say that a parametric RDMPs is time constrained if the parameter change is bounded through time ie.
    $\|\psi_{t+1} - \psi_t\|\leq L$. Moreover, we assume that this variation in parameter implies a variation in the reward and transition kernel of \begin{align*}
        &\forall s \in \mathcal{S}, \forall a \in \mathcal{A},\left\|P_t(\cdot \mid s, a)-P_{t+1}(\cdot \mid s, a)\right\|_1 \leq L_P   \\
    &;   \left|r_t(s, a)-\mathbb{E}\left[r_{t+1}(s, a)\right]\right| \leq L_r 
    \end{align*}
\end{definition}

From a theoretical point of view, a TC-RMDP can be seen as a sequence of stationary MDPs with time indexed reward and transition kernel $r_t$, $P_t$ that have continuity. More formally for $M_t =(S,A,\Psi,p_{\psi_t}, r_t , L=(L_p,L_r) )$, we can then define the sequence of stationary MDPs with Lipchitz variation :
\begin{align}
\label{eq:non_statinary_uncertainty_set}
    \mathcal{M}_{t}^{L}= \Big\{ & \left\{M_t\right\}_{t'=t_0}^{t};  \exists L_r \in \mathbb{R}  \forall s \in \mathcal{S}, \forall a \in \mathcal{A},\left\|P_t'(\cdot \mid s, a)-P_{t'+1}(\cdot \mid s, a)\right\|_1 \leq L_P  \nonumber ; \\
    &     \left|r_{t'}(s, a)-r_{t'+1}(s, a)\right| \leq L_r \Big\}
\end{align}

Defining $r_t^k$ as the random variable corresponding to the reward function at time step $t$ for stationary MDPs, but iterating with index $k$, the stationary rollout return at time $t$ is $G(\pi, M_t)=  \sum_{k\geq 0} \gamma^k r_t^k$. Assuming that at a fixed $t$ the  reward and transition kernel $r_t,P_t$ are fixed, the robust objective function is: \looseness=-1
    \begin{align*}
  &J^R(\pi, t):=  \min _{ m= \left\{m_t'\right\}_{t'=t_0}^{t}  \in \mathcal{M}_{t}^{L}  }    \mathbb{E}_{  } \left[G\left(\pi, m\right)\right] 
\end{align*}
 This leads to the following guarantee for vanilla \TC and \stack algorithms.
\begin{theorem}
\label{th:lipchitz}
    Assume  TC-RMPDS with $L=(L_r,L_P)$ smoothness. Then $\forall t \in \mathbb{N}, r_t \in[0,1]$,
    \begin{align*}
    \label{eq:results_theorem_lipchi}
         \forall t \in \mathbb{N}^{+}, \forall t_0 \in \mathbb{N}^{+}, \quad|J^R(\pi,t_0)-J^R(\pi, t_0+t)| \leq L' t.
    \end{align*}
    with $L':=\left(\frac{\gamma}{(1-\gamma)^2} L_P+\frac{1}{1-\gamma} L_r\right)$
\end{theorem}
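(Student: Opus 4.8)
The plan is to bound the difference $|J^R(\pi,t_0)-J^R(\pi,t_0+t)|$ by a telescoping argument over single time steps, reducing everything to a one-step perturbation bound $|J^R(\pi,t_0+j)-J^R(\pi,t_0+j+1)|\leq L'$, and then summing the $t$ terms. So the first move is to establish the base case: for two consecutive indices, the robust returns differ by at most $L'=\frac{\gamma}{(1-\gamma)^2}L_P+\frac{1}{1-\gamma}L_r$. Concretely, I would fix the adversary $m=\{m_{t'}\}$ achieving (or $\varepsilon$-achieving) the minimum in $J^R(\pi,t_0+1)$, and construct a feasible adversary for $J^R(\pi,t_0)$ that is $L$-close to it (shifting the window by one step, which is legitimate because the Lipschitz-in-time constraint defining $\mathcal{M}^L_t$ is translation invariant). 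The gap between the two expected returns under these coupled adversaries is then controlled by how much the stationary rollout return $G(\pi,m)$ can move when each $r_{t'}$ shifts by $L_r$ and each $P_{t'}$ shifts by $L_P$ in total variation.

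The heart of the argument is the one-step return-perturbation lemma. Writing $G(\pi,M)=\sum_{k\geq0}\gamma^k r^k$, I would split the error into a reward part and a transition part. The reward part is immediate: replacing each $r_{t'}$ by something $L_r$-close contributes at most $\sum_{k\geq0}\gamma^k L_r=\frac{L_r}{1-\gamma}$. The transition part is the standard ``simulation lemma''-style estimate: a per-step total-variation perturbation of $L_P$ in the kernel, with rewards bounded in $[0,1]$ and discount $\gamma$, propagates into a value difference of at most $\frac{\gamma}{(1-\gamma)^2}L_P$; this is the usual telescoping over the horizon where one swaps kernels one layer at a time and uses $\|v\|_\infty\leq\frac{1}{1-\gamma}$ together with $|\mathbb{E}_{P}v-\mathbb{E}_{P'}v|\leq\|P-P'\|_1\|v\|_\infty$. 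Adding the two pieces gives exactly $L'$. A small amount of care is needed to make this an inequality on the \emph{robust} objective (a min over adversary sets) rather than on a fixed MDP: the clean way is the two-sided coupling argument — any feasible adversary for one index maps to a feasible, pointwise-$L$-close adversary for the neighboring index — so that $\min$ over one set is within $L'$ of $\min$ over the other; this yields $|J^R(\pi,t_0+j)-J^R(\pi,t_0+j+1)|\leq L'$ in both directions.

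Finally I would chain the bound: by the triangle inequality,
\begin{align*}
|J^R(\pi,t_0)-J^R(\pi,t_0+t)|\leq\sum_{j=0}^{t-1}|J^R(\pi,t_0+j)-J^R(\pi,t_0+j+1)|\leq L't,
\end{align*}
which is the claimed estimate. I expect the main obstacle to be the transition-kernel term in the one-step lemma: getting the $\frac{\gamma}{(1-\gamma)^2}$ constant requires the careful layer-by-layer kernel swap with the $\frac{1}{1-\gamma}$ bound on the value function, and one must be attentive that the perturbations $L_P$, $L_r$ in Definition~\ref{kernel_lipchitz} are \emph{per-step} quantities so that the geometric series over $k$ is what produces the stated horizon factors. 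The translation-invariance of the constraint set $\mathcal{M}^L_t$ (used to shift the adversary window) should be remarked on explicitly but is straightforward from its definition in \eqref{eq:non_statinary_uncertainty_set}.
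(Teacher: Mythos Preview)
Your proposal is correct and follows essentially the same strategy as the paper: telescope over single steps, establish the one-step bound $L'=\frac{L_r}{1-\gamma}+\frac{\gamma L_P}{(1-\gamma)^2}$ by separating reward and kernel perturbations, and then lift to the robust objective via the elementary $|\min a-\min b|\leq\sup|a-b|$ principle (which is precisely your coupling/window-shift argument). The only cosmetic difference is that the paper controls the kernel term through the discounted state-occupancy measure $d^\pi(\cdot,M_t)$ and the bound $\sum_s|d^\pi(s,M_t)-d^\pi(s,M_{t+1})|\leq\frac{\gamma L_P}{1-\gamma}$ rather than your layer-by-layer simulation-lemma swap; both routes are standard and deliver the same constant.
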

This theorem states that a small variation of the Kernel and reward function will not affect too much the robust objective. 
In other terms, despite the fact that the TC Bellman operator may not admit a fixed point and yield a non-stationary sequence of value functions, variations of the expected return remain bounded.
Proof of the Th. \ref{th:lipchitz} can be found in Appendix \ref{appendix:theory}.
\looseness=-1

\section{Conclusion}
This paper presents a novel framework for robust reinforcement learning, which addresses the limitations of traditional methods that rely on rectangularity assumptions. 
These assumptions often result in overly conservative policies, which are not suitable for real-world applications where environmental disturbances are multifactorial, correlated, and time-constrained. 
In order to overcome these challenges, we proposed a new formulation, the Time-Constrained Robust Markov Decision Process (TC-RMDP).
The TC-RMDP framework is capable of accurately capturing the dynamics of real-world environments, due to its consideration of the temporal continuity and correlation of disturbances. 
This approach resulted in the development of three algorithms: The three algorithms, \oracle, \stack, vanilla \TC which  differ in the extent to which environmental information is incorporated into the decision-making process.
A comprehensive evaluation of continuous control benchmarks using  MuJoCo environments has demonstrated that the proposed TC-RMDP algorithms outperform traditional robust RL methods and domain randomization techniques.
These algorithms achieved a superior balance between performance and robustness in both time-constrained and static settings.
The results confirmed the effectiveness of the TC-RMDP framework in reducing the conservatism of policies while maintaining robustness.
Moreover, we provided theoretical guaranties for \oracle in Th. \ref{eq:cv_belleman_augmented} and for \stack and vanilla \TC in Th. \ref{th:lipchitz}.
This study contributes to the field of robust reinforcement learning by introducing a time-constrained framework that more accurately reflects the dynamics observed in real-world settings.
The proposed algorithms and theoretical contributions offer new avenues for the development of more effective and practical RL applications in environments with complex, time-constrained uncertainties.
\newpage

\bibliographystyle{authordate1}

\bibliography{references}  

\newpage


\newpage

\appendix

\section{Appendix}

The Appendix is structured as follow :

\begin{itemize}
\item In Appendix \ref{proof:eq:cv_belleman_augmented}, proof for fix point of \oracle algorithm for can be found.
    \item In Appendix \ref{appendix:theory}, proof for algorithm Vanilla \TC and \stack can found about robust objective. 
    \item In Appendix \ref{app:adversary_training curves}, the adversary training was sanity-checked within the time-constrained evaluation.
    \item In Appendix \ref{app:implementation details}, all implementation details are provided.
    \item In Appendix \ref{app:raw_results}, all raw results are presented.
    \item In Appendix \ref{app:computer_ressources}, the computer resources and training wall clock time are detailed.
    \item In Appendix \ref{app:impact}, the broader impact and limitations are discussed.
    
\end{itemize}


   

\section{Proof of Theorem \ref{eq:cv_belleman_augmented}}
\label{proof:eq:cv_belleman_augmented}

\begin{proof}
    The Proof is similar to \cite{iyengar2005robust}, using the fact that $p_{\psi+b}$ belongs to the simplex, we get contraction of the operator and convergence to a fix point $v_B^*$. Not that to converge to the fix point, there is no need of rectangularity.
\end{proof}

Recall the recursion
\begin{align}
    &v_{n+1}(s,\psi) = \max_{\pi(s,\psi)\in\Delta_A} \min_{b\in B}T_b^\pi v_n(s,\psi)\\
    &:=\max_{\pi(s,\psi)\in\Delta_A} \min_{b\in B} \mathbb{E}_{a\sim \pi(s)} [ r(s,a) + \gamma \mathbb{E}_{s'\sim p_{\psi+b}} v_n(s',\psi') ]
\end{align}
First we prove that the TC Robust Operator $T_B^\pi$ is a contraction. Let $V_1, V_2 \in \mathbb{R}^n$. Fix $s \in S$, and assume that $T_B^\pi V_1(s,\psi) \geq T_B^\pi V_2(s,\psi)$. Then fix $\epsilon>0$ and pick $\pi $ s.t given $s \in S$,

\begin{align}
    \inf _{b \in B} \mathbb{E}_{p_{\psi+b}}\left[r\left(s, \pi(s)\right)+\gamma V_1\left(s^{\prime},\psi'\right)\right] \geq T_B^\pi V_1(s,\psi')-\epsilon .
\end{align}

First we pick a probability measure $p'$ such that  $p'= p_{\psi+b} ,b \in B $, such that
\begin{align}
    \mathbb{E}_{p'}\left[r\left(s, \pi(s) \right)+\gamma V_2\left(s^{\prime},\psi'\right)\right] \leq \inf _{b \in B} \mathbb{E}_{p'}\left[r\left(s, \pi(s) \right)+\gamma V_2\left(s^{\prime},\psi'\right)\right]+\epsilon .
\end{align}

Then it lead to 

\begin{align}
0 \leq T_B^\pi V_1(s,\psi)-T_B^\pi V_2(s,\psi) & \leq\left(\inf _{p \in B} \mathbb{E}_p\left[r\left(s, \pi(s) \right)+\gamma V_1\left(s^{\prime},\psi'\right)\right]+\epsilon\right)\\& -\left(\inf _{p \in B} \mathbb{E}_p\left[r\left(s, \pi(s)\right)+\gamma V_2\left(s^{\prime},\psi'\right)\right]\right) \\
& \leq\left(\mathbb{E}_{p'}\left[r\left(s, \pi(s)\right)+\gamma V_1\left(s^{\prime},\psi'\right)\right]+\epsilon\right)-
\\&\left(\mathbb{E}_{p'}\left[r\left(s, \pi(s)\right)+\gamma V_2\left(s^{\prime},\psi'\right)\right]-\epsilon\right), \\
& =\gamma \mathbb{E}_{p'}[V_1-V_2]+2 \epsilon, \\
& \leq \gamma \mathbb{E}_{p'}|V_1-V_2|+2 \epsilon \\
& \leq \gamma\|V_1-V_2\|_\infty+2 \epsilon .
\end{align}

where last inequality is Holder's inequality between $L_1$ and $L_\infty$ norms, use probability measure in the simplex such as $\lVert p'\rVert_1 =1$. Doing the same thing but in  the case where $T_B^\pi V_1(s) \leq T_B^\pi V_2(s)$ , it holds

\begin{align}
\forall s \in S,
    |T_B^\pi V_1(s)-T_B^\pi V_2(s)| \leq \gamma\|V_1-V_2\|_\infty+2 \epsilon, \quad 
\end{align}

i.e. $\|T_B^\pi V_1-T_B^\pi V_2\|_\infty \leq \gamma\|V_1-V_2\|_\infty+2 \epsilon$. As  we can choose $\epsilon$ arbitrary small, this establishes that the TC Bellman operator is a $\gamma$-contraction. Since $T_B^\pi$ is a contraction operator on a Banach space, the Banach fixed point theorem implies that the operator equation $T_B^\pi V=V$ has a unique solution $V=v_B^\pi$. A similar proof can be done for optimal operator $T^*_B$. The only difference is the maximum operator which is $1-$Lipschitz. So $T^*_B$ is also a contraction.
Then, once proved that  operators are $\gamma-$ contraction, following \citep{iyengar2005robust} (Th. 5), we have that the fixed point of this recursion is exactly :
\begin{align}
&v_{B}^\pi(s,\psi):= \min_{\substack{(b_t)_{t\in\mathbb{N}},\\b_t\in B}} \mathbb{E} \Big[ \sum \gamma^t r_t | \psi_{-1} = \psi, s_0=s, b_t \in B, \psi_t = \psi_{t-1}+b_t,a\sim \pi,s_t\sim p_{\psi_t}\Big]\\
    & v_B^*(s,\psi)=\max_{\pi(s,\psi)\in\Delta_A}v^\pi_B(s,\psi).
\end{align}
for (optimal) TC Bellman Operator.

\section{Guaranties for non-stationary Robust MDPS}
\label{appendix:theory}
Recall that we represent a non-stationary robust MDPs (NS-RMDP) as a stochastic sequence, $\left\{ \mathcal{M}=\{M_i\right\}_{t=t_0}^{\infty}$, of stationary MDPs $M_t \in \mathcal{M}$, where $\mathcal{M}$ is the set of all stationary MDPs. Each $M_t$ is a tuple, $\left(\mathcal{S}, \mathcal{A}, P_t, r_t, \gamma, \rho^0\right)$, where $\mathcal{S}$ The set of possible states is denoted by $\mathcal{S}$, the set of actions by $\mathcal{A}$, the discounting factor by $\gamma$, the start-state distribution by $\rho^0$, and the reward distribution by $r_t$. The reward distribution, denoted by $ r_t : \mathcal{S} \times \mathcal{A} \rightarrow \Delta(\mathbb{R})$, is the probability distribution of rewards. The transition function, represented by $P_t: \mathcal{S} \times \mathcal{A} \rightarrow \Delta(\mathcal{S})$, is the probability distribution of transitions between states. The symbol $\Delta$ denotes the simplex. For all $M_t \in \mathcal{M}$, we assume that the state space, action space, discount factor, and initial distribution remain fixed. A policy is represented as a function $\pi: \mathcal{S} \rightarrow \Delta(\mathcal{A})$. In general, we will use subscripts $t$ to denote the time evolution during an episode and superscripts $k$ to denote the time step assuming reward or kernel $t$ which is stationary, assuming that the reward function is not changing as it is at time step $t$ stationary. That  $r_t^k$ is the random variables corresponding to the state, action, and reward at time step $t$ for stationary, but iterating with index $k$.

\begin{definition}[ Lipschitz of sequence of MDPs]
    We denote the sequence of kernel and reward function $\mathcal{P}= \left\{P_t \right\}_{t=t_0}^{\infty}  $ and  
$\mathcal{R}= \left\{ r_t \right\}_{t=t_0}^{\infty}  $.
We define a sequence of MDP is $L=(L_r,L_P)$-Lipchitz if $m= \left\{m_t\right\}_{t=t_0}^{\infty}  \in \mathcal{M}_{}^{L} $ with


\begin{align*}
    &\mathcal{M}_{t}^{L}= \Big\{  \left\{M_t\right\}_{t'=t_0}^{t};  \exists (L_r,L_P) \in \mathbb{R}^2_+ \forall t \in \mathbb{N},\\
    &\forall s \in \mathcal{S}, \forall a \in \mathcal{A},\left\|P_{t'}(\cdot \mid s, a)-P_{t'+1}(\cdot \mid s, a)\right\|_1 \leq L_P   
      ;   \left|r_t'(s, a)- r_{t'+1}(s, a)\right| \leq L_r \Big\}
\end{align*}

\end{definition}
Assuming that for a time steps the reward function is stationary, we can compute the average return as:
\begin{definition}
    Non-robust objective function, assuming that $G(\pi, M_t)=  \sum_{k\geq 0} \gamma^k r_t^k$, the return is we assume stationary with reward function $r_t$

    \begin{align}
    J\left(\pi, t\right) =\mathbb{E}_{}[G(\pi ,M_t)]& =(1-\gamma)^{-1} \sum_{s \in \mathcal{S}} d^\pi\left(s, M_t\right) \sum_{a \in \mathcal{A}} \pi(a \mid s)   r_t(s, a) .
\end{align}
with $d^\pi$ the state occupancy measure defined in  \eqref{eq:def_occupacy}.
\end{definition}

\begin{definition}[Robust (optimal) Return of NS-RMDPs]

 Let a return of $\pi$ for any $m_t \in M_t$ be $G(\pi, M_t):=\sum_{k=0}^{\infty} \gamma^k r^k_{t} $  with kernel transition $P_t$ following $\pi$, with $ \forall k, t, r^k_{t} \in [0,1]$, and the Robust non-stationary expected return with variation of kernel 


 
Let the robust performance of $\pi$ for episode $t$ be 
 \begin{align*}
     J^R(\pi, t):=  \min _{ m= \left\{m_t'\right\}_{t'=t_0}^{t}  \in \mathcal{M}_{t}^{L}  }    \mathbb{E}_{ } \left[G\left(\pi, m\right)\right]
 \end{align*}
 
\end{definition}

\section{Proof Theom \ref{th:lipchitz}}

      \begin{align*}
         \forall t \in \mathbb{N}^{+}, \forall t_0 \in \mathbb{N}^{+}, \quad|J^R(\pi, t_0)-J^R(\pi,t_0+t)| \leq L' t.
    \end{align*}
    with $L':=\left(\frac{\gamma}{(1-\gamma)^2} L_P+\frac{1}{1-\gamma} L_r\right)$
\label{proof:th:lipchitz}

\begin{proof}[Proof of Theorem \ref{th:lipchitz}]

First, this difference can be upper bounded in the non robust case as:

By definition, we can rewrite non-robust objective function and occupancy measure as.
\begin{align}
\label{eq:def_occupacy}d^\pi\left(s, M_t\right) & =(1-\gamma) \sum_{k=0}^{\infty} \gamma^k \operatorname{Pr}\left(S_t=s \mid \pi, M_t\right), \\ 
J\left(\pi, M_t\right) & =(1-\gamma)^{-1} \sum_{s \in \mathcal{S}} d^\pi\left(s, M_t\right) \sum_{a \in \mathcal{A}} \pi(a \mid s)   r_t(s, a) .
\end{align}
First, we can decompose the problem into sub-problems such that 
\begin{align}
    \forall t \in \mathbb{N}^{+}, \forall t_0 \in \mathbb{N}^{+}, \quad|J(\pi, t_0)-J(\pi, t_0+t)| \leq \vert \sum_{t'=t_0}^{t_0+t-1} \vert J(\pi,M_{t'})- J(\pi,M_{t'+1})   \vert 
\end{align}
using triangular inequality. Looking at differences between two time steps: 
\begin{align*}  
& (1-\gamma) \vert J(\pi,M_{t})- J(\pi,M_{t+1})   \vert   \\
&  =\left|\sum_{s \in \mathcal{S}} d^\pi\left(s, M_t\right) \sum_{a \in \mathcal{A}} \pi(a \mid s) r_t(s, a)-\sum_{s \in \mathcal{S}} d^\pi\left(s, M_{t+1}\right) \sum_{a \in \mathcal{A}} \pi(a \mid s) r_{t+1}(s, a)\right| \\ =
& \left|\sum_{s \in \mathcal{S}} \sum_{a \in \mathcal{A}} \pi(a \mid s)\left(d^\pi\left(s, M_t\right) r_t(s, a)-d^\pi\left(s, M_{t+1}\right) r_{t+1}(s, a)\right)\right| \\ = 
& \left|\sum_{s \in \mathcal{S}} \sum_{a \in \mathcal{A}} \pi(a \mid s)\left(d^\pi\left(s, M_t\right)\left(r_{t+1}(s, a)+(r_t(s, a)-R_{t+1}(s, a))\right)-d^\pi\left(s, M_{t+1}\right) r_{t+1}(s, a)\right)\right| \\ = 
& \Big|\sum_{s \in \mathcal{S}} \sum_{a \in \mathcal{A}} \pi(a \mid s)\left(d^\pi\left(s, M_t\right)-d^\pi\left(s, M_{t+1}\right)\right) r_{t+1}(s, a)\\
 &+\sum_{s \in \mathcal{S}} \sum_{a \in \mathcal{A}} \pi(a \mid s) d^\pi\left(s, M_t\right)(r_t(s, a)-r_{t+1}(s, a))\Big|  \\
  \stackrel{(a)}{\leq} 
  &\sum_{s \in \mathcal{S}} \sum_{a \in \mathcal{A}} \pi(a \mid s)\left|d^\pi\left(s, M_t\right)-d^\pi\left(s, M_{t+1}\right)\right|\left|r_{t+1}(s, a)\right|
  \\& +\sum_{s \in \mathcal{S}} \sum_{a \in \mathcal{A}} \pi(a \mid s) d^\pi\left(s, M_t\right)|r_t(s, a)-r_{t+1}(s, a)| \\ 
  \stackrel{(b)}{\leq} 
  & \sum_{s \in \mathcal{S}} \sum_{a \in \mathcal{A}} \pi(a \mid s)\left|d^\pi\left(s, M_t\right)-d^\pi\left(s, M_{t+1}\right)\right|+L_R \sum_{s \in \mathcal{S}} \sum_{a \in \mathcal{A}} \pi(a \mid s) d^\pi\left(s, M_t\right) \\ 
  \stackrel{}{=} 
  &\sum_{s \in \mathcal{S}}\left|d^\pi\left(s, M_t\right)-d^\pi\left(s, M_{t+1}\right)\right|+L_r 
\end{align*}
where (a) is triangular inequality, (b) is definition of of supremum of reward in the assumptions and reward bounded by $1$. Then, let $P_t^\pi \in \mathbb{R}^{|\mathcal{S}| \times|\mathcal{S}|}$ be the transition matrix ( $s^{\prime}$ in rows and $s$ in columns) resulting due to $\pi$ and $P_t$, i.e., $\forall t, P_t^\pi\left(s^{\prime}, s\right):=\operatorname{Pr}\left(S_{t+1}=s^{\prime} \mid S_t=s, \pi, M_t\right)$, and let $d^\pi\left(\cdot, M_t\right) \in \mathbb{R}^{|\mathcal{S}|}$ denote the vector of probabilities for each state, then  Finally we can easily bound the difference of occupation measure as  :

\begin{align}
& \sum_{s \in \mathcal{S}}\left|d^\pi\left(s, M_t\right)-d^\pi\left(s, M_{t+1}\right)\right| \\
& \stackrel{(d)}{\leq} \gamma(1-\gamma)^{-1} \sum_{s^{\prime} \in \mathcal{S}}\left|\sum_{s \in \mathcal{S}}\left(P_t^\pi\left(s^{\prime}, s\right)-P_{t+1}^\pi\left(s^{\prime}, s\right)\right) d^\pi\left(s, M_t\right)\right| \\
& \leq \gamma(1-\gamma)^{-1} \sum_{s^{\prime} \in \mathcal{S}} \sum_{s \in \mathcal{S}}\left|P_t^\pi\left(s^{\prime}, s\right)-P_{t+1}^\pi\left(s^{\prime}, s\right)\right| d^\pi\left(s, M_t\right) \\
&= \gamma(1-\gamma)^{-1} \sum_{s^{\prime} \in \mathcal{S}} \sum_{s \in \mathcal{S}}\left|\sum_{a \in \mathcal{A}} \pi(a \mid s)\left(\operatorname{Pr}\left(s^{\prime} \mid s, a, M_t\right)-\operatorname{Pr}\left(s^{\prime} \mid s, a, M_{t+1}\right)\right)\right| d^\pi\left(s, M_t\right) \\
& \leq \gamma(1-\gamma)^{-1} \sum_{s^{\prime} \in \mathcal{S}} \sum_{s \in \mathcal{S}} \sum_{a \in \mathcal{A}} \pi(a \mid s)\left|\operatorname{Pr}\left(s^{\prime} \mid s, a, M_t\right)-\operatorname{Pr}\left(s^{\prime} \mid s, a, M_{t+1}\right)\right| d^\pi\left(s, M_t\right) \\
&= \gamma(1-\gamma)^{-1} \sum_{s \in \mathcal{S}} \sum_{a \in \mathcal{A}} \pi(a \mid s) d^\pi\left(s, M_t\right) \sum_{s^{\prime} \in \mathcal{S}}\left|\operatorname{Pr}\left(s^{\prime} \mid s, a, M_t\right)-\operatorname{Pr}\left(s^{\prime} \mid s, a, M_{t+1}\right)\right| \\
& \leq \gamma(1-\gamma)^{-1} \sum_{s \in \mathcal{S}} \sum_{a \in \mathcal{A}} \pi(a \mid s) d^\pi\left(s, M_t\right) L_P \\
&= \frac{\gamma L_P}{(1-\gamma)} ,
\end{align}

which gives regrouping all terms:

\begin{align}
  \vert J(\pi,M_{t})- J(\pi,M_{t+1})\vert  \leq \frac{L_r}{1-\gamma}+\frac{\gamma L_P}{(1-\gamma)^2} . \label{eq:non_robust_bound}
\end{align}

where the stationary MDP $M_{t+1}$ can be chosen as the minimum over the previous MDPs at time step $t$ such as $\left|\operatorname{Pr}\left(s^{\prime} \mid s, a, M_t\right)-\operatorname{Pr}\left(s^{\prime} \mid s, a, M_{t+1}\right)\right|\leq L_p$.  Rewriting previous equation  \eqref{eq:non_robust_bound}, it holds that 

\begin{align}
\label{eq:diff_robust}
     \left| \Big[ \mathbb{E}_{\pi , P_{}} \left[G\left(\pi, m\right)\right]-    \min _{ m= \left\{m_t'\right\}_{t'=t}^{t+1}    }   \mathbb{E}_{\pi , P_{}} \left[G\left(\pi, m\right)\right] \Big] \right|\leq \frac{L_r}{1-\gamma}+\frac{\gamma L_P}{(1-\gamma)^2} =L'.  
\end{align}

Now considering non robust  objective :

\begin{align}
& \left|    J^R\left(\pi, t\right)-J^R\left(\pi, t+1\right) \right| \\
      =&\left|    \min _{ m= \left\{m_t'\right\}_{t'=t_0}^{t}  \in \mathcal{M}_{}^{L}  }    \mathbb{E}_{} \left[G\left(\pi, m\right)\right]-    \min _{ m= \left\{m_t'\right\}_{t'=t_0}^{t+1}  \in \mathcal{M}_{t+1}^{L}  }    \mathbb{E}_{} \left[G\left(\pi, m\right)\right]\right| \\
      =&\left|    \min _{ m= \left\{m_t'\right\}_{t'=t_0}^{t}  \in \mathcal{M}_{t}^{L}  }   \Big[ \mathbb{E}_{} \left[G\left(\pi, m\right)\right]-   \min _{ m= \left\{m_t'\right\}_{t'=t_0}^{t}  \in \mathcal{M}_{t}^{L}  } \min _{ m= \left\{m_t'\right\}_{t'=t}^{t+1}  \ }   \mathbb{E}_{} \left[G\left(\pi, m\right)\right] \Big] \right|  \\
\leq&  \label{eq:final_diff}  \max _{ m= \left\{m_t\right\}_{t=t_0}^{t}  \in \mathcal{M}_{t}^{L}  } \left| \Big[ \mathbb{E}_{} \left[G\left(\pi, m\right)\right]-    \min _{ m= \left\{m_t'\right\}_{t'=t}^{t+1}    }   \mathbb{E}_{ } \left[G\left(\pi, m\right)\right] \Big]  \right|  
\end{align}
where first equality is the definition of the robust objective, second equality is decomposition of minimum across time steps and final inequality is simply a  property of the $\min$ such as $\vert\min a -\min b \vert\leq \sup  \vert  a-b \vert$. 

Finally plugging \ref{eq:diff_robust} in \eqref{eq:final_diff}, it holds that 

\begin{align}
& \left|    J^R\left(\pi, t\right)-J^R\left(\pi, t+1\right) \right| \\
      =&\left|    \min _{ m= \left\{m_t'\right\}_{t'=t_0}^{t}  \in \mathcal{M}_{}^{L}  }    \mathbb{E}_{\pi , P_{}} \left[G\left(\pi, m\right)\right]-    \min _{ m= \left\{m_t'\right\}_{t'=t_0}^{t+1}  \in \mathcal{M}_{}^{L}  }    \mathbb{E}_{\pi , P_{}} \left[G\left(\pi, m\right)\right]\right| \leq \frac{L_r}{1-\gamma}+\frac{\gamma L_P}{(1-\gamma)^2} . \\ :=&L'.
\end{align}
Combining  $t$ times the previous equation gives the result:

\begin{align*}
         \forall t \in \mathbb{N}^{+}, \forall t_0 \in \mathbb{N}^{+}, \quad|J^R(\pi, t_0)-J^R(\pi,t_0+t)| \leq L' t.
    \end{align*}
    with $L':=\left(\frac{\gamma}{(1-\gamma)^2} L_P+\frac{1}{1-\gamma} L_r\right)$
\end{proof}

\section{Implementation details}
\label{app:implementation details}

\subsection{Algorithm}
\label{app:deeptcmdpalgo}

\begin{algorithm}[H]
\DontPrintSemicolon
\KwIn{Time-constrained MDP: $(S,A,\Psi,p_\psi,r,L)$, Agent $\pi$, Adversary $\bar{\pi}$}

\For{each interaction time step $t$}{
    {\color{orange} $a_t \sim \pi_{t}(s_t, \psi_{t})$ \tcp*{Sample an action with \oracle}}
    {\color{blue} $a_t \sim \pi_{t}(s_t, a_{t-1}, s_{t-1})$ \tcp*{Sample an action with \stack}}
    {\color{magenta} $a_t \sim \pi_{t}(s_t)$   \tcp*{Sample an action with $\TC$}}
    $\psi_{t+1} \sim \bar{\pi}_{\phi}(s_t, a_t, \psi_t)$ \tcp*{Sample the worst TC parameter}
    $s_{t+1} \sim p_{\psi_{t+1}}(s_t, a_t)$ \tcp*{Sample a transition}
     $\mathcal{B} \leftarrow \mathcal{B} \cup\{\left(s_t, a_t, r\left(s_t,a_t\right), \psi_{t}, \psi_{t+1},s_{t+1}\right)\}$ \tcp*{Add transition to replay buffer}
    $\left\{s_i, a_i, r(s_i,a_i), \psi_{i}, \psi_{i+1}, s_{i+1}\right\}_{i \in [1, N]} \sim \mathcal{B}$ \tcp*{Sample a mini-batch of transitions}
    $\theta_c \leftarrow \theta_c - \alpha \nabla_{\theta_c} L_Q(\theta_c)$ \tcp*{Critic update phase}
    $\theta_a \leftarrow \theta_a - \alpha \nabla_{\theta_a} L_\pi(\theta_a)$ \tcp*{Actor update}
    $\phi_c \leftarrow \phi_c + \alpha \nabla_{\phi_c} L_{\bar{Q}}(\phi_c)$ \tcp*{Adversary Critic update phase}
    $\phi_a \leftarrow \phi_a + \alpha \nabla_{\phi_a} L_{\bar{\pi}} (\phi_a)$ \tcp*{Adversary update}
    
}

\label{algo:TC_robust}

\caption{Time-constrained robust training}
\end{algorithm}
Note that in Time-constrained robust training Algorithm in section \ref{app:deeptcmdpalgo}, $L_Q$ and $L_\pi$ are as defined by \citep{fujimoto2018addressing} double critics and target network updates are omitted here for clarity

In Table \ref{app:loss_functions}, for the stack algorithm, $s_{i}$ is defined as $s_{i} \leftarrow s_{i} \cup s_{i-1} \cup a_{i-1}$ for \stack, and for the  \oracle version, $s_{i} \leftarrow s_{i} \cup \psi_{i}$.

\begin{table}[h!]
\renewcommand{\arraystretch}{1.5} 
\centering
\scalebox{0.9}{ 
\begin{tabular}{|c|l|}
\hline
\textbf{Loss Function} & \textbf{Equation} \\
\hline
$L_{Q_{\theta_c}}$ (TC-RARL) & $\mathbb{E}\left[ Q_{\theta_c}(s_i, a_i) - r(s_i, a_i) + \gamma \min_{j=1,2} Q_{\theta_c}(s_{i+1}, \pi(s_{i+1})) \right]$ \\
\hline
$L_{\pi}(\theta_a)$ (TC-RARL) & $-\mathbb{E}\left[ Q_{\theta_c}(s_i, \pi_{\theta_a}(s_i)) \right]$ \\
\hline
$L_{\bar{\pi}}(\theta_a)$ (TC-RARL) & $\mathbb{E}\left[ \bar{Q}_{\theta_c}(s_i, a_i, \bar{\pi}(s_i, a_i), \psi_i) \right]$ \\
\hline
$L_{\bar{Q}}(\theta_c)$ (TC-RARL) & $\mathbb{E}\left[ \bar{Q}_{\theta_c}(s_i, a_i) - r(s_i, a_i) + \gamma \min_{j=1,2} \bar{Q}_{\theta_c}(s_{i+1}, \pi_{\theta_a}(s_{i+1}), \bar{\pi}_{\theta_a}(s_{i+1}, a_{i+1}, \psi_{i+1})) \right]$ \\
\hline
$L_{Q_{\theta_c}}$ Shared (TC-M2TD3) & $\mathbb{E}\left[ Q_{\theta_c}(s_i, a_i) - r(s_i, a_i) + \gamma \min_{j=1,2} Q_{\theta_c}(s_{i+1}, \pi_{\theta_a}(s_{i+1}), \bar{\pi}_{\theta_a}(s_{i+1}, a_{i+1}, \psi_{i+1})) \right]$ \\
\hline
$L_{\pi}(\theta_a)$ (TC-M2TD3) & $\mathbb{E}\left[ Q_{\theta_c}(s_i, a_i, \bar{\pi}_{\theta_a}(s_i, a_i), \psi_i) \right]$ \\
\hline
$L_{\bar{\pi}}(\theta_a)$ (TC-M2TD3) & $-\mathbb{E}\left[ \bar{Q}_{\theta_c}(s_i, a_i, \bar{\pi}_{\theta_a}(s_i, a_i, \psi_i)) \right]$ \\
\hline
\end{tabular}
}
\caption{Summary of Loss Functions for TD3 in TC-RARL and TC-M2TD3}
\label{app:loss_functions}
\end{table}

\subsection{Neural network architecture}
We employ a consistent neural network architecture for both the baseline and our proposed methods for the actor and the critic components. The architecture's design ensures uniformity and comparability across different models.

The critic network is structured with three layers, as depicted in Figure~\ref{app:critic_architecture}, the critic begins with an input layer that takes the state and action as inputs, which then passes through two fully connected linear layers of 256 units each. The final layer is a single linear unit that outputs a real-valued function, representing the estimated value of the state-action pair.

The actor neural network, shown in Figure~\ref{app:actor_architecture}, also utilizes a three-layer design. It begins with an input layer that accepts the state as input. This is followed by two linear layers, each consisting of 256 units. The output layer of the actor neural network has a dimensionality equal to the number of dimensions of the action space.
\label{app:neural_network_architecture}
\begin{figure}[h]
    \centering
    \begin{subfigure}[b]{0.45\textwidth}
        \centering
        \includegraphics[width=\textwidth]{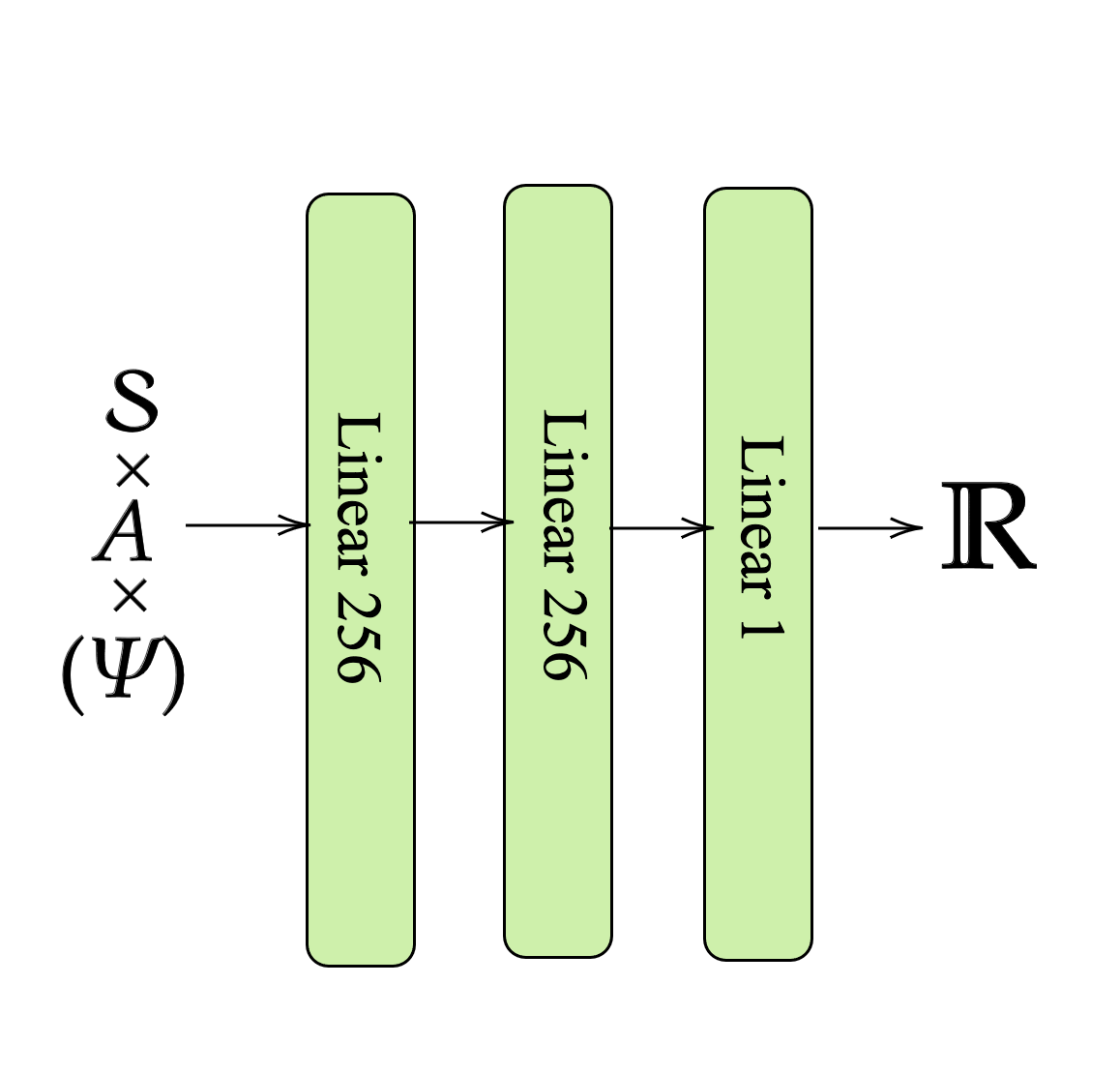}
        \caption{Critic neural network architecture}
        \label{app:critic_architecture}
    \end{subfigure}
    \hspace{6pt} 
    \begin{subfigure}[b]{0.45\textwidth}
        \centering
        \includegraphics[width=\textwidth]{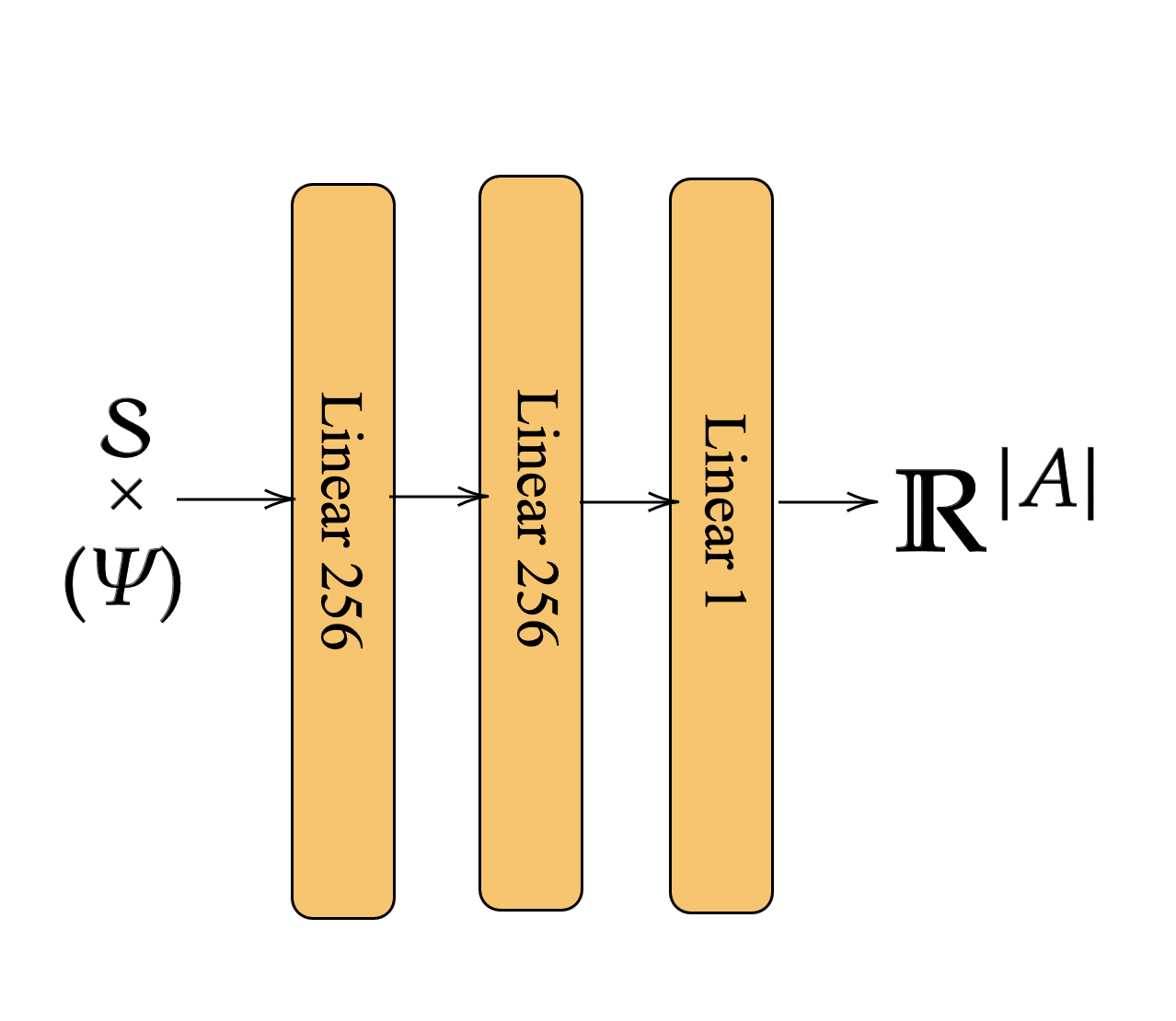}
        \caption{Actor neural network architecture}
        \label{app:actor_architecture}
    \end{subfigure}
    \caption{Actor critic neural network architecture}
    \label{app:tab_neural_network_architecture}
\end{figure}
\subsection{M2TD3}
We utilized the official M2TD3 \cite{m2td3} implementation provided by the original authors, accessible via the \href{https://github.com/akimotolab/M2TD3}{GitHub repository} for M2TD3 and Oracle M2TD3.

\label{app:hyperparameters_m2td3}
\begin{table}[h!]
\centering
\begin{tabular}{|l|l|}
\hline
\textbf{Hyperparameter} & \textbf{Default Value} \\ \hline
Policy Std Rate & 0.1 \\ \hline
Policy Noise Rate & 0.2 \\ \hline
Noise Clip Policy Rate & 0.5 \\ \hline
Noise Clip Omega Rate & 0.5 \\ \hline
Omega Std Rate & 1.0 \\ \hline
Min Omega Std Rate & 0.1 \\ \hline
Maximum Steps & 5e6 \\ \hline
Batch Size & 100 \\ \hline
Hatomega Number & 5 \\ \hline
Replay Size & 1e6 \\ \hline
Policy Hidden Size & 256 \\ \hline
Critic Hidden Size & 256 \\ \hline
Policy Learning Rate & 3e-4 \\ \hline
Critic Learning Rate & 3e-4 \\ \hline
Policy Frequency & 2 \\ \hline
Gamma & 0.99 \\ \hline
Polyak & 5e-3 \\ \hline
Hatomega Parameter Distance & 0.1 \\ \hline
Minimum Probability & 5e-2 \\ \hline
Hatomega Learning Rate (ho\_lr) & 3e-4 \\ \hline
Optimizer & Adam \\ \hline
\end{tabular}
\caption{Hyperparameters for the M2TD3 Agent}
\label{app:tab_hyperparameters_m2td3}
\end{table}
For the TC-M2TD3 or variants, we implemented the M2TD3 algorithm as specified. To simplify our approach, we omitted the implementation of the multiple $\hat{\psi}$ network and the system for resetting $\hat{\psi}$. We replace with an adversary which $\bar{\pi}: \mathcal{S} \times \mathcal{A} \times \Psi \rightarrow \Psi$  which minimize $Q(s,a, \psi)$.

\subsection{TD3}
\label{app:td3_hyperparameters}
We adopted the TD3 implementation from the CleanRL library, as detailed in \cite{huang2022cleanrl}.

\begin{table}[h!]
\centering
\begin{tabular}{|l|l|}
\hline
\textbf{Hyperparameter} & \textbf{Default Value} \\ \hline
Maximum Steps & 5e6 \\ \hline
Buffer Size & \(1 \times 10^6\) \\ \hline
Learning Rate & \(3 \times 10^{-4}\) \\ \hline
Gamma & 0.99 \\ \hline
Tau & 0.005 \\ \hline
Policy Noise & 0.2 \\ \hline
Exploration Noise & 0.1 \\ \hline
Learning Starts & \(2.5 \times 10^4\) \\ \hline
Policy Frequency & 2 \\ \hline
Batch Size & 256 \\ \hline
Noise Clip & 0.5 \\ \hline
Action Min & -1 \\ \hline
Action Max & 1 \\ \hline
Optimizer & Adam \\ \hline

\end{tabular}
\caption{Hyperparameters for the TD3 Agent}
\label{app:tab_td3_hyperparameters}
\end{table}

\section{Sanity check on the adversary training in the time-constrained evaluation}
\label{app:adversary_training curves}
A natural question arises regarding the worst time-constrained perturbation. Whether we adequately trained the adversary in the loop, or its suboptimal performance might lead to overestimating the trained agent reward against the worst-case time-constrained perturbation.
We monitored the adversary's performance during its training against a fixed agent to address this.
The attached figure shows the episodic reward (from the agent's perspective) during the adversary's training over 5 million timesteps, with a perturbation radius of $L=0.001$.
Each curve is an average of over 10 seeds. 
The plots show a rapid decline in reward during the initial stages of training, followed by quick stabilization. 
The episodic reward stabilizes early in the Ant (Figure \ref{fig:tc_evaluation_ant}) environment, indicating quick convergence.
Similarly, in the HalfCheetah (Figure \ref{fig:tc_evaluation_halfcheetah}) environment, the reward shows a sharp initial decline and stabilizes, suggesting effective training.
For Hopper (Figure \ref{fig:tc_evaluation_hopper}), the reward decreases and then levels off, reflecting adversary convergence. 
Although the reward is more variable in the HumanoidStandup (Figure \ref{fig:tc_evaluation_humanoid}) environment, it ultimately reaches a steady state, confirming adequate training.
Finally, in the Walker environment, the reward pattern demonstrates a quick drop followed by stabilization, indicating convergence. 
These observations confirm that the adversaries were not undertrained. 
The rapid convergence to a stable performance across all environments ensures the accuracy of the worst time-constrained perturbations estimated during training.
\looseness=-1

\begin{figure}[h!]
    \centering
    \begin{subfigure}[b]{0.45\textwidth}
        \centering
        \includegraphics[width=\textwidth]{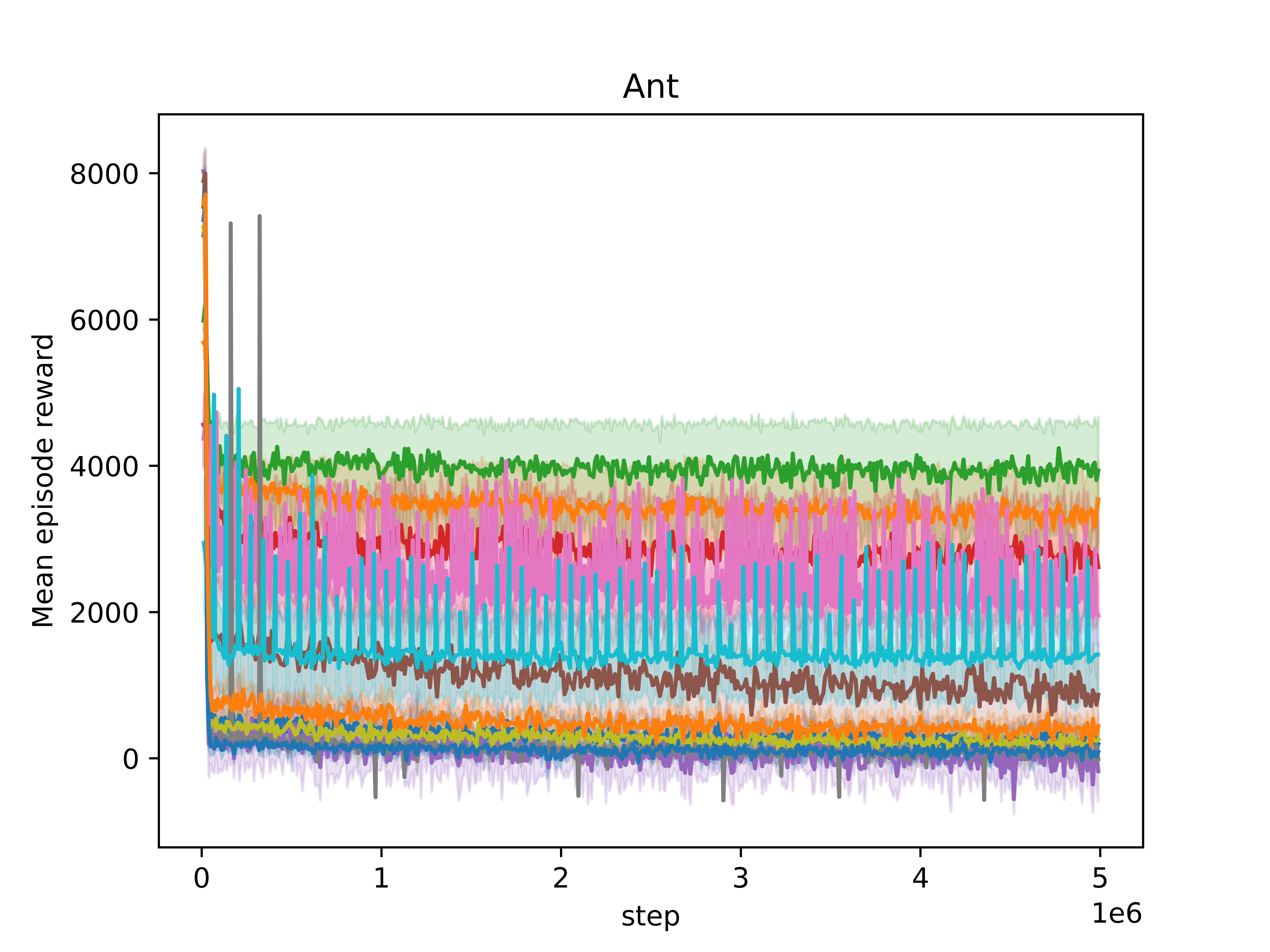}
        \caption{Ant: Episodic reward of the trained agent during adversary training}
        \label{fig:tc_evaluation_ant}
    \end{subfigure}
    \hspace{6pt} 
    \begin{subfigure}[b]{0.45\textwidth}
        \centering
        \includegraphics[width=\textwidth]{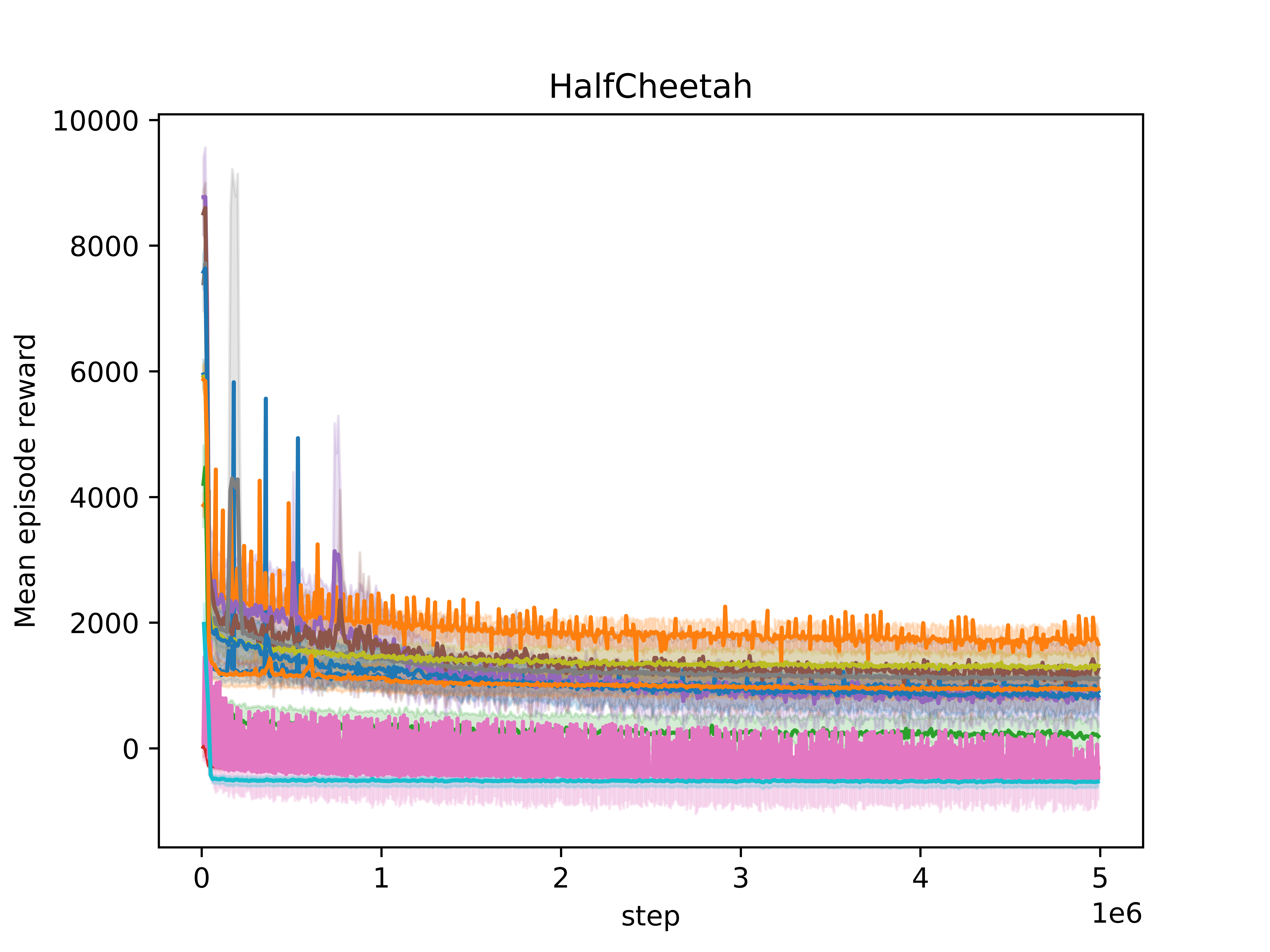}
        \caption{HalfCheetah: Episodic reward of the trained agent during adversary training}
        \label{fig:tc_evaluation_halfcheetah}
    \end{subfigure}
    \vspace{-2pt} 

    \begin{subfigure}[b]{0.45\textwidth}
        \centering
        \includegraphics[width=\textwidth]{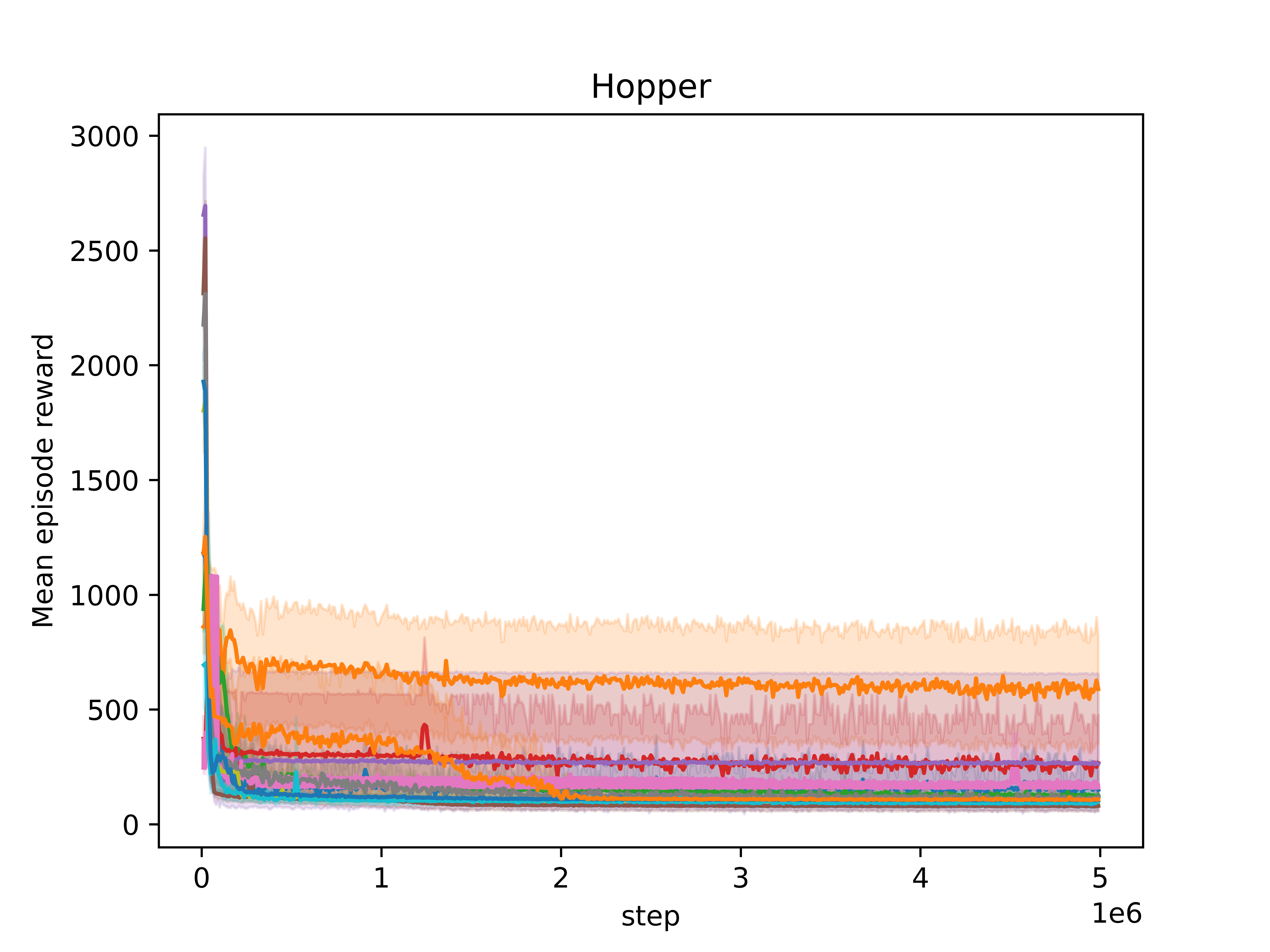}
        \caption{Hopper: Episodic reward of the trained agent during adversary training}
        \label{fig:tc_evaluation_hopper}
    \end{subfigure}
    \hspace{6pt} 
    \begin{subfigure}[b]{0.45\textwidth}
        \centering
        \includegraphics[width=\textwidth]{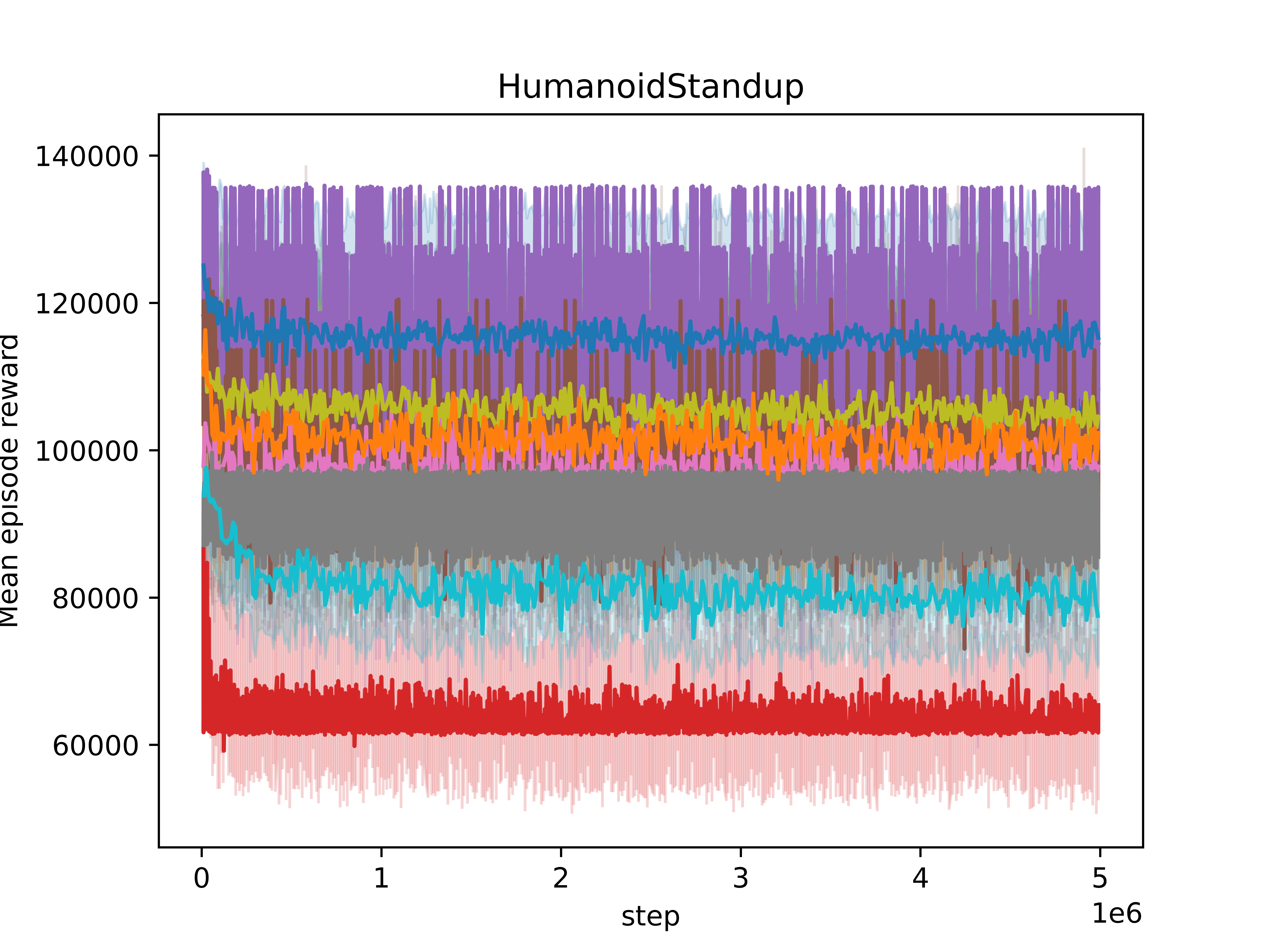}
        \caption{HumanoidStandup: Episodic reward of the trained agent during adversary training}
        \label{fig:tc_evaluation_humanoid}
    \end{subfigure}
    \vspace{-2pt} 

    \begin{subfigure}[b]{0.45\textwidth}
        \centering
        \includegraphics[width=\textwidth]{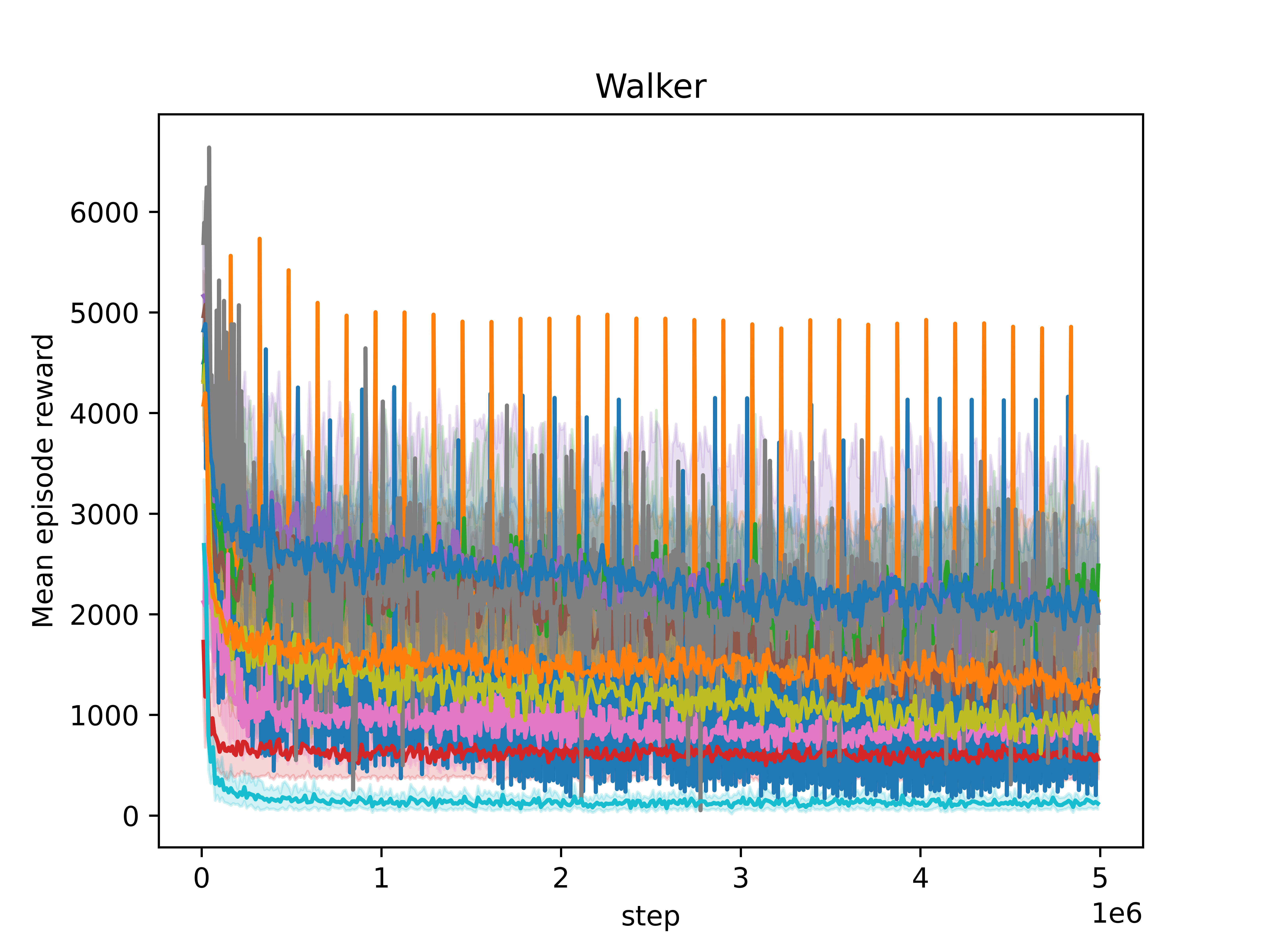}
        \caption{Walker: Episodic reward of the trained agent during adversary training}
        \label{fig:tc_evaluation_walker}
    \end{subfigure}
    \begin{subfigure}[b]{0.45\textwidth}
        \centering
        \includegraphics[width=\textwidth]{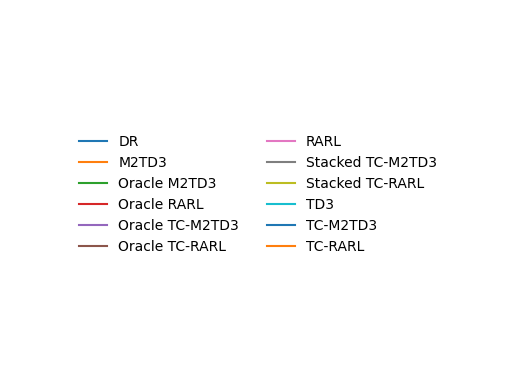}
        \caption{Legend for algorithm}
        \label{fig:tc_evaluation_legend}
    \end{subfigure}
    \caption{Episodic reward of the trained agent during the training of the adversary across different environments. Each plot represents the performance over 5 million timesteps, with rewards averaged across 10 seeds. The perturbation radius is set to $L=0.001$ for all adversaries.}
    \label{fig:adversary_training_fixed_agent}
\end{figure}

\section{Uncertainty set in MuJoCo environments}
\label{app:uncertainty-sets}

The experiments of Section \ref{sec:result} follow the evaluation protocol proposed by \citep{m2td3} and based on MuJoCo environments \citep{todorov2012mujoco}. 
These environments are designed with a 3D uncertainty sets.
Table \ref{tab:mujoco_uncertainty_params} lists all environments evaluated and their uncertainty sets. 
The uncertainty sets column defines the ranges of variation for the parameters within each environment. 
The reference parameters column indicates the nominal or default values.
The uncertainty parameters column describes the physical meaning of each parameter.

\begin{table}[ht!]
\centering
\caption{List of environment and parameters for the experiments}
\begin{tabularx}{\textwidth}{|X|X|X|X|}
\hline
Environment & Uncertainty set $\mathcal{P}$ & Reference values & Uncertainty parameters \\
\hline
Ant & $[0.1, 3.0] \times [0.01, 3.0] \times [0.01, 3.0]$ & $(0.33, 0.04, 0.06)$ & torso mass; front left leg mass; front right leg mass \\
\hline
HalfCheetah  & $[0.1, 4.0] \times [0.1, 7.0] \times [0.1, 3.0]$ & $(0.4, 6.36, 1.53)$ & world friction; torso mass; back thigh mass \\
\hline
Hopper & $[0.1, 3.0] \times [0.1, 3.0] \times [0.1, 4.0]$ & $(1.00, 3.53, 3.93)$ & world friction; torso mass; thigh mass \\
\hline
HumanoidStandup  & $[0.1, 16.0] \times [0.1, 5.0] \times [0.1, 8.0]$ & $(8.32, 1.77, 4.53)$ & torso mass; right foot mass; left thigh mass \\
\hline
Walker & $[0.1, 4.0] \times [0.1, 5.0] \times [0.1, 6.0]$ & $(0.7, 3.53, 3.93)$ & world friction; torso mass; thigh mass \\
\hline
\end{tabularx}
\label{tab:mujoco_uncertainty_params}
\end{table}

\section{Raw results}
\label{app:raw_results}
\label{app:tc_adversary_result_raw}
\begin{table}
\caption{Avg. of time-constrained worst-case performance over 10 seeds for each method}
\begin{tabular}{llllll}
\toprule
Environment & Ant & HalfCheetah & Hopper & HumanoidStandup & Walker \\
Method &  &  &  &  &  \\
\midrule

Oracle M2TD3 & $5768 \pm 395$ & $3521 \pm 187$ & $1241 \pm 125$ & $116232 \pm 1454$ & $4559 \pm 757$ \\
Oracle RARL & $4387 \pm 667$ & $-50 \pm 99$ & $344 \pm 113$ & $68979 \pm 10641$ & $1811 \pm 342$ \\
Oracle TC-M2TD3 & $7268 \pm 704$ & $7507 \pm 284$ & $\textbf{3386} \pm \textbf{323}$ & $114411 \pm 16973$ & $5344 \pm 536$ \\
Oracle TC-RARL & $\textbf{7534} \pm \textbf{781}$ & $\textbf{7526} \pm \textbf{311}$ & $3169 \pm 311$ & $101182 \pm 12083$ & $4783 \pm 382$ \\
\midrule
Stacked TC-M2TD3 & $6502 \pm 450$ & $6377 \pm 517$ & $3047 \pm 394$ & $85524 \pm 11448$ & $\textbf{5724} \pm \textbf{828}$ \\
Stacked TC-RARL & $6955 \pm 690$ & $5319 \pm 223$ & $1747 \pm 153$ & $107913 \pm 5514$ & $4152 \pm 483$ \\
TC-M2TD3 & $7181 \pm 591$ & $6516 \pm 232$ & $2511 \pm 45$ & $\textbf{129183} \pm \textbf{9120}$ & $4964 \pm 531$ \\
TC-RARL & $7473 \pm 361$ & $4989 \pm 284$ & $1475 \pm 158$ & $108669 \pm 17764$ & $3971 \pm 351$ \\
DR & $7247 \pm 925$ & $4986 \pm 363$ & $1642 \pm 104$ & $109618 \pm 11479$ & $4380 \pm 488$ \\
M2TD3 & $5622 \pm 435$ & $3671 \pm 405$ & $1120 \pm 220$ & $102839 \pm 12987$ & $4078 \pm 644$ \\
RARL & $4348 \pm 574$ & $382 \pm 366$ & $240 \pm 104$ & $106768 \pm 4051$ & $2388 \pm 559$ \\
TD3 & $2259 \pm 424$ & $1808 \pm 503$ & $777 \pm 407$ & $104877 \pm 12063$ & $1893 \pm 361$ \\
\bottomrule
\end{tabular}
\label{app:tc_adversary_raw}
\end{table}

\begin{table}
\caption{Avg. of raw static worst-case performance over 10 seeds for each method}
\begin{adjustbox}{width=\textwidth, center}

\begin{tabular}{llllll}

\toprule
 & Ant & HalfCheetah & Hopper & Humanoid & Walker \\
\midrule
dr & $19.78 \pm 394.84$ & $2211.48 \pm 915.64$ & $245.01 \pm 167.21$ & $64886.87 \pm 30048.79$ & $1318.36 \pm 777.51$ \\
m2td3 & $2322.73 \pm 649.3$ & $2031.9 \pm 409.7$ & $273.6 \pm 131.9$ & $71900.97 \pm 24317.35$ & $2214.16 \pm 1330.4$ \\
oracle m2td3 & $2370.93 \pm 473.56$ & $319.67 \pm 599.26$ & $267.41 \pm 111.47$ & $93123.84 \pm 26696.17$ & $736.59 \pm 944.76$ \\
oracle rarl & $1396.88 \pm 777.46$ & $-278.84 \pm 54.36$ & $167.5 \pm 38.2$ & $45635.24 \pm 15974.44$ & $459.74 \pm 437.02$ \\
oracle tc m2td3 & $120.74 \pm 618.23$ & $4273.31 \pm 246.91$ & $168.7 \pm 217.94$ & $58687.26 \pm 22321.77$ & $710.99 \pm 799.08$ \\
oracle tc rarl & $1328.27 \pm 890.49$ & $3458.52 \pm 893.22$ & $150.54 \pm 33.12$ & $73276.78 \pm 9110.33$ & $1299.88 \pm 812.63$ \\
rarl & $960.11 \pm 744.01$ & $-211.8 \pm 218.73$ & $170.46 \pm 45.73$ & $67821.86 \pm 21555.24$ & $360.31 \pm 186.06$ \\
stacked tc m2td3 & $-242.98 \pm 212.98$ & $3467.34 \pm 418.64$ & $289.37 \pm 182.18$ & $58515.04 \pm 19186.25$ & $2475.58 \pm 1057.03$ \\
stacked tc rarl & $37.77 \pm 320.71$ & $1414.37 \pm 876.91$ & $344.37 \pm 190.1$ & $77357.17 \pm 18186.34$ & $1518.86 \pm 668.13$ \\
td3 & $-123.64 \pm 824.35$ & $-546.21 \pm 158.81$ & $69.3 \pm 42.77$ & $64577.24 \pm 16606.51$ & $114.41 \pm 211.05$ \\
tc m2td3 & $-271.34 \pm 191.15$ & $3286.67 \pm 603.14$ & $333.36 \pm 60.04$ & $73428.2 \pm 17879.28$ & $2603.59 \pm 706.63$ \\
tc rarl & $209.04 \pm 575.89$ & $1738.59 \pm 782.71$ & $376.01 \pm 155.4$ & $74840.68 \pm 33496.45$ & $1513.65 \pm 1239.3$ \\
\bottomrule
\end{tabular}
\label{app:static_worst_case_raw}
\end{adjustbox}
\end{table}

\begin{table}
\caption{Avg. of raw static average case performance over 10 seeds for each method}
\begin{adjustbox}{width=\textwidth, center}

\begin{tabular}{llllll}
\toprule
env name & Ant & HalfCheetah & Hopper & HumanoidStandup & Walker \\
algo-name &  &  &  &  &  \\
\midrule
dr & $7500.88 \pm 143.38$ & $6170.33 \pm 442.57$ & $1688.36 \pm 225.59$ & $110939.89 \pm 22396.41$ & $4611.24 \pm 463.42$ \\
m2td3 & $5577.41 \pm 316.95$ & $4000.98 \pm 314.76$ & $1193.32 \pm 254.9$ & $109598.43 \pm 12992.35$ & $4311.2 \pm 877.89$ \\
oracle m2td3 & $5958.21 \pm 237.32$ & $4930.18 \pm 390.96$ & $1249.62 \pm 212.74$ & $118273.54 \pm 13891.06$ & $4616.05 \pm 407.94$ \\
oracle rarl & $4684.83 \pm 648.14$ & $36.19 \pm 216.52$ & $380.39 \pm 110.14$ & $76920.58 \pm 26135.3$ & $1451.39 \pm 1132.87$ \\
oracle-tc m2td3 & $7739.65 \pm 254.65$ & $9536.92 \pm 429.14$ & $3281.92 \pm 61.79$ & $119737.21 \pm 12697.2$ & $5442.85 \pm 499.78$ \\
oracle-tc-rarl & $7889.1 \pm 56.0$ & $9474.0 \pm 341.69$ & $3071.17 \pm 220.39$ & $104348.01 \pm 26249.98$ & $5220.2 \pm 318.07$ \\
rarl & $4650.55 \pm 395.03$ & $206.71 \pm 887.25$ & $276.37 \pm 52.42$ & $104764.87 \pm 17400.85$ & $2493.26 \pm 1113.74$ \\
stacked tc m2td3 & $6912.76 \pm 1116.81$ & $8583.55 \pm 479.97$ & $3124.06 \pm 133.27$ & $88039.74 \pm 15138.11$ & $5809.54 \pm 703.92$ \\
stacked-tc-rarl & $7123.07 \pm 332.33$ & $6130.71 \pm 384.05$ & $2072.75 \pm 306.48$ & $110843.2 \pm 19887.32$ & $4596.79 \pm 619.2$ \\
vanilla & $2600.43 \pm 1468.87$ & $2350.58 \pm 357.12$ & $733.18 \pm 382.06$ & $100533.0 \pm 12298.37$ & $2965.47 \pm 685.39$ \\
vanilla-tcm2td3 & $7366.9 \pm 169.58$ & $8467.64 \pm 397.42$ & $2756.5 \pm 273.91$ & $130305.38 \pm 22865.1$ & $5070.71 \pm 315.7$ \\
vanilla-tc-rarl & $7558.58 \pm 198.37$ & $6092.61 \pm 365.68$ & $1558.26 \pm 242.17$ & $108635.71 \pm 19848.21$ & $4325.42 \pm 283.04$ \\
\bottomrule
\end{tabular}
\label{app:static_average_case_raw}
\end{adjustbox}
\end{table}

Table~\ref{app:tc_adversary_raw} reports the non-normalized time-constrained (with a radius of $L=0.001$) worst-case scores, averaged across 10 independent runs for each
benchmark. Table~\ref{app:static_worst_case_raw} reports the static worst case score obtained by each agent across a grid of environments,
also averaged across 10 independent runs for each benchmark.
Table~\ref{app:static_average_case_raw} reports the static average case score obtained by each agent across a grid of environments, also averaged across 10 independent runs for each benchmark.

\subsection{Fixed adversary evaluation}

At the beginning of each episode, $\psi_0 \sim \mathcal{U}(\Psi)$ is selected for every fixed adversary. The episode length is 1000 steps.
To begin with, the random fixed adversary simulates stochastic changes. It selects a parameter $\psi_t$ at each timestep within a radius of $L=0.1$, which is 100 times larger than in our training methods. This tests the agents' adaptability to unexpected changes.
In contrast, the cosine fixed adversary introduces deterministic changes using a cosine function. The radius of $L=0.1$ scales the frequency of the cosine function, ensuring smooth and periodic variations. Additionally, a phase shift at the start of each episode ensures different starting points.
Meanwhile, the linear fixed adversary employs a linear function. The parameters change linearly from the initial value to either one of a vertex of the uncertainty set $\Psi$ over 1000 steps.
Furthermore, the exponential fixed adversary uses an exponential function. Parameters change exponentially from the initial value to either of a vertex of the uncertainty set $\Psi$ over 1000 steps. This ensures smooth and predictable variations.
Similarly, the logarithmic fixed adversary uses a logarithmic function. Parameters change logarithmically from the initial value to either of a vertex of the uncertainty of the uncertainty set $\Psi$ over 1000 steps, ensuring smooth and predictable variations.
Agents trained under the time-constrained framework outperform all baselines in all environments for each fixed adversary, except when compared to the oracle TC method, which has access to $\psi$. In this case, the stacked-TC or TC methods outperform all baselines in all environments for the cosine, logarithmic, and exponential adversaries and outperform the fixed adversary baseline in 4 out of 5 instances for the random and linear fixed adversaries.


\label{app:fixed_adversary_evaluation}

\begin{table}
\begin{tabular}{llllll}
\toprule
Environment & Ant & HalfCheetah & Hopper & HumanoidStandup & Walker \\
Method &  &  &  &  &  \\
\midrule

Oracle TC-M2TD3 & $7782 \pm 915$ & $\textbf{8805} \pm \textbf{165}$ & $\textbf{2365} \pm \textbf{199}$ & $116791 \pm 12572$ & $5148 \pm 558$ \\
Oracle TC-RARL & $\textbf{8041} \pm \textbf{470}$ & $8727 \pm 227$ & $2120 \pm 96$ & $107733 \pm 11975$ & $4896 \pm 326$ \\
Oracle M2TD3 & $5830 \pm 542$ & $4445 \pm 186$ & $1222 \pm 111$ & $118861 \pm 1365$ & $4584 \pm 787$ \\
Oracle RARL & $4628 \pm 514$ & $-51 \pm 60$ & $370 \pm 141$ & $81583 \pm 16526$ & $1829 \pm 356$ \\
\midrule
Stacked TC-M2TD3 & $6888 \pm 738$ & $7400 \pm 385$ & $2114 \pm 138$ & $88436 \pm 10750$ & $\textbf{5278} \pm \textbf{845}$ \\
Stacked TC-RARL & $7045 \pm 904$ & $5992 \pm 427$ & $1940 \pm 93$ & $106213 \pm 6770$ & $4430 \pm 389$ \\
TC-M2TD3 & $7156 \pm 692$ & $7530 \pm 185$ & $2157 \pm 112$ & $\textbf{129599} \pm \textbf{13556}$ & $4931 \pm 568$ \\
TC-RARL & $7554 \pm 948$ & $5751 \pm 482$ & $1445 \pm 203$ & $105144 \pm 16813$ & $4112 \pm 329$ \\
DR & $7572 \pm 629$ & $6048 \pm 349$ & $1416 \pm 168$ & $105677 \pm 16333$ & $4371 \pm 431$ \\
M2TD3 & $5588 \pm 516$ & $4180 \pm 70$ & $1018 \pm 271$ & $107692 \pm 10414$ & $4176 \pm 783$ \\
RARL & $4347 \pm 567$ & $240 \pm 250$ & $390 \pm 130$ & $103583 \pm 9217$ & $1925 \pm 501$ \\
TD3 & $4017 \pm 518$ & $2028 \pm 1250$ & $1944 \pm 246$ & $91205 \pm 11350$ & $2860 \pm 419$ \\
\bottomrule
\end{tabular}
\caption{Avg. performance against time-constrained fixed random adversary with a radius $L=0.1$ over 10 seeds for each method}
\label{app:random_adversary_raw}
\end{table}

\begin{table}
\caption{Avg. performance against time-constrained fixed cosine adversary with a radius $L=0.1$  over 10 seeds for each method}
\label{app:cosine_adversary_raw}
\begin{tabular}{llllll}
\toprule
Environment & Ant & HalfCheetah & Hopper & HumanoidStandup & Walker \\
Method &  &  &  &  &  \\
\midrule
Oracle M2TD3 & $5528 \pm 637$ & $3453 \pm 266$ & $1016 \pm 48$ & $119813 \pm 3281$ & $3589 \pm 863$ \\
Oracle RARL & $4550 \pm 626$ & $-79 \pm 34$ & $371 \pm 140$ & $74116 \pm 7890$ & $1593 \pm 326$ \\
Oracle TC-M2TD3 & $\textbf{7586} \pm \textbf{1345}$ & $\textbf{8174} \pm \textbf{383}$ & $\textbf{1946} \pm \textbf{104}$ & $115506 \pm 12470$ & $4464 \pm 781$ \\
Oracle TC-RARL & $7522 \pm 1435$ & $7838 \pm 810$ & $1735 \pm 138$ & $110535 \pm 12702$ & $4442 \pm 591$ \\
\midrule
Stacked TC-M2TD3 & $6269 \pm 849$ & $7173 \pm 509$ & $1734 \pm 157$ & $88157 \pm 10654$ & $\textbf{4888} \pm \textbf{567}$ \\
Stacked TC-RARL & $6510 \pm 1395$ & $5385 \pm 445$ & $1519 \pm 118$ & $105696 \pm 5243$ & $3848 \pm 404$ \\
TC-M2TD3 & $6350 \pm 769$ & $6797 \pm 609$ & $1413 \pm 167$ & $\textbf{130892} \pm \textbf{11544}$ & $4611 \pm 632$ \\
TC-RARL & $7124 \pm 912$ & $5109 \pm 348$ & $1172 \pm 129$ & $102864 \pm 13308$ & $3548 \pm 545$ \\
DR & $6975 \pm 992$ & $5490 \pm 384$ & $1091 \pm 169$ & $109227 \pm 17068$ & $3851 \pm 612$ \\
M2TD3 & $5330 \pm 684$ & $3634 \pm 321$ & $938 \pm 158$ & $108136 \pm 9755$ & $4126 \pm 644$ \\
RARL & $4153 \pm 602$ & $154 \pm 261$ & $363 \pm 58$ & $103366 \pm 7604$ & $1689 \pm 465$ \\
TD3 & $4025 \pm 557$ & $2784 \pm 370$ & $1317 \pm 189$ & $94352 \pm 10101$ & $2020 \pm 355$ \\
\bottomrule
\end{tabular}
\end{table}

\begin{table}
\caption{Avg. performance against a fixed linear adversary over 10 seeds for each method}
\label{app:linear_adverary_raw}
\begin{tabular}{llllll}
\toprule
Environment & Ant & HalfCheetah & Hopper & HumanoidStandup & Walker \\
Method &  &  &  &  &  \\
\midrule
Oracle M2TD3 & $5811 \pm 121$ & $3560 \pm 167$ & $1216 \pm 326$ & $118829 \pm 846$ & $4431 \pm 615$ \\
Oracle RARL & $4447 \pm 600$ & $-122 \pm 64$ & $308 \pm 62$ & $81498 \pm 12860$ & $1503 \pm 450$ \\
Oracle TC-M2TD3 & $7919 \pm 595$ & $\textbf{7495} \pm \textbf{268}$ & $\textbf{2983} \pm \textbf{252}$ & $117610 \pm 11682$ & $4952 \pm 415$ \\
Oracle TC-RARL & $\textbf{8069} \pm \textbf{151}$ & $7443 \pm 236$ & $2805 \pm 352$ & $110314 \pm 9354$ & $4613 \pm 257$ \\
Stacked TC-M2TD3 & $7003 \pm 812$ & $6365 \pm 335$ & $2714 \pm 198$ & $89556 \pm 11115$ & $\textbf{5256} \pm \textbf{675}$ \\
Stacked TC-RARL & $7328 \pm 251$ & $5301 \pm 86$ & $1616 \pm 137$ & $105137 \pm 7903$ & $4234 \pm 385$ \\
TC-M2TD3 & $7622 \pm 413$ & $6451 \pm 246$ & $2228 \pm 131$ & $\textbf{129501} \pm \textbf{10326}$ & $4844 \pm 417$ \\
TC-RARL & $7675 \pm 143$ & $4881 \pm 251$ & $1277 \pm 288$ & $105566 \pm 15551$ & $3906 \pm 381$ \\
DR & $7713 \pm 412$ & $5290 \pm 103$ & $1419 \pm 122$ & $108711 \pm 16696$ & $4307 \pm 309$ \\
M2TD3 & $5444 \pm 225$ & $3810 \pm 69$ & $970 \pm 323$ & $106311 \pm 9771$ & $4128 \pm 727$ \\
RARL & $4651 \pm 446$ & $218 \pm 138$ & $346 \pm 22$ & $101477 \pm 8947$ & $1894 \pm 515$ \\
TD3 & $3493 \pm 475$ & $1462 \pm 1246$ & $1722 \pm 366$ & $89934 \pm 10644$ & $2396 \pm 416$ \\
\bottomrule
\end{tabular}
\end{table}

\begin{table}
\caption{Avg. performance against a fixed logarithmic adversary over 10 seeds for each method}
\label{app:logarithmic_adversary_raw}
\begin{tabular}{llllll}
\toprule
Environment & Ant & HalfCheetah & Hopper & HumanoidStandup & Walker \\
Method &  &  &  &  &  \\
\midrule
Oracle M2TD3 & $5561 \pm 580$ & $3086 \pm 163$ & $957 \pm 165$ & $119214 \pm 2525$ & $4148 \pm 630$ \\
Oracle RARL & $4911 \pm 177$ & $-145 \pm 67$ & $293 \pm 49$ & $79522 \pm 13470$ & $1618 \pm 142$ \\
Oracle TC-M2TD3 & $7963 \pm 796$ & $\textbf{6625} \pm \textbf{204}$ & $\textbf{2577} \pm \textbf{171}$ & $116664 \pm 11798$ & $4818 \pm 451$ \\
Oracle TC-RARL & $\textbf{8061} \pm \textbf{821}$ & $6532 \pm 304$ & $2572 \pm 177$ & $108213 \pm 10684$ & $4375 \pm 382$ \\
\midrule
Stacked TC-M2TD3 & $7315 \pm 478$ & $5863 \pm 290$ & $2283 \pm 122$ & $87691 \pm 11133$ & $\textbf{4931} \pm \textbf{735}$ \\
Stacked TC-RARL & $7514 \pm 62$ & $4770 \pm 145$ & $1426 \pm 197$ & $104193 \pm 8030$ & $3939 \pm 369$ \\
TC-M2TD3 & $7910 \pm 90$ & $5657 \pm 280$ & $1702 \pm 226$ & $\textbf{128467} \pm \textbf{10762}$ & $4664 \pm 412$ \\
TC-RARL & $7686 \pm 208$ & $4475 \pm 238$ & $1082 \pm 298$ & $104835 \pm 16040$ & $3636 \pm 428$ \\
DR & $7883 \pm 67$ & $4721 \pm 146$ & $1166 \pm 332$ & $106171 \pm 16867$ & $3995 \pm 313$ \\
M2TD3 & $5371 \pm 279$ & $3565 \pm 105$ & $802 \pm 271$ & $104002 \pm 11606$ & $4206 \pm 712$ \\
RARL & $4620 \pm 763$ & $231 \pm 110$ & $340 \pm 44$ & $102004 \pm 9925$ & $1919 \pm 499$ \\
TD3 & $3678 \pm 623$ & $576 \pm 983$ & $1389 \pm 327$ & $88952 \pm 11367$ & $1956 \pm 360$ \\
\bottomrule
\end{tabular}
\end{table}

\begin{table}
\caption{Avg. performance against a fixed exponential adversary over 10 seeds for each method}
\label{tab:exponential_adversary_raw}
\begin{tabular}{llllll}
\toprule
Environment & Ant & HalfCheetah & Hopper & HumanoidStandup & Walker \\
Method &  &  &  &  &  \\
\midrule
Oracle M2TD3 & $5860 \pm 93$ & $3780 \pm 137$ & $1271 \pm 224$ & $119205 \pm 1217$ & $4767 \pm 815$ \\
Oracle RARL & $4585 \pm 674$ & $-88 \pm 79$ & $302 \pm 41$ & $82063 \pm 13274$ & $1611 \pm 342$ \\
Oracle TC-M2TD3 & $7491 \pm 624$ & $\textbf{8256} \pm \textbf{269}$ & $2894 \pm 244$ & $118476 \pm 11683$ & $5161 \pm 289$ \\
Oracle TC-RARL & $\textbf{7724} \pm \textbf{368}$ & $8000 \pm 250$ & $\textbf{3036} \pm \textbf{293}$ & $110092 \pm 10754$ & $4650 \pm 503$ \\
\midrule
Stacked TC-M2TD3 & $6903 \pm 365$ & $7041 \pm 302$ & $2721 \pm 214$ & $91077 \pm 11945$ & $\textbf{5310} \pm \textbf{882}$ \\
Stacked TC-RARL & $7061 \pm 222$ & $5741 \pm 249$ & $1825 \pm 145$ & $104793 \pm 6758$ & $4376 \pm 342$ \\
TC-M2TD3 & $7318 \pm 299$ & $7139 \pm 387$ & $2408 \pm 113$ & $\textbf{129966} \pm \textbf{10823}$ & $4910 \pm 663$ \\
TC-RARL & $7441 \pm 133$ & $5326 \pm 220$ & $1457 \pm 163$ & $106491 \pm 14605$ & $4017 \pm 439$ \\
DR & $7389 \pm 206$ & $5691 \pm 121$ & $1564 \pm 99$ & $106290 \pm 17502$ & $4224 \pm 660$ \\
M2TD3 & $5466 \pm 318$ & $3909 \pm 332$ & $1062 \pm 272$ & $107097 \pm 9551$ & $4274 \pm 582$ \\
RARL & $4556 \pm 729$ & $228 \pm 181$ & $351 \pm 24$ & $102096 \pm 8291$ & $2053 \pm 493$ \\
TD3 & $3771 \pm 228$ & $2302 \pm 343$ & $2201 \pm 219$ & $90496 \pm 9487$ & $2768 \pm 538$ \\
\bottomrule
\end{tabular}
\end{table}

\subsection{Agents training curve}
\label{app:agents_training_curves}
We conducted training for each agent over a duration of 5 million steps, closely monitoring the cumulative rewards obtained over a trajectory spanning 1,000 steps. To enhance the reliability of our results, we averaged the performance curves across 10 different seeds. The graphs in Figures \ref{app:training_curve_dr} to \ref{app:training_curve_stacked_tc-rarl} illustrate how different training methods, including Domain Randomization, M2TD3, RARL, Oracle RARL ,Oracle M2TD3, TC RARL, TC M2TD3, Stacked TC RARL and Stacked TC M2TD3, impact agent performance across various environments. 
\begin{figure}[h]
    \centering
    \begin{subfigure}[b]{0.45\textwidth}
        \centering
        \includegraphics[width=\textwidth]{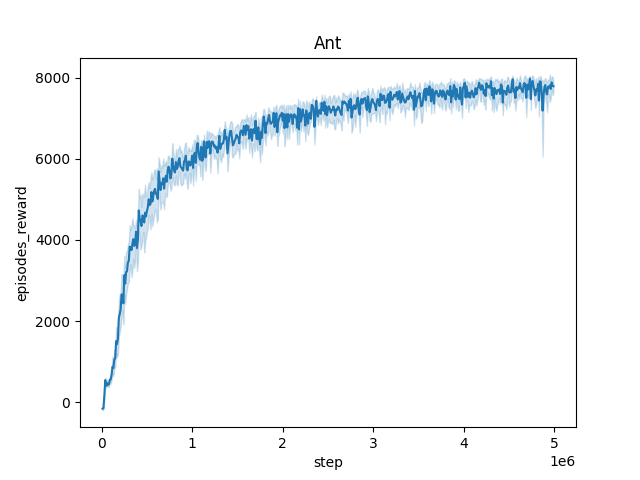}
        \caption{Training curve on Ant with Domain Randomization}
        \ 
    \end{subfigure}
    \hspace{6pt} 
    \begin{subfigure}[b]{0.45\textwidth}
        \centering
        \includegraphics[width=\textwidth]{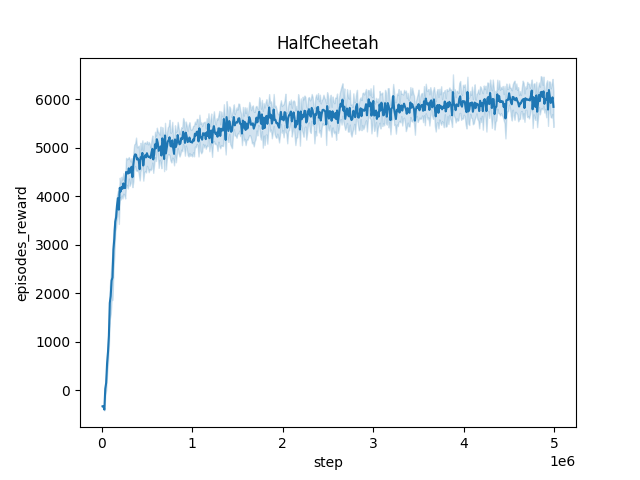}
        \caption{Training curve on HalfCheetah with Domain Randomization}
        \ 
    \end{subfigure}
    \vspace{-2pt} 

    \begin{subfigure}[b]{0.45\textwidth}
        \centering
        \includegraphics[width=\textwidth]{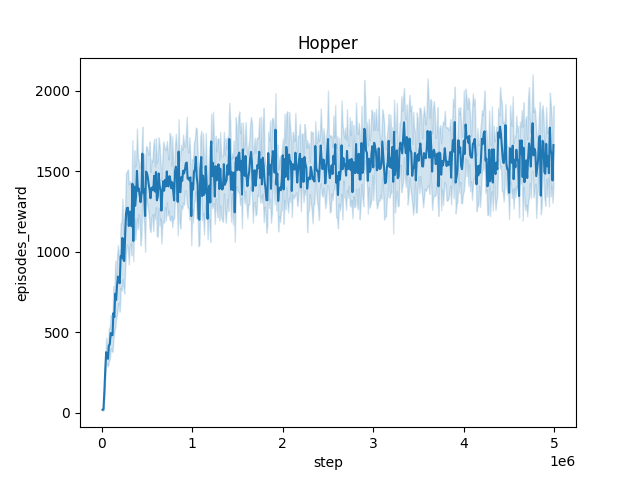}
        \caption{Training curve on Hopper with Domain Randomization}
        \ 
    \end{subfigure}
    \hspace{6pt} 
    \begin{subfigure}[b]{0.45\textwidth}
        \centering
        \includegraphics[width=\textwidth]{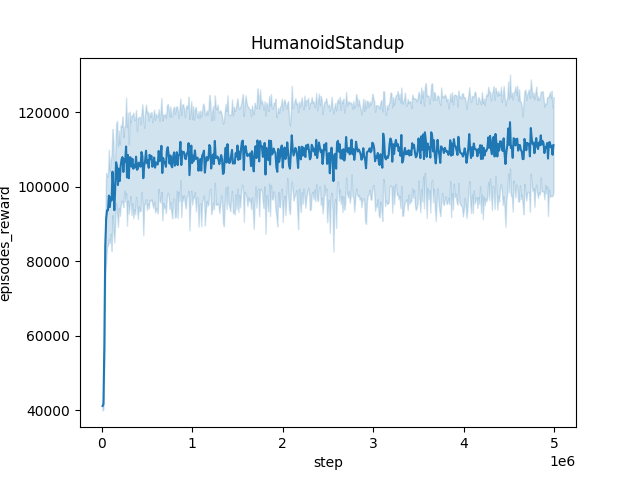}
        \caption{Training curve on HumanoidStandup with Domain Randomization}
        \ 
    \end{subfigure}
    \vspace{-2pt} 

    \begin{subfigure}[b]{0.45\textwidth}
        \centering
        \includegraphics[width=\textwidth]{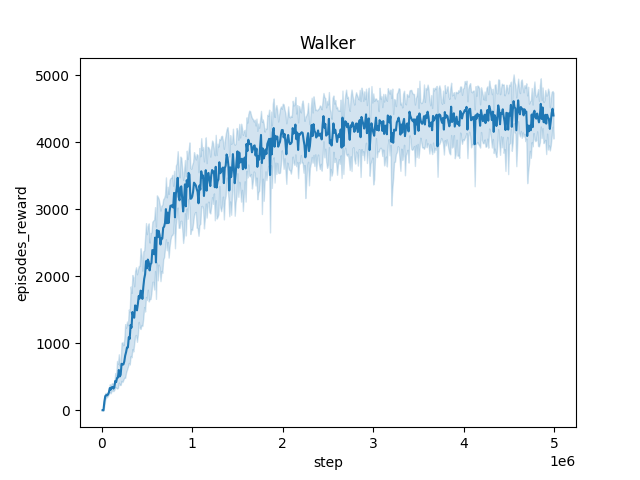}
        \caption{Training curve on Walker with Domain Randomization}
        \ 
    \end{subfigure}

    \caption{Averaged training curves for the Domain Randomization method over 10 seeds}
    \label{app:training_curve_dr}
\end{figure}

\begin{figure}[h]
    \centering
    \begin{subfigure}[b]{0.45\textwidth}
        \centering
        \includegraphics[width=\textwidth]{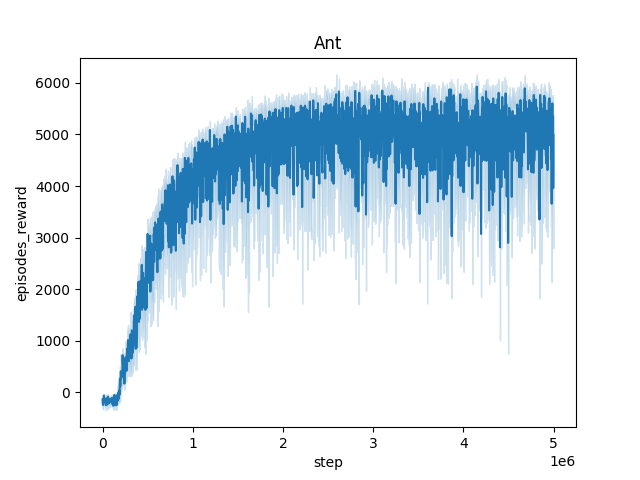}
        \caption{Training curve on Ant with M2TD3}
        \ 
    \end{subfigure}
    \hspace{6pt} 
    \begin{subfigure}[b]{0.45\textwidth}
        \centering
        \includegraphics[width=\textwidth]{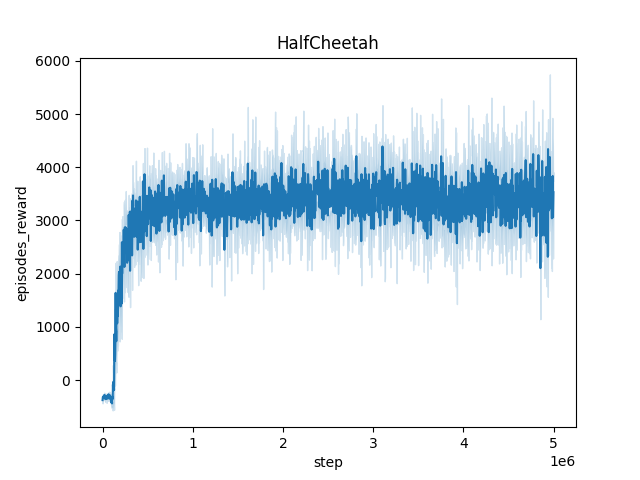}
        \caption{Training curve on HalfCheetah with M2TD3}
        \ 
    \end{subfigure}
    \vspace{-2pt} 

    \begin{subfigure}[b]{0.45\textwidth}
        \centering
        \includegraphics[width=\textwidth]{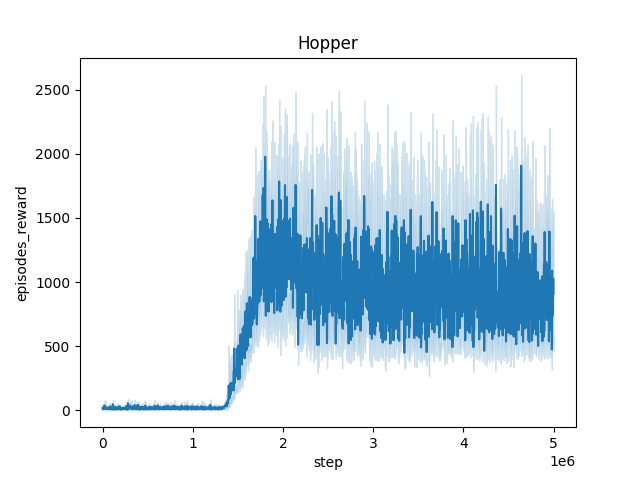}
        \caption{Training curve on Hopper with M2TD3}
        \ 
    \end{subfigure}
    \hspace{6pt} 
    \begin{subfigure}[b]{0.45\textwidth}
        \centering
        \includegraphics[width=\textwidth]{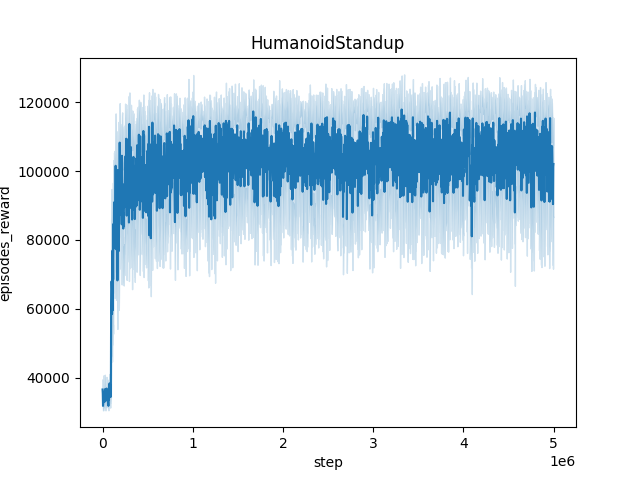}
        \caption{Training curve on HumanoidStandup with M2TD3}
        \ 
    \end{subfigure}
    \vspace{-2pt} 

    \begin{subfigure}[b]{0.45\textwidth}
        \centering
        \includegraphics[width=\textwidth]{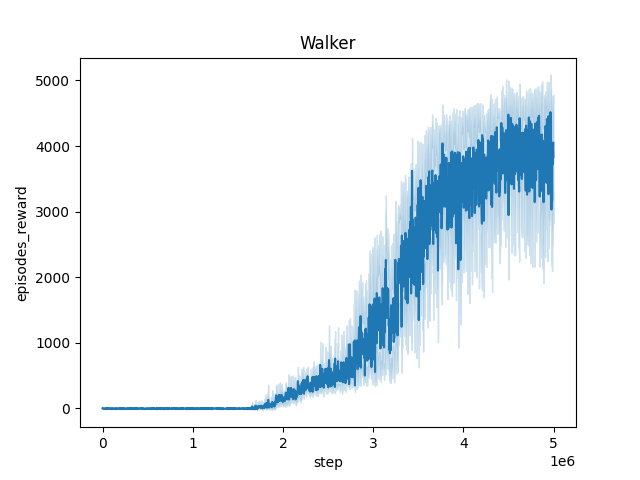}
        \caption{Training curve on Walker with M2TD3}
        \ 
    \end{subfigure}

    \caption{Averaged training curves for the M2TD3 method over 10 seeds}
    \label{app:training_curve_m2td3}
\end{figure}

\begin{figure}[h]
    \centering
    \begin{subfigure}[b]{0.45\textwidth}
        \centering
        \includegraphics[width=\textwidth]{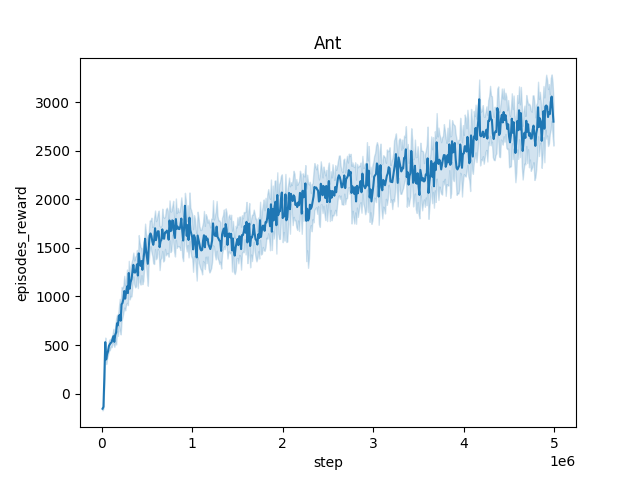}
        \caption{Training curve on Ant with RARL}
        \ 
    \end{subfigure}
    \hspace{6pt} 
    \begin{subfigure}[b]{0.45\textwidth}
        \centering
        \includegraphics[width=\textwidth]{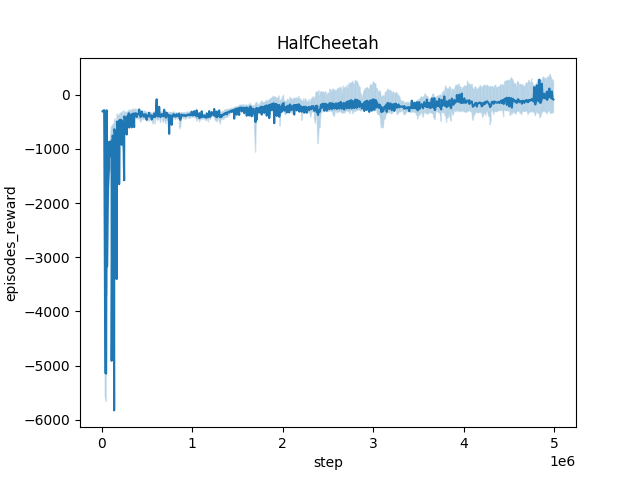}
        \caption{Training curve on HalfCheetah with RARL}
        \ 
    \end{subfigure}
    \vspace{-2pt} 

    \begin{subfigure}[b]{0.45\textwidth}
        \centering
        \includegraphics[width=\textwidth]{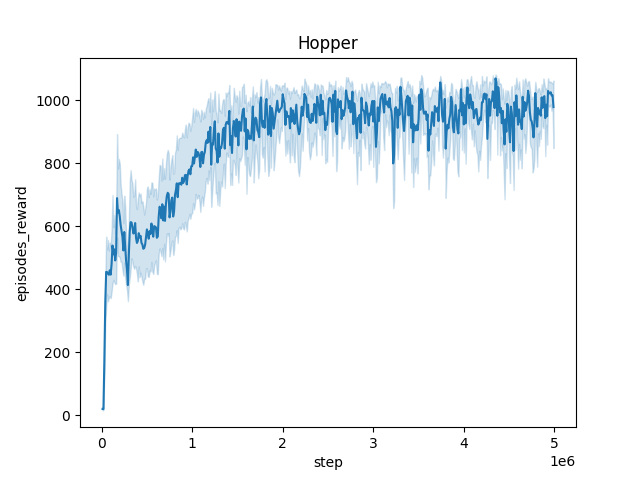}
        \caption{Training curve on Hopper with RARL}
        \ 
    \end{subfigure}
    \hspace{6pt} 
    \begin{subfigure}[b]{0.45\textwidth}
        \centering
        \includegraphics[width=\textwidth]{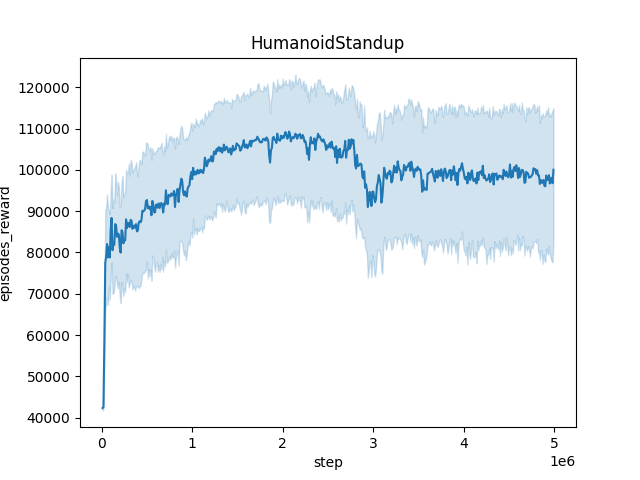}
        \caption{Training curve on HumanoidStandup with RARL}
        \ 
    \end{subfigure}
    \vspace{-2pt} 

    \begin{subfigure}[b]{0.45\textwidth}
        \centering
        \includegraphics[width=\textwidth]{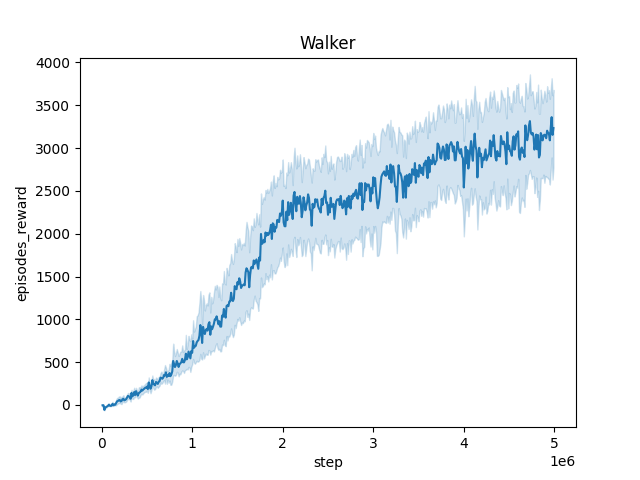}
        \caption{Training curve on Walker with RARL}
        \ 
    \end{subfigure}

    \caption{Averaged training curves for the RARL method over 10 seeds}
    \label{app:training_curve_rarl}
\end{figure}

\begin{figure}[h]
    \centering
    \begin{subfigure}[b]{0.45\textwidth}
        \centering
        \includegraphics[width=\textwidth]{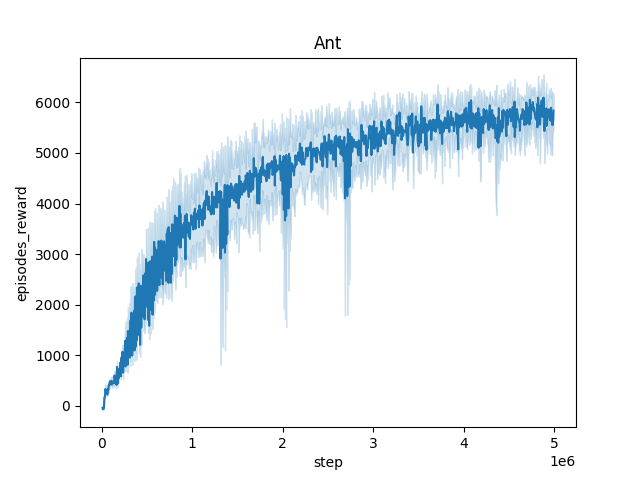}
        \caption{Training curve on Ant with TD3}
        \ 
    \end{subfigure}
    \hspace{6pt} 
    \begin{subfigure}[b]{0.45\textwidth}
        \centering
        \includegraphics[width=\textwidth]{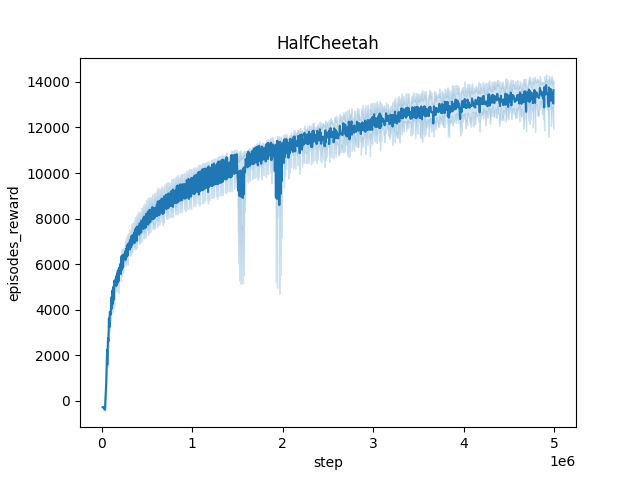}
        \caption{Training curve on HalfCheetah with TD3}
        \ 
    \end{subfigure}
    \vspace{-2pt} 

    \begin{subfigure}[b]{0.45\textwidth}
        \centering
        \includegraphics[width=\textwidth]{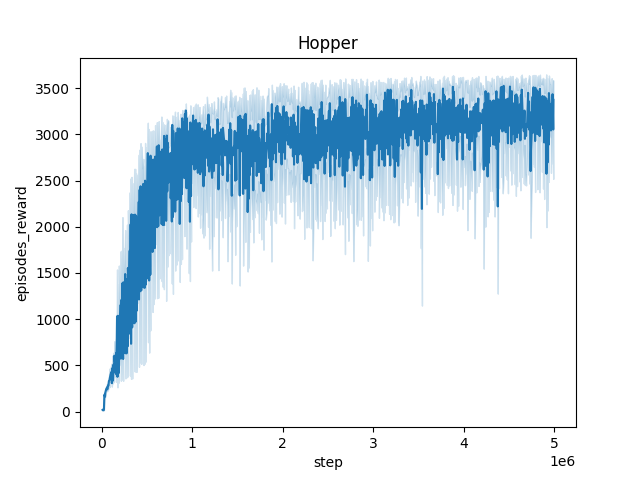}
        \caption{Training curve on Hopper with TD3}
        \ 
    \end{subfigure}
    \hspace{6pt} 
    \begin{subfigure}[b]{0.45\textwidth}
        \centering
        \includegraphics[width=\textwidth]{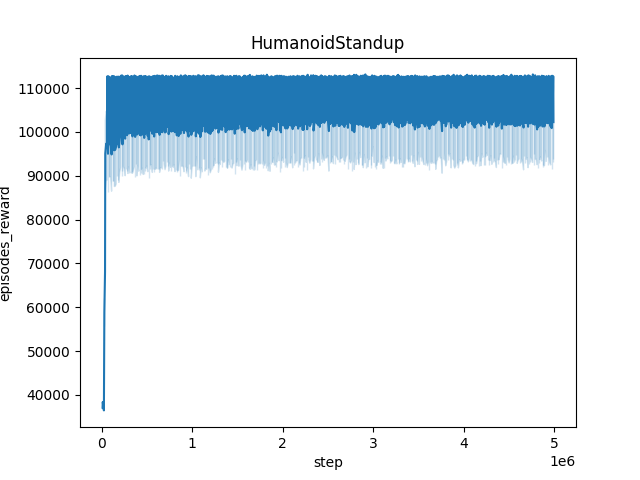}
        \caption{Training curve on HumanoidStandup with TD3}
        \ 
    \end{subfigure}
    \vspace{-2pt} 

    \begin{subfigure}[b]{0.45\textwidth}
        \centering
        \includegraphics[width=\textwidth]{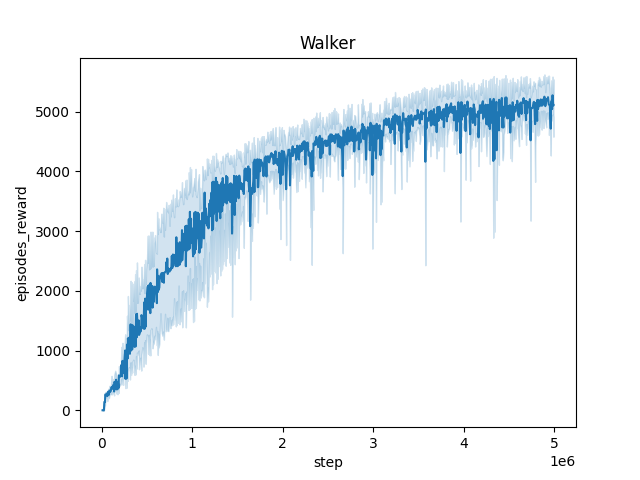}
        \caption{Training curve on Walker with TD3}
        \ 
    \end{subfigure}

    \caption{Averaged training curves for the TD3 method over 10 seeds}
    \label{app:training_curve_td3}
\end{figure}

\begin{figure}[h]
    \centering
    \begin{subfigure}[b]{0.45\textwidth}
        \centering
        \includegraphics[width=\textwidth]{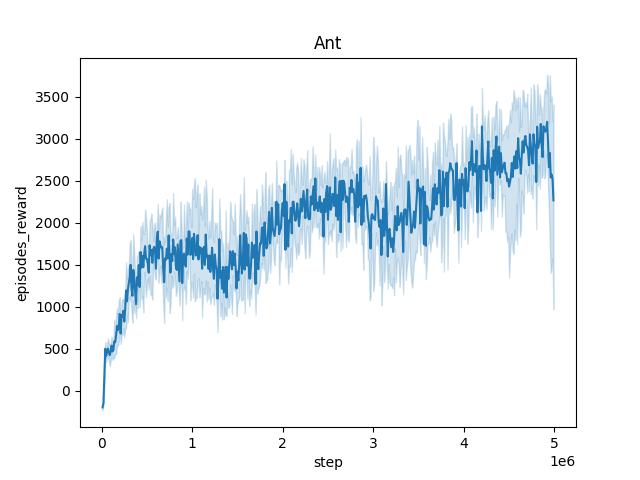}
        \caption{Training curve on Ant with Oracle RARL}
        \ 
    \end{subfigure}
    \hspace{6pt} 
    \begin{subfigure}[b]{0.45\textwidth}
        \centering
        \includegraphics[width=\textwidth]{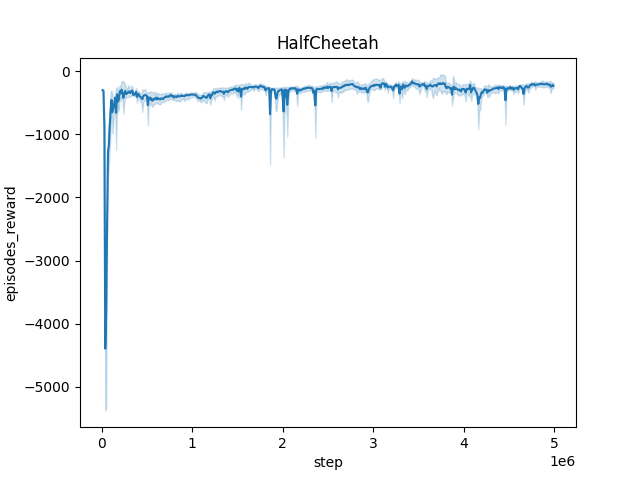}
        \caption{Training curve on HalfCheetah with Oracle RARL}
        \ 
    \end{subfigure}
    \vspace{-2pt} 

    \begin{subfigure}[b]{0.45\textwidth}
        \centering
        \includegraphics[width=\textwidth]{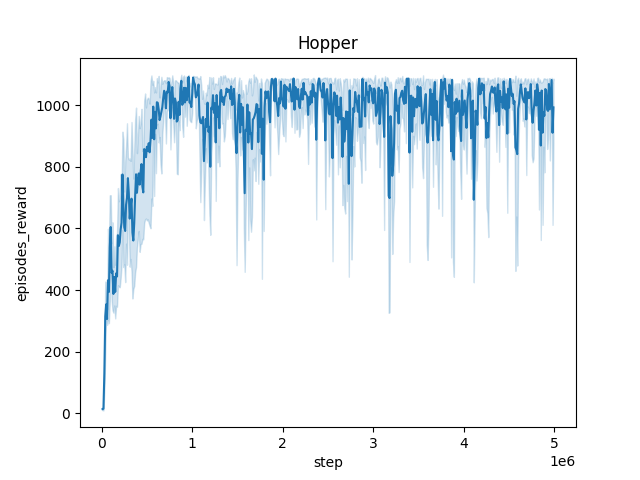}
        \caption{Training curve on Hopper with Oracle RARL}
        \ 
    \end{subfigure}
    \hspace{6pt} 
    \begin{subfigure}[b]{0.45\textwidth}
        \centering
        \includegraphics[width=\textwidth]{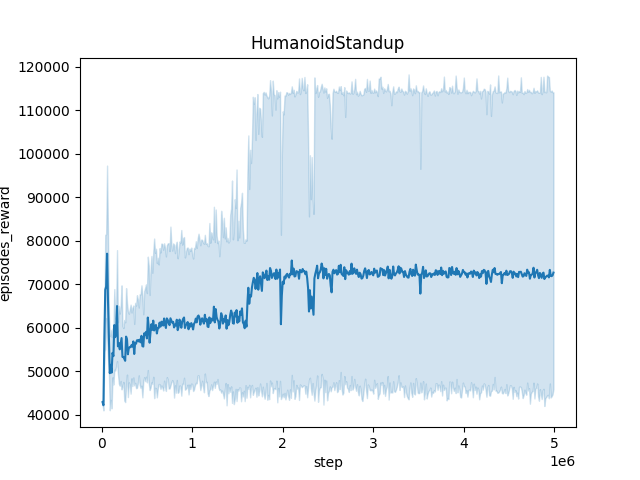}
        \caption{Training curve on HumanoidStandup with Oracle RARL}
        \ 
    \end{subfigure}
    \vspace{-2pt} 

    \begin{subfigure}[b]{0.45\textwidth}
        \centering
        \includegraphics[width=\textwidth]{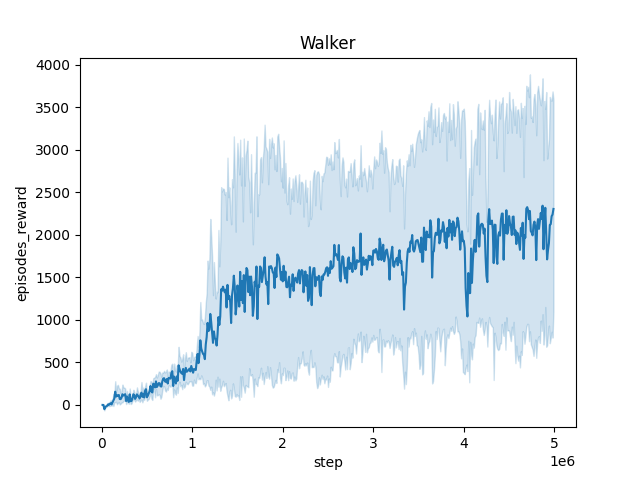}
        \caption{Training curve on Walker with Oracle RARL}
        \ 
    \end{subfigure}

    \caption{Averaged training curves for the Oracle RARL method over 10 seeds}
    \label{app:training_curve_oracle_rarl}
\end{figure}

\begin{figure}[h]
    \centering
    \begin{subfigure}[b]{0.45\textwidth}
        \centering
        \includegraphics[width=\textwidth]{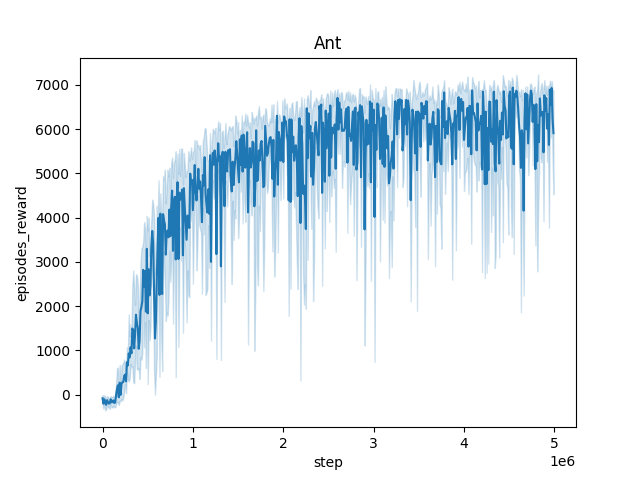}
        \caption{Training curve on Ant with Oracle M2TD3}
        \ 
    \end{subfigure}
    \hspace{6pt} 
    \begin{subfigure}[b]{0.45\textwidth}
        \centering
        \includegraphics[width=\textwidth]{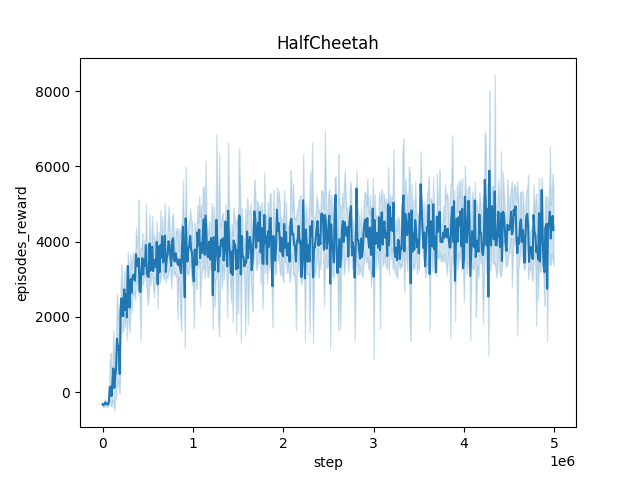}
        \caption{Training curve on HalfCheetah with Oracle M2TD3}
        \ 
    \end{subfigure}
    \vspace{-2pt} 

    \begin{subfigure}[b]{0.45\textwidth}
        \centering
        \includegraphics[width=\textwidth]{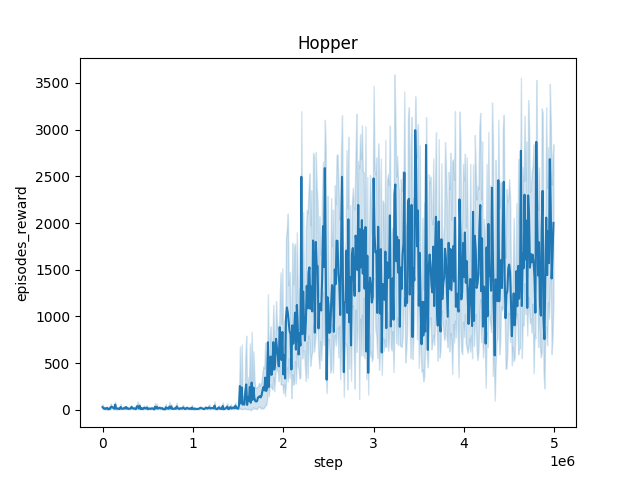}
        \caption{Training curve on Hopper with Oracle M2TD3}
        \ 
    \end{subfigure}
    \hspace{6pt} 
    \begin{subfigure}[b]{0.45\textwidth}
        \centering
        \includegraphics[width=\textwidth]{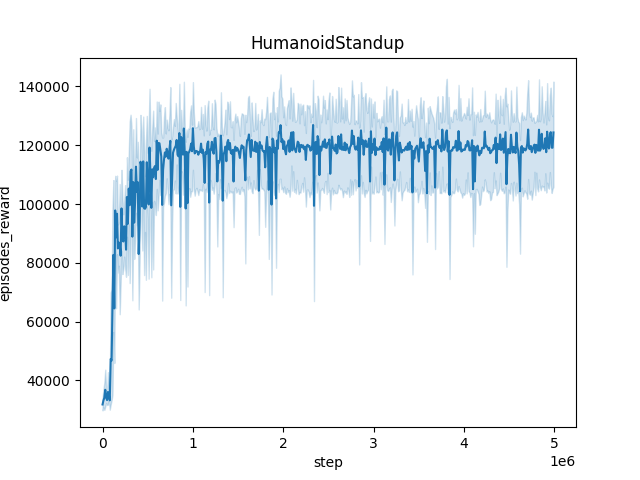}
        \caption{Training curve on HumanoidStandup with Oracle M2TD3}
        \ 
    \end{subfigure}
    \vspace{-2pt} 

    \begin{subfigure}[b]{0.45\textwidth}
        \centering
        \includegraphics[width=\textwidth]{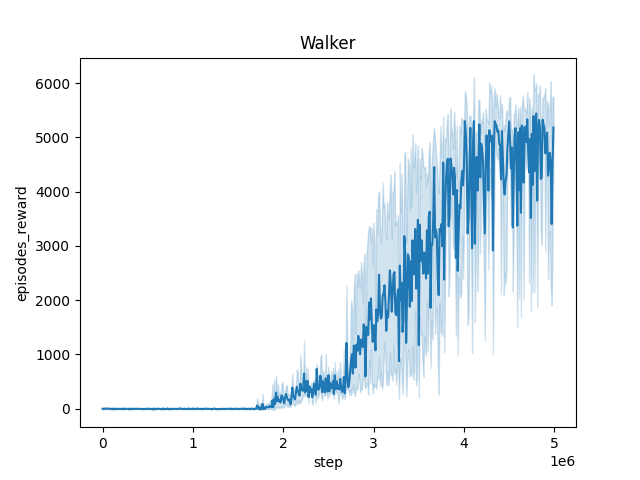}
        \caption{Training curve on Walker with Oracle M2TD3}
        \ 
    \end{subfigure}

    \caption{Averaged training curves for the Oracle M2TD3 method over 10 seeds}
\end{figure}

\begin{figure}[h]
    \centering
    \begin{subfigure}[b]{0.45\textwidth}
        \centering
        \includegraphics[width=\textwidth]{figures/agent_training_curves_algos/oracle_m2td3_Ant.png}
        \caption{Training curve on Ant with Oracle M2TD3}
        \ 
    \end{subfigure}
    \hspace{6pt} 
    \begin{subfigure}[b]{0.45\textwidth}
        \centering
        \includegraphics[width=\textwidth]{figures/agent_training_curves_algos/oracle_m2td3_HalfCheetah.png}
        \caption{Training curve on HalfCheetah with Oracle M2TD3}
        \ 
    \end{subfigure}
    \vspace{-2pt} 

    \begin{subfigure}[b]{0.45\textwidth}
        \centering
        \includegraphics[width=\textwidth]{figures/agent_training_curves_algos/oracle_m2td3_Hopper.png}
        \caption{Training curve on Hopper with Oracle M2TD3}
        \ 
    \end{subfigure}
    \hspace{6pt} 
    \begin{subfigure}[b]{0.45\textwidth}
        \centering
        \includegraphics[width=\textwidth]{figures/agent_training_curves_algos/oracle_m2td3_HumanoidStandup.png}
        \caption{Training curve on HumanoidStandup with Oracle M2TD3}
        \ 
    \end{subfigure}
    \vspace{-2pt} 

    \begin{subfigure}[b]{0.45\textwidth}
        \centering
        \includegraphics[width=\textwidth]{figures/agent_training_curves_algos/oracle_m2td3_Walker.png}
        \caption{Training curve on Walker with Oracle M2TD3}
        \ 
    \end{subfigure}

    \caption{Averaged training curves for the Oracle M2TD3 method over 10 seeds}
    \label{app:training_curve_oracle_m2td3}
\end{figure}

\begin{figure}[h]
    \centering
    \begin{subfigure}[b]{0.45\textwidth}
        \centering
        \includegraphics[width=\textwidth]{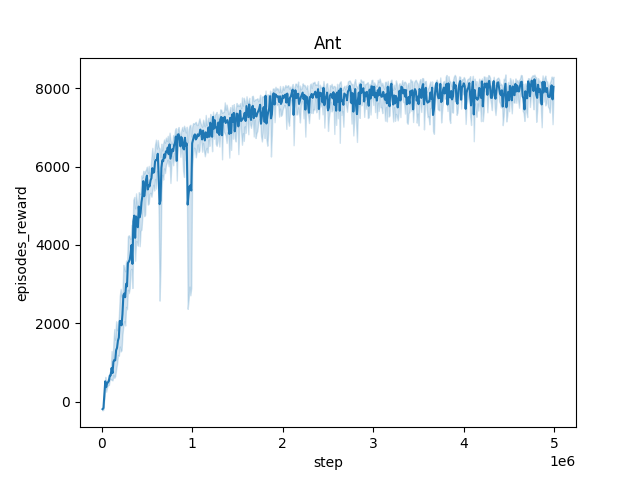}
        \caption{Training curve on Ant with Oracle TC-RARL}
        
    \end{subfigure}
    \hspace{6pt} 
    \begin{subfigure}[b]{0.45\textwidth}
        \centering
        \includegraphics[width=\textwidth]{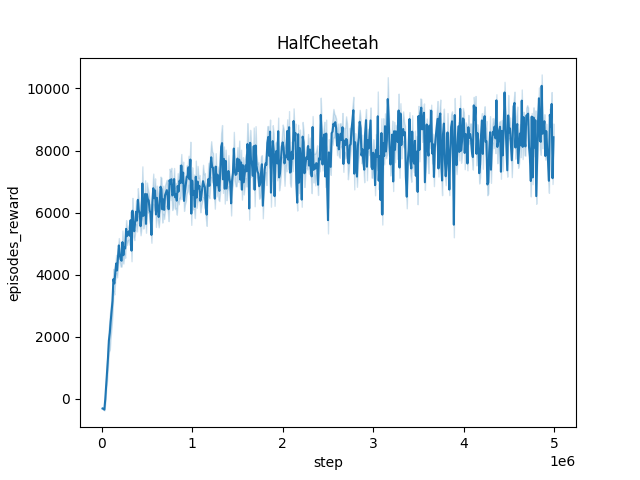}
        \caption{Training curve on HalfCheetah with Oracle TC-RARL}
       
    \end{subfigure}
    \vspace{-2pt} 

    \begin{subfigure}[b]{0.45\textwidth}
        \centering
        \includegraphics[width=\textwidth]{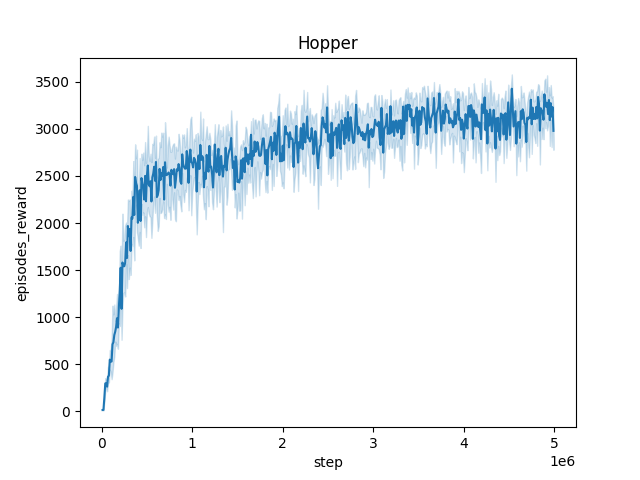}
        \caption{Training curve on Hopper with Oracle TC-RARL}
      
    \end{subfigure}
    \hspace{6pt} 
    \begin{subfigure}[b]{0.45\textwidth}
        \centering
        \includegraphics[width=\textwidth]{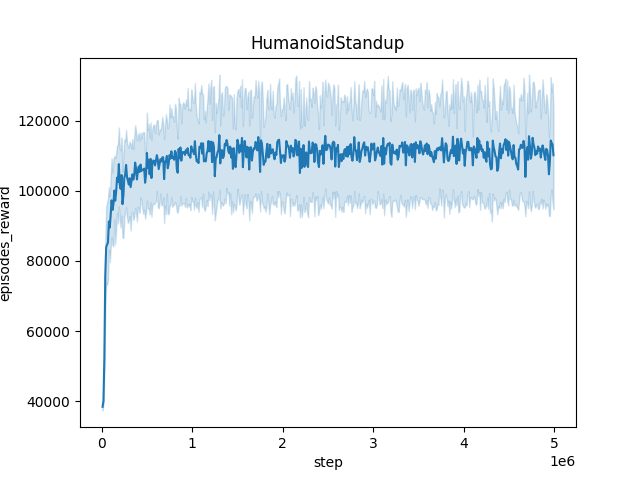}
        \caption{Training curve on HumanoidStandup with Oracle TC-RARL}
        
    \end{subfigure}
    \vspace{-2pt} 

    \begin{subfigure}[b]{0.45\textwidth}
        \centering
        \includegraphics[width=\textwidth]{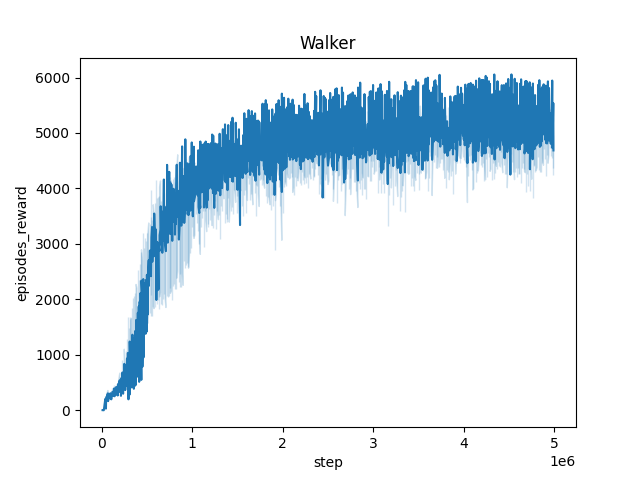}
        \caption{Training curve on Walker with Oracle TC-RARL}
        
    \end{subfigure}

    \caption{Averaged training curves for the Oracle TC-RARL method over 10 seeds}
    \label{app:training_curve_oracle_tc-rarl}
\end{figure}

\begin{figure}[h]
    \centering
    \begin{subfigure}[b]{0.45\textwidth}
        \centering
        \includegraphics[width=\textwidth]{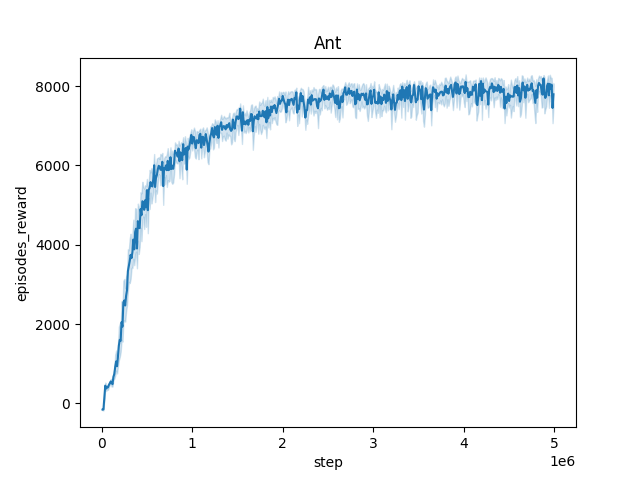}
        \caption{Training curve on Ant with Oracle TC-M2TD3}
        
    \end{subfigure}
    \hspace{6pt} 
    \begin{subfigure}[b]{0.45\textwidth}
        \centering
        \includegraphics[width=\textwidth]{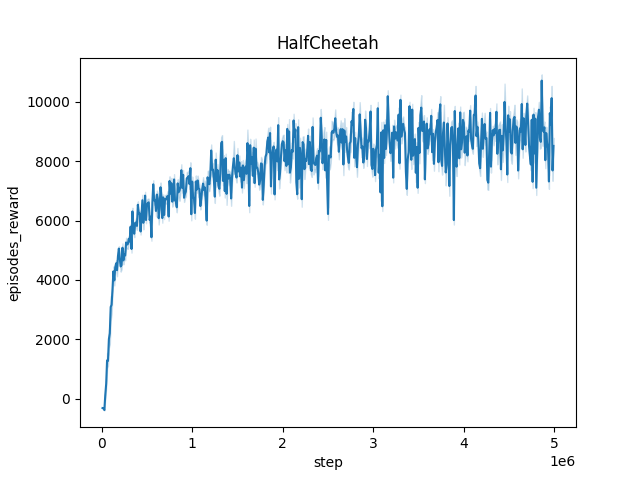}
        \caption{Training curve on HalfCheetah with Oracle TC-M2TD3}
        
    \end{subfigure}
    \vspace{-2pt} 

    \begin{subfigure}[b]{0.45\textwidth}
        \centering
        \includegraphics[width=\textwidth]{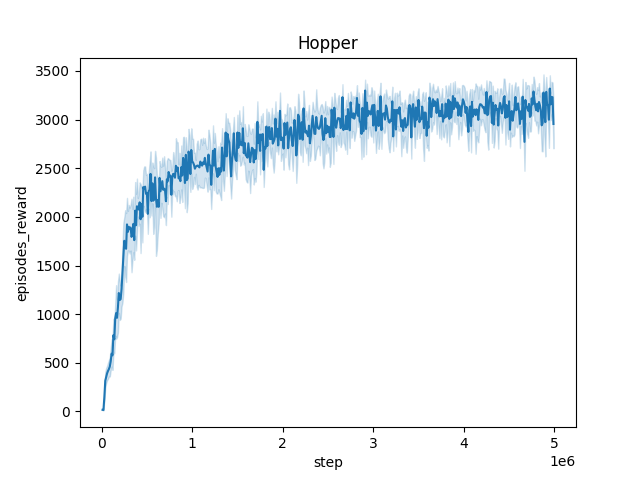}
        \caption{Training curve on Hopper with Oracle TC-M2TD3}
        
    \end{subfigure}
    \hspace{6pt} 
    \begin{subfigure}[b]{0.45\textwidth}
        \centering
        \includegraphics[width=\textwidth]{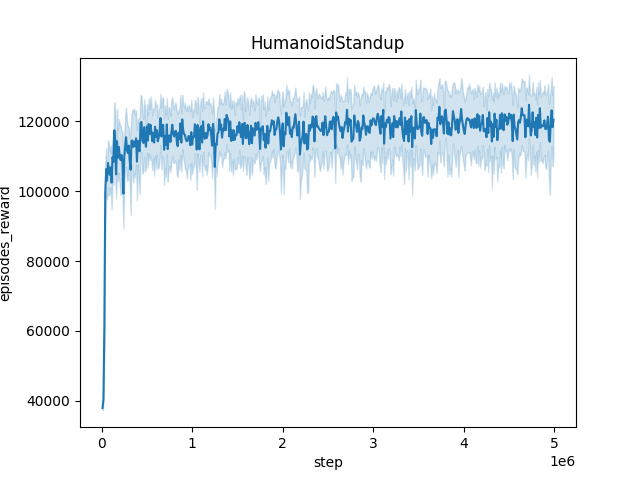}
        \caption{Training curve on HumanoidStandup with Oracle TC-M2TD3}
        
    \end{subfigure}
    \vspace{-2pt} 

    \begin{subfigure}[b]{0.45\textwidth}
        \centering
        \includegraphics[width=\textwidth]{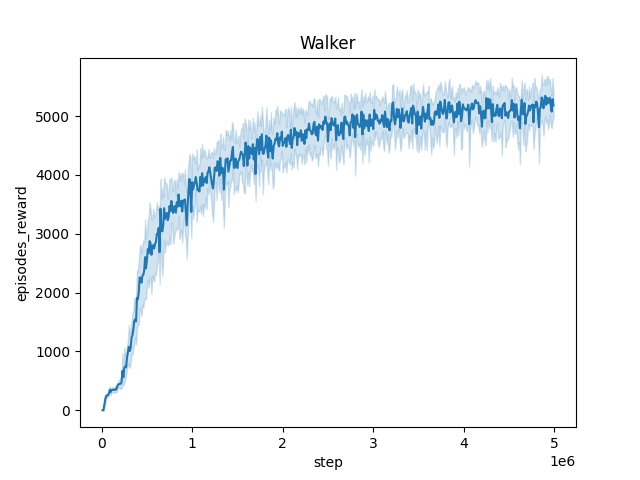}
        \caption{Training curve on Walker with Oracle TC-M2TD3}
        
    \end{subfigure}

    \caption{Averaged training curves for the Oracle TC-M2TD3 method over 10 seeds}
    \label{app:training_curve_oracle_tc-m2td3}
\end{figure}

\begin{figure}[h]
    \centering
    \begin{subfigure}[b]{0.45\textwidth}
        \centering
        \includegraphics[width=\textwidth]{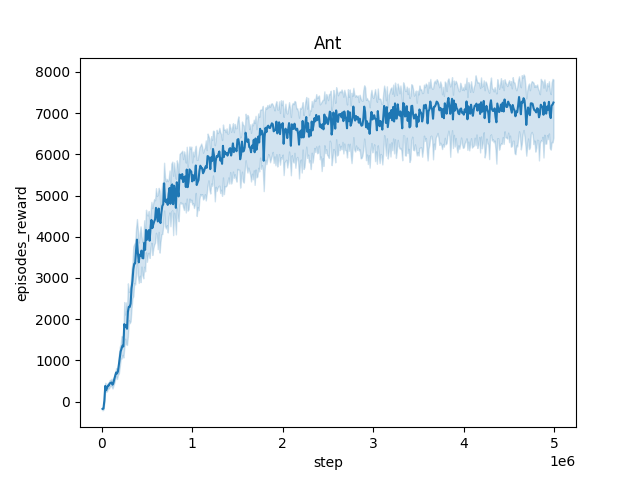}
        \caption{Training curve on Ant with Stacked TC-M2TD3}
       
    \end{subfigure}
    \hspace{6pt} 
    \begin{subfigure}[b]{0.45\textwidth}
        \centering
        \includegraphics[width=\textwidth]{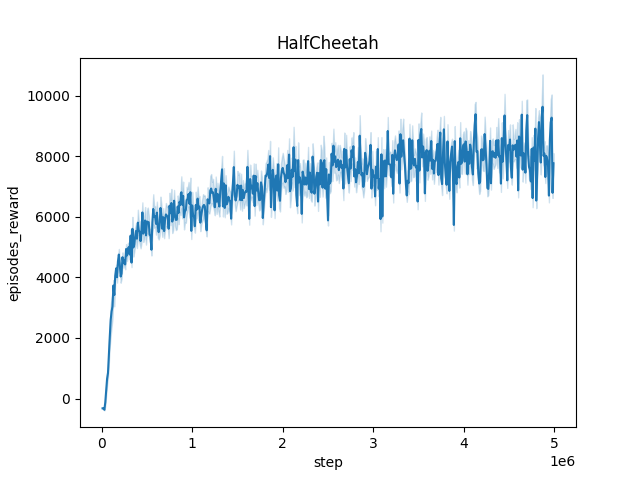}
        \caption{Training curve on HalfCheetah with Stacked TC-M2TD3}
        
    \end{subfigure}
    \vspace{-2pt} 

    \begin{subfigure}[b]{0.45\textwidth}
        \centering
        \includegraphics[width=\textwidth]{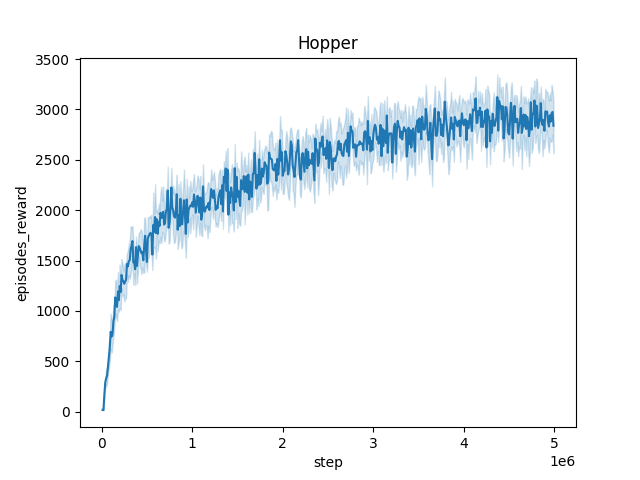}
        \caption{Training curve on Hopper with Stacked TC-M2TD3}
        
    \end{subfigure}
    \hspace{6pt} 
    \begin{subfigure}[b]{0.45\textwidth}
        \centering
        \includegraphics[width=\textwidth]{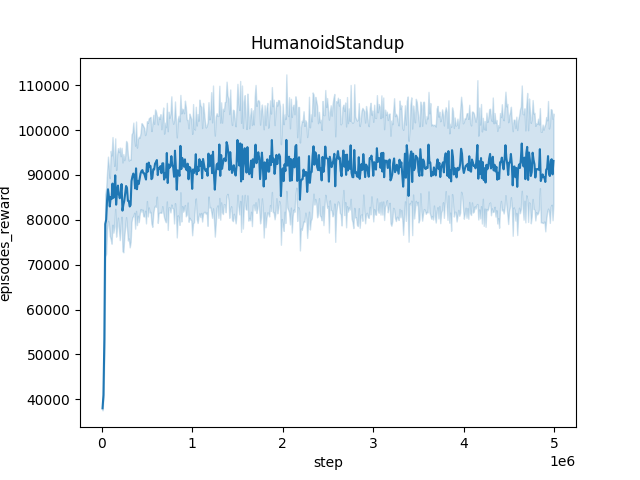}
        \caption{Training curve on HumanoidStandup with Stacked TC-M2TD3}
       
    \end{subfigure}
    \vspace{-2pt} 

    \begin{subfigure}[b]{0.45\textwidth}
        \centering
        \includegraphics[width=\textwidth]{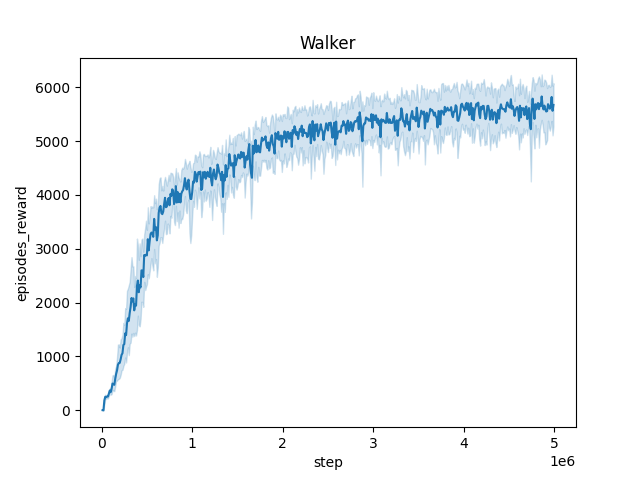}
        \caption{Training curve on Walker with Stacked TC-M2TD3}
        
    \end{subfigure}

    \caption{Averaged training curves for the Stacked TC-M2TD3 method over 10 seeds}
    \label{app:training_curve_stacked_tc-m2td3}
\end{figure}

\begin{figure}[h]
    \centering
    \begin{subfigure}[b]{0.45\textwidth}
        \centering
        \includegraphics[width=\textwidth]{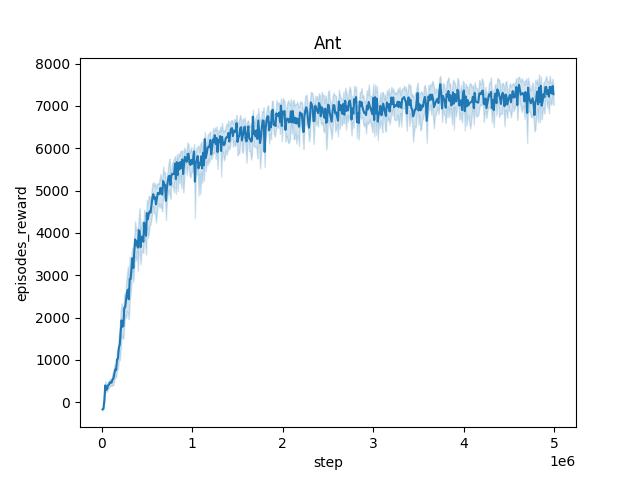}
        \caption{Training curve on Ant with Stacked TC-RARL}
        
    \end{subfigure}
    \hspace{6pt} 
    \begin{subfigure}[b]{0.45\textwidth}
        \centering
        \includegraphics[width=\textwidth]{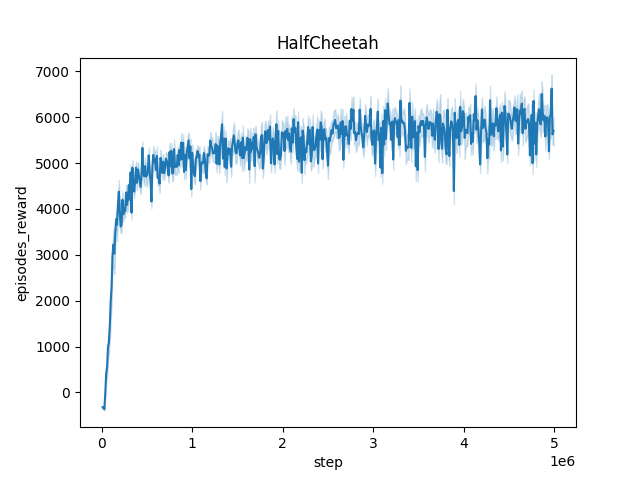}
        \caption{Training curve on HalfCheetah with Stacked TC-RARL}
        
    \end{subfigure}
    \vspace{-2pt} 

    \begin{subfigure}[b]{0.45\textwidth}
        \centering
        \includegraphics[width=\textwidth]{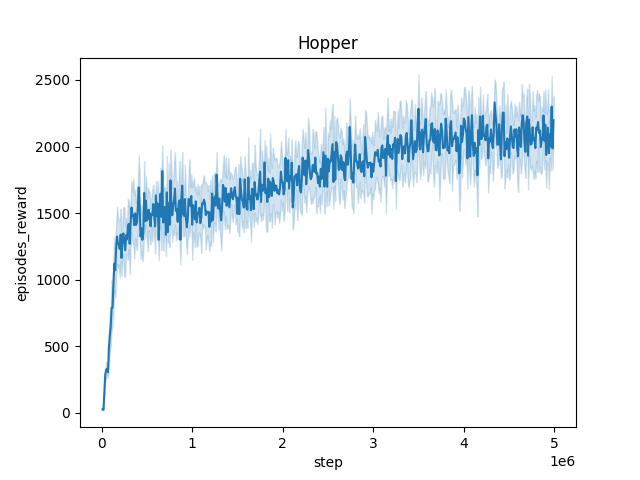}
        \caption{Training curve on Hopper with Stacked TC-RARL}
        
    \end{subfigure}
    \hspace{6pt} 
    \begin{subfigure}[b]{0.45\textwidth}
        \centering
        \includegraphics[width=\textwidth]{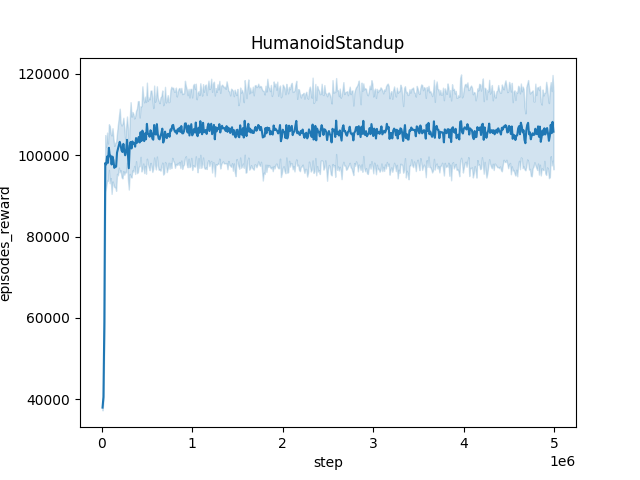}
        \caption{Training curve on HumanoidStandup with Stacked TC-RARL}
        
    \end{subfigure}
    \vspace{-2pt} 

    \begin{subfigure}[b]{0.45\textwidth}
        \centering
        \includegraphics[width=\textwidth]{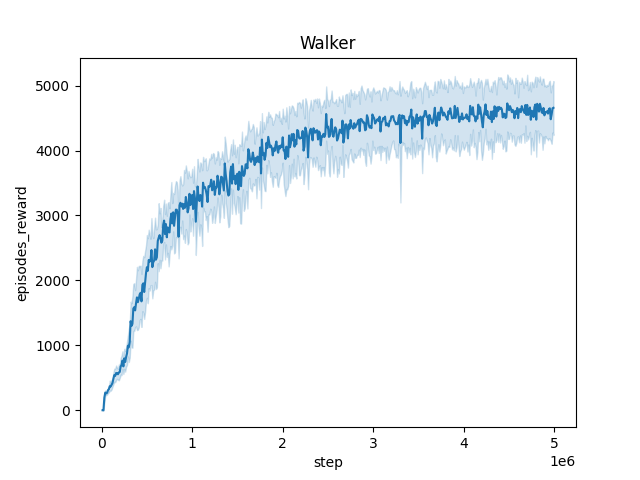}
        \caption{Training curve on Walker with Stacked TC-RARL}
        
    \end{subfigure}

    \caption{Averaged training curves for the Stacked TC-RARL method over 10 seeds}
    \label{app:training_curve_stacked_tc-rarl}
\end{figure}

\section{Computer ressources}
\label{app:computer_ressources}

All experiments were run on a desktop machine (Intel i9, 10th generation processor, 64GB RAM) with a single NVIDIA RTX 4090 GPU. 
Averages and standard deviations were computed from 10 independent repetitions of each experiment. 

\begin{table}[]
\centering
\caption{Average wall-clock time for each algorithm}
\label{tab:wallclock}
\begin{tabular}{l|l}
     & Wall-clock time   \\
\hline
TD3        & 14h  \\
M2TD3      & 16h   \\
RARL       & 18h    \\
TC         & 16h \\
Stacked TC & 16h \\
Oracle TC & 16h \\
  
\end{tabular}
\end{table}

\section{Broader impact}
\label{app:impact}
This paper aims to advance robust reinforcement learning. It addresses general mathematical and computational challenges. These challenges may have societal and technological impacts, but we do not find it necessary to highlight them here.

\subsection{Limitations}

While our proposed Time-Constrained Robust Markov Decision Process (TC-RMDP) framework significantly advances robust reinforcement learning by addressing multifactorial, correlated, and time-dependent disturbances, several limitations must be acknowledged.
The TC-RMDP framework assumes that the parameter vector $\psi$ that governs environmental disturbances is known during training. 
In real-world applications, obtaining such detailed information may not always be feasible. This reliance on precise parameter knowledge limits the practical deployment of our algorithms in environments where $\psi$ cannot be accurately measured or inferred.
Our approach assumes that the environment's dynamics can be accurately parameterized and that these parameters remain within a predefined uncertainty set $\Psi$.
This assumption might not hold in more complex or highly dynamic environments where disturbances are not easily parameterized or when the uncertainty set $\Psi$ cannot comprehensively capture all possible variations.
Consequently, the robustness of the learned policies might degrade when facing disturbances outside the considered parameter space.
Addressing these limitations in future work.

\newpage

\end{document}